\documentclass[11pt]{article}
\pdfoutput=1

\usepackage{mathrsfs}
\usepackage{amsmath,amssymb}
\usepackage{bm}
\usepackage{natbib}
\usepackage[usenames]{color}
\usepackage{amsthm}
\usepackage{bbm}
\usepackage{dsfont}
\usepackage{extarrows}
\usepackage{multirow} 
\usepackage{enumitem}
\usepackage{dsfont}

\usepackage{subfigure}

\usepackage[colorlinks,
linkcolor=red,
anchorcolor=blue,
citecolor=blue
]{hyperref}

\usepackage{mylatexstyle}

\usepackage{setspace}
\usepackage[left=1in, right=1in, top=1in, bottom=1in]{geometry}

\usepackage{xcolor}

\usepackage{etoolbox}
\AtBeginEnvironment{quote}{\itshape}

\allowdisplaybreaks

\ifdefined\final
\usepackage[disable]{todonotes}
\else
\usepackage[textsize=tiny]{todonotes}
\fi
\setlength{\marginparwidth}{0.8in}


\title{\huge The Implicit Bias of Adam on Separable Data}
\author
{
Chenyang Zhang\thanks{\scriptsize Department of Statistics \& Actuarial Science, The University of Hong Kong; {\tt chyzhang@connect.hku.hk}}
\qquad 
Difan Zou\thanks{\scriptsize Department of Computer   Science   \&   Institute   of   Data   Science, The University of Hong Kong;
 {\tt dzou@cs.hku.hk}}
\qquad
Yuan Cao\thanks{\scriptsize Department of Statistics \& Actuarial Science,
	The University of Hong Kong;  {\tt yuancao@hku.hk}}
}


\begin{document}

\maketitle
\begin{abstract}
Adam has become one of the most favored optimizers in deep learning problems. Despite its success in practice, numerous mysteries persist regarding its theoretical understanding. In this paper, we study the implicit bias of Adam in linear logistic regression. Specifically, we show that when the training data are linearly separable, Adam converges towards a linear classifier that achieves the maximum $\ell_\infty$-margin. Notably, for a general class of diminishing learning rates, this convergence occurs within polynomial time. Our result shed light on the difference between Adam and (stochastic) gradient descent from a theoretical perspective. 

\end{abstract}



\section{Introduction}
Adam \citep{DBLP:journals/corr/KingmaB14} is one of the most widely used optimization algorithms in deep learning. 
By entry-wisely adjusting the learning rate based on the magnitude of historical gradients, Adam has proven to be highly efficient in solving optimization tasks in machine learning. However, despite the remarkable empirical success of Adam, current theoretical understandings of Adam cannot fully explain its fundamental difference compared with other optimization algorithms.

It has been recently pointed out that the \textit{implicit bias}~\citep{neyshabur2014search,DBLP:journals/jmlr/SoudryHNGS18, pmlr-v99-ji19a} of an optimization algorithm is essential in understanding the performance of the algorithm in machine learning. In over-parameterized learning tasks where the training objective function may have infinitely many solutions, the implicit bias of an optimization algorithm characterizes how the algorithm prioritizes converging towards a specific optimum with particular structures and properties. Several recent works studied the implicit bias of Adam and other adaptive gradient methods. Specifically, ~\citet{DBLP:conf/nips/QianQ19} studied the implicit bias of AdaGrad, and showed that AdaGrad converges to a direction that can be characterized as the solution of a quadratic optimization problem related to the limit of preconditioners. However, their results cannot be extended to Adam. \citep{NEURIPS2022_ab3f6bbe} showed that gradient descent with momentum (GDM) and its adaptive variants have the same implicit bias with gradient descent. This result is extended to the setting of training homogeneous models in \citet{wang2021implicit}. However, the results in \citet{NEURIPS2022_ab3f6bbe,wang2021implicit} reply on a nonnegligible stability constant -- when the gradient entries are minimized below the stability constant (which is by default $10^{-8}$ in Adam), adaptive gradient methods will essentially behave like gradient descent. Therefore, it remains an open question how Adam will behave under the more practical regime where stability constant is negligible. A more recent work~\citep{xie2024implicit} studied the implicit bias of AdamW without considering such a stability constant. They showed that, if the iterates of AdamW converges, then the limiting point must be a KKT point of an optimization problem with $\ell_\infty$ constraints. However, their result does not cover Adam, where the regularization parameter is zero.
In this paper, we investigate the implicit bias of Adam. Specifically, let $\{(\xb_i,y_i)\}_{i=1}^n \subset \RR^d \times \{\pm1\}$ be a training data set of a binary classification problem. We consider using Adam to train a linear model to minimize the empirical logistic loss (or exponential loss). Then our main results can be summarized as the following informal theorem:
\begin{theorem}[Simplified version of Theorem~\ref{thm:convergence_rate}]\label{thm:informal}
    Let $\{\eta_t\}_{t=0}^\infty$, $\{\wb_t\}_{t=0}^\infty$ be the sequence of learning rates and iterates of Adam respectively. Suppose that the data set $ \{(\xb_i,y_i)\}_{i=1}^n $ is linearly separable, and that $\lim_{t\rightarrow} \eta_t = 0$, $\sum_{t=0}^\infty \eta_t = \infty$. Then under certain conditions, it holds that
    \begin{align*}
        &  \bigg|\min_{i\in[n]}\frac{\la\wb_{t}, y_i\cdot\xb_i\ra}{\|\wb_t\|_\infty}-\gamma \bigg|\leq O\Bigg(\frac{\sum_{\tau=0}^{t-1}\eta_\tau^{\frac{3}{2}} + \sum_{\tau=0}^{t-1}\eta_\tau e^{-\frac{\gamma}{4}\sum_{\tau'=0}^{\tau-1}\eta_{\tau'}}}{\sum_{\tau=0}^{t-1}\eta_\tau}\Bigg), && \text{(for logistic loss)}\\
        &  \bigg|\min_{i\in[n]}\frac{\la\wb_{t}, y_i\cdot\xb_i\ra}{\|\wb_t\|_\infty}- \gamma\bigg|\leq O\Bigg(\frac{\sum_{\tau=0}^{t-1}\eta_\tau^{\frac{3}{2}}}{\sum_{\tau=0}^{t-1}\eta_\tau}\Bigg),&& \text{(for exponential loss)} 
    \end{align*}
    where $\gamma := \max_{\|\wb\|_{\infty}\leq 1}\min_{i\in[n]}\la\wb, y_i\cdot\xb_i\ra$ is the maximum $\ell_\infty$-margin on the training data set.
\end{theorem}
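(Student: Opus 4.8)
The plan is to exploit the structural fact that, once the training loss is small and the step size has decayed, Adam's normalized update $\bm m_t/\sqrt{\bm v_t}$ (first over second moment estimates, with bias correction absorbed) is, coordinate by coordinate, close to $\mathrm{sign}(\nabla L(\wb_t))$, where $L(\wb)=\tfrac{1}{n}\sum_i\ell(y_i\la\wb,\xb_i\ra)$; consequently Adam behaves like an inexact steepest descent with respect to $\|\cdot\|_\infty$, and steepest descent w.r.t. $\|\cdot\|_\infty$ on separable data is biased toward the max-$\ell_\infty$-margin direction $\wb^\star$ (the maximizer defining $\gamma$, which one may take to have entries $\pm 1$). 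So I would first reduce to ``inexact sign descent'' and then track the normalized margin through a numerator/denominator comparison against $\big(\sum_{\tau<t}\eta_\tau\big)\wb^\star$. For the reduction, write out $\bm m_t=(1-\beta_1)\sum_{s\le t}\beta_1^{t-s}\nabla L(\wb_s)$ and $\bm v_t=(1-\beta_2)\sum_{s\le t}\beta_2^{t-s}\nabla L(\wb_s)^2$ (entrywise square); a Cauchy--Schwarz argument (using $\beta_1\le\beta_2$) gives $\|\bm m_t/\sqrt{\bm v_t}\|_\infty=O(1)$ unconditionally, and on any coordinate $j$ where $|\partial_j L(\wb_t)|$ is not anomalously small the sign of $\partial_j L(\wb_s)$ is constant over the effectively bounded-length averaging window while the magnitudes there vary only by a $1+O(\eta_t)$ factor, since $\wb_{s+1}-\wb_s=O(\eta_s)$ and the local smoothness of $L$ near $\wb_s$ is $O(L(\wb_s))$. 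Hence $(\bm m_t/\sqrt{\bm v_t})_j=\mathrm{sign}(\partial_j L(\wb_t))+O(\eta_t)$, so $\wb_{t+1}-\wb_t=-\eta_t\big(\mathrm{sign}(\nabla L(\wb_t))+\bm e_t\big)$ where $\bm e_t$ has an $O(\eta_t)$ part plus a part supported on the coordinates with anomalously small gradient.

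Next I would obtain a lower bound on the margin numerator from loss decay. The descent lemma with local smoothness $O(L(\wb_t))$ gives $L(\wb_{t+1})\le L(\wb_t)-\eta_t\la\nabla L(\wb_t),\mathrm{sign}(\nabla L(\wb_t))+\bm e_t\ra+O(\eta_t^2 L(\wb_t))$. Writing $\ell'_{i,t}:=\ell'(y_i\la\wb_t,\xb_i\ra)<0$, we have $-\nabla L(\wb_t)=\tfrac{1}{n}\sum_i(-\ell'_{i,t})y_i\xb_i$ with positive coefficients, so $\la-\nabla L(\wb_t),\mathrm{sign}(-\nabla L(\wb_t))\ra=\|\nabla L(\wb_t)\|_1\ge\la-\nabla L(\wb_t),\wb^\star\ra\ge\gamma\,L(\wb_t)$ for the exponential loss, using $\la\wb^\star,y_i\xb_i\ra\ge\gamma$; for logistic loss $-\ell'_{i,t}=e^{-y_i\la\wb_t,\xb_i\ra}/(1+e^{-y_i\la\wb_t,\xb_i\ra})$ differs multiplicatively from the exponential weights by a factor $1-O(e^{-\gamma\|\wb_t\|_\infty/4})$, which yields $\|\nabla L(\wb_t)\|_1\ge(1-O(e^{-\gamma\|\wb_t\|_\infty/4}))\gamma L(\wb_t)$ and, since $\|\wb_t\|_\infty\gtrsim\tfrac{1}{4}\sum_{\tau<t}\eta_\tau$ in the good regime, the extra $\sum_\tau\eta_\tau e^{-\frac{\gamma}{4}\sum_{\tau'<\tau}\eta_{\tau'}}$ term. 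Controlling $|\la\nabla L(\wb_t),\bm e_t\ra|$ as a small fraction of $\|\nabla L(\wb_t)\|_1$ and iterating yields $\log(1/L(\wb_t))\ge\gamma\sum_{\tau<t}\eta_\tau-R_t$ with $R_t=O\big(\sum_{\tau<t}\eta_\tau^{3/2}+\sum_{\tau<t}\eta_\tau e^{-\frac{\gamma}{4}\sum_{\tau'<\tau}\eta_{\tau'}}+1\big)$, hence $\min_i\la\wb_t,y_i\xb_i\ra\ge\log(1/L(\wb_t))-\log n\ge\gamma\sum_{\tau<t}\eta_\tau-R_t-\log n$.

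For the denominator I would show $\mathrm{sign}(\nabla L(\wb_t))=-\wb^\star$ for all large $t$: as $L(\wb_t)\to0$ the softmax weights $-\ell'_{i,t}/\sum_j(-\ell'_{j,t})$ converge to the dual-optimal coefficients $\lambda_i$ of the max-$\ell_\infty$-margin linear program, and under the non-degeneracy hypotheses subsumed in ``certain conditions'' (the optimal direction is a vertex, so $\wb^\star$ has entries $\pm1$, and $\sum_i\lambda_i y_i\xb_i$ has all entries nonzero) this forces $\mathrm{sign}(\nabla L(\wb_t))=-\mathrm{sign}(\sum_i\lambda_i y_i\xb_i)=-\wb^\star$ once the direction of $\nabla L(\wb_t)$ is close enough. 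Summing updates then gives $\wb_t=\wb_0+\big(\sum_{\tau<t}\eta_\tau\big)\wb^\star-\sum_{\tau<t}\eta_\tau\bm e_\tau-(\text{finitely many transient terms})$, so $\|\wb_t\|_\infty\le\sum_{\tau<t}\eta_\tau+O(R_t)$ because $\|\wb^\star\|_\infty=1$. Dividing the numerator bound by this, and using the trivial $\min_i\la\wb_t,y_i\xb_i\ra/\|\wb_t\|_\infty\le\gamma$, yields the claimed two-sided estimate. Two auxiliary pieces close the loop: a warm-up argument using $\sum_\tau\eta_\tau=\infty$ to push $L(\wb_t)$ below a fixed threshold after finitely many steps, and an induction using $\eta_t\to0$ to certify that the ``good regime'' (loss small, $\mathrm{sign}(\nabla L(\wb_t))=-\wb^\star$, $\bm e_t$ controlled) is self-sustaining.

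I expect the reduction to sign descent to be the main obstacle. On coordinates where $|\partial_j L(\wb_t)|$ is comparable to the variation of the gradient over the momentum window, $\mathrm{sign}(\partial_j L(\wb_s))$ can flip inside the window, so $(\bm m_t/\sqrt{\bm v_t})_j$ is only pinned to a bounded interval rather than to $\pm1$; one must show (this is precisely where non-degeneracy enters) that such coordinates occur only during a finite transient and that their cumulative effect on both the loss-decrease inner product and on $\|\wb_t\|_\infty$ is absorbed into the stated $O(\sum_\tau\eta_\tau^{3/2})$ error — carefully balancing the momentum-lag, small-gradient, and second-order-Taylor contributions is the delicate accounting. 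The convergence of the softmax weights to the dual-optimal solution, and the $\sum_\tau\eta_\tau=\infty$ / $\eta_t\to0$ bookkeeping, are more classical in flavor by comparison.
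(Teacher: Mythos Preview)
Your overall architecture---reduce Adam to an inexact $\ell_\infty$-steepest descent, use the duality inequality $\|\nabla\cR(\wb_t)\|_1\ge\gamma\,\cG(\wb_t)$ to drive loss decay, and read off a margin lower bound from $\log(1/\cR(\wb_t))$---matches the paper's. The genuine gap is in the denominator bound. You need $\|\wb_t\|_\infty\le\sum_{\tau<t}\eta_\tau+o\big(\sum_{\tau<t}\eta_\tau\big)$ with leading coefficient \emph{exactly} $1$ (the crude Cauchy--Schwarz bound $\|\mb_t/\sqrt{\vb_t}\|_\infty\le\alpha$ only gives coefficient $\alpha>1$), and you propose to obtain it by showing $\mathrm{sign}(\nabla\cR(\wb_t))=-\wb^\star$ for all large $t$. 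The supporting claim---that the softmax weights $-\ell'_{i,t}/\sum_j(-\ell'_{j,t})$ converge to the dual-optimal $\lambda$---is circular: which indices the softmax concentrates on is determined by the direction $\wb_t/\|\wb_t\|$, and convergence of that direction to $\wb^\star$ is precisely the conclusion you are trying to establish. You also smuggle in non-degeneracy hypotheses (vertex optimizer $\wb^\star\in\{\pm1\}^d$, $\sum_i\lambda_i y_i\xb_i$ coordinatewise nonzero) that the paper's Theorem~\ref{thm:convergence_rate} does not require.

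The paper sidesteps all of this. Its $\|\wb_t\|_\infty$ bound (Lemmas~\ref{lemma:inf_norm_upper_boundII} and~\ref{lemma:inf_norm_upper_boundIII}) never tracks $\mathrm{sign}(\nabla\cR)$: it expands $\mb_\tau[k]$ as a $\beta_1$-weighted sum of past gradients, applies Cauchy--Schwarz to $\sum_\tau\eta_\tau\mb_\tau[k]/\sqrt{\vb_\tau[k]}$, and then invokes the structural identity $(1-\beta_2)\nabla\cR(\wb_s)[k]^2=\vb_s[k]-\beta_2\vb_{s-1}[k]$ to telescope the squared-gradient sum into ratios $\vb_{\tau-\tau'}[k]/\vb_\tau[k]$, which are controlled via $1-x\le-\log x$ together with $\log\vb_\tau[k]\le0$ once the loss is small. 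This delivers the tight leading coefficient $1$ unconditionally. Two smaller points: the coordinatewise error in $\sqrt{\vb_t[k]}$ is $O(\sqrt{\eta_t}\,\cG(\wb_t))$, not $O(\eta_t\,\cG(\wb_t))$ (Lemma~\ref{lemma:difference_momentum_gradient}), and this $\sqrt{\eta_t}$ is what produces the $\eta_\tau^{3/2}$ in the final rate; and for logistic loss the extra term is obtained directly from $\cG(\wb_t)/\cR(\wb_t)\ge1-n\cR(\wb_t)/2$ combined with the already-proved loss bound $\cR(\wb_t)\le\cR_0\,e^{-\frac{\gamma}{4}\sum_{\tau'<t}\eta_{\tau'}}$, rather than through a lower bound on $\|\wb_t\|_\infty$.
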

Theorem~\ref{thm:informal} shows that, for a general class of learning rate schedules, Adam will eventually achieve the maximum $\ell_\infty$-margin on the training data set.  



\begin{itemize}[leftmargin = *]
    \item We demonstrate the implicit bias of Adam for solving linear logistic regression with linearly separable data. Specifically, we prove that Adam has an implicit bias towards a maximum $\ell_\infty$-margin solution when solving linear classification problems. Our result  distinguishes Adam from (stochastic) gradient descent with/without momentum, whose implicit bias is towards the maximum $\ell_2$-margin solution. 
    \item Our analysis of Adam covers a broad range of diminishing learning rate schedules. For $\eta_t =\Theta(t^{-a})$ with $a\in (0,1]$, our result demonstrates the following convergence rate:
    \begin{align*}
    \bigg|\min_{i\in[n]}\frac{\la\wb_{t}, y_i\cdot\xb_i\ra}{\|\wb_t\|_\infty}-\gamma \bigg| \leq
    \left\{
            \begin{aligned}
            &O\big(t^{-a/2}\big), && \text{if} \ a< 2/3;\\
            &O\big(t^{-1/3}\log t \big), && \text{if} \ a=2/3;\\
            &O\big(t^{1-a}\big), && \text{if} \ 2/3<a<1;\\
            &O\big(1/\log t\big), && \text{if} \ a=1.
            \end{aligned}
    \right. 
    \end{align*}
    Notably, when $a < 1$, the above rates indicate that the convergence towards the maximum $\ell_\infty$-margin occurs in polynomial time. This further differentiates Adam from (stochastic) gradient descent with/without momentum in terms of the convergence speed.
    \item Our result focuses on a particularly challenging setting where we ignore the ``stability constant $\epsilon$'' in the Adam algorithm. In practice, the stability constant is by default set as $\epsilon = 10^{-8}$, which is almost negligible throughout the optimization process. Therefore, by covering the setting without the stability constant, our theory matches the practical setting better. We demonstrate by simulation that our theory can also correctly characterize the implicit bias of Adam with the stability constant.
\end{itemize}


\noindent\textbf{Notation.} Given two sequences $\{x_n\}$ and $\{y_n\}$, we denote $x_n = O(y_n)$ if there exist some absolute constant $C_1 > 0$ and $N > 0$ such that $|x_n|\le C_1 |y_n|$ for all $n \geq N$. Similarly, we denote $x_n = \Omega(y_n)$ if there exist $C_2 >0$ and $N > 0$ such that $|x_n|\ge C_2 |y_n|$ for all $n > N$. We say $x_n = \Theta(y_n)$ if $x_n = O(y_n)$ and $x_n = \Omega(y_n)$ both holds. We use $\tilde O(\cdot)$, $\tilde \Omega(\cdot)$, and $\tilde \Theta(\cdot)$ to hide logarithmic factors in these notations respectively. Moreover, we denote $x_n=\poly(y_n)$ if $x_n=O( y_n^{D})$ for some positive constant $D$, and $x_n = \polylog(y_n)$ if $x_n= \poly( \log (y_n))$. For two scalars $a$ and $b$, we denote $a \vee b = \max\{a, b\}$ and $a\wedge b =\min\{a,b\}$. For any $n\in \NN_+$, we use $[n]$ to denote the set $\{1, 2, \cdots, n\}$. For any scalar $c\in \RR$, $\lceil c\rceil$ denotes the smallest integer larger or equal to $c$ and $\lfloor c\rfloor$ denotes the largest integer smaller or equal to $c$. For a vector $\ab \in \RR^d$, $\ab[k]$ denote its $k$-th entry. Finally, $\eb_i \in \RR^n$ denotes the $i$-th basis vector in $\RR^n$.

\section{Additional Related Work}

\textbf{Theoretical analyses of Adam and its variants.} There has been a line of works studying the properties of Adam and its variants from different aspects. \citet{reddi2018convergence} pointed out that there exists simple convex objective functions which Adam may fail to minimize, and proposed a new variant of Adam, the AMSGrad algorithm, which enjoys convergence guarantees in convex optimization. \citet{zhou2018convergence, chen2019convergence,guo2021novel,pan2022toward} established optimization guarantees of Adam and its variants in non-convex optimization. \citet{liu2020adam,huang2021super} implemented variance reduction techniques in Adam and proposed new variants of Adam accordingly. \citet{wilson2017marginal, zhang2020adaptive, zhou2020towards, DBLP:conf/iclr/Zou0LG23} studied the generalization performance of Adam and compared it with GD under different learning tasks. \citet{DBLP:conf/iclr/KunstnerCL023, DBLP:journals/corr/abs-2002-08056, DBLP:conf/iclr/BernsteinZAA19, DBLP:conf/icml/BallesH18, DBLP:conf/icml/BernsteinWAA18} tried to explain the performance of Adam by studying the connections between Adam and sign gradient descent.


\textbf{Implicit bias.} Classic results \citep{DBLP:journals/jmlr/SoudryHNGS18, pmlr-v99-ji19a} demonstrated the iterates of GD will converge to the maximum $\ell_2$-margin solution in direction on linear logistic regression with linear separable datasets. \citet{DBLP:conf/aistats/NacsonSS19} extended this result under stochastic settings. \citet{pmlr-v80-gunasekar18a} explored the implicit bias of a general class of optimization methods, containing mirror descent and steepest descent. \citet{DBLP:conf/alt/JiT21} proposed a primal-dual analysis and derived a faster convergence rate with a larger learning rate compared to \citet{DBLP:journals/jmlr/SoudryHNGS18, pmlr-v99-ji19a}. \citet{DBLP:conf/nips/WuBL23} explored the implicit bias of gradient descent at the 'edge of stability' regime, where the learning rate can be an arbitrarily large constant. \citet{lyu2019gradient, ji2020directional} showed that $q$-homogeneous neural network trained by GD will converge to a KKT point of maximum $\ell_2$-margin optimization problem. \citet{chen2023lion} established an implicit bias type result for the Lion~\citep{chen2024symbolic} algorithm in its continuous-time form. There also exist numerous works studying the implicit bias for different problem setting, including matrix factorization models \citep{gunasekar2017implicit,li2018algorithmic,arora2019implicit,razin2020implicit}, squared loss models~\citep{rosasco2015learning, ali2020implicit, jin2023implicit},  weight normalization and batch normalization~\citep{wu2020implicit, cao2023implicit}, deep linear neural networks~\citep{gunasekar2018implicit, ji2019gradient}, and two-layer neural networks~\citep{chizat2020implicit, phuong2020inductive, frei2022implicit, vardi2022margin, vardi2022gradient,kou2024implicit}.

\section{Problem Settings}

We consider binary linear classification problems. Specifically, given $n$ training data points $\{(\xb_i, y_i)\}_{i=1}^n$ where 
$\xb_i \in \RR^d$ and $y_i \in \{+1, -1\}$, 
we aim to find a coefficient vector 
$\wb$ which minimizes the following empirical loss
\begin{align}\label{eq:def_logistic_loss_I}
    \cR(\wb) = \frac{1}{n} \sum_{i=1}^n\ell(\la\wb, y_i\cdot\xb_i\ra),
\end{align}
where $\ell(\la\wb, y_i\cdot\xb_i\ra)$ is the loss function value on the data point $(\xb_i,y_i)$. In this paper, we consider 
$\ell \in \{\ell_{\log}, \ell_{\exp}\}$, where $\ell_{\log}(z) = \log(1+e^{-z})$ is the logistic loss function and 
$\ell_{\exp}(z) = e^{-z}$ is the exponential loss function. 
We consider using Adam to minimize \eqref{eq:def_logistic_loss_I}. 
Denoting $\mb_{-1} = \vb_{-1}=\mathbf{0}\in \RR^d $ and  starting with initialization $\wb_0$, Adam applies the following iterative formulas: 
\begin{align}
    &\mb_{t} = \beta_1 \mb_{t-1} + (1-\beta_1) \cdot \nabla \cR(\wb_{t}), \label{eq:mt_update}\\
    &\vb_{t} =\beta_2 \vb_{t-1} + (1-\beta_2) \cdot \nabla \cR(\wb_{t})^2, \label{eq:vt_update}\\
    &\wb_{t+1} = \wb_{t} - \eta_t \frac{\mb_t}{\sqrt{\vb_t}},\label{eq:def_adam_wt_update}
\end{align}
where $\beta_1, \beta_2 \in [0, 1)$ are the hyperparameters of Adam, and the square $(\cdot)^2$, square root $(\sqrt{\cdot})$ and division $(\frac{\ \cdot\ }{\ \cdot\ })$ above all denote entry-wise calculations. 


Note that in practice, it is common to consider the variant $\wb_{t+1} = \wb_{t} - \eta_t \frac{\mb_t}{\sqrt{\vb_t} + \epsilon}$, where an additional term $\epsilon\approx 10^{-8}$ is added in \eqref{eq:def_adam_wt_update} to improve stability. However, in our analysis, we do not consider such a term $\epsilon$. This is because in practice, one seldom run Adam until $\vb_t$ is around the same level as $\epsilon$. However, by the nature of implicit bias, the result needs to cover infinitely many iterations, and the additional term $\epsilon$ will eventually significantly affect the result. In fact, a recent work \citep{NEURIPS2022_ab3f6bbe} showed that when one considers such an additional $\epsilon$ term, Adam will be asymptotically equivalent to gradient descent. In comparison, in this paper, we will show that when ignoring $\epsilon$, Adam has a unique implicit bias that is different from gradient descent. 


\section{Main Results}\label{section:main_result}
In this section, we present our main result on the implicit bias of Adam in linear classification problems. We first introduce several assumptions.
\begin{assumption}\label{assumption:linear_separable_margin}
    There exists $\wb \in \RR^{d}$ such that $\la \wb, y_i\cdot\xb_i\ra > 0$ for all $i \in [n]$.
\end{assumption}
Assumption~\ref{assumption:linear_separable_margin} is a standard assumption in the study of implicit bias of linear models \citep{DBLP:journals/jmlr/SoudryHNGS18,pmlr-v80-gunasekar18a,DBLP:conf/aistats/NacsonSS19,pmlr-v99-ji19a,DBLP:conf/alt/JiT21,NEURIPS2022_ab3f6bbe, DBLP:conf/nips/WuBL23}. It can be easily satisfied in the over-parameterized setting where $d \geq n$. With the linear separability assumption, we can further define the maximum $\ell_{\infty}$-margin: 
\begin{align}\label{eq:def_maximum_margin}
    \gamma = \max_{\|\wb\|_\infty \leq 1}\min_{i \in [n]} \la\wb, y_i\cdot\xb_i\ra. 
\end{align}
We also make the following assumption on the initialization $\wb_0$.
\begin{assumption}\label{assumption:w0}
The initialization $\wb_0$ of Adam satisfies that for all $k\in[d]$, $\nabla \cR(\wb_0)[k]^2 \geq \rho$.
\end{assumption}
Assumption~\ref{assumption:w0} ensures that at every finite iteration, the entries of $\vb_t$ are strictly positive. We remark that this is a mild assumption: if  $\xb_i$, $i\in[n]$ are generated from a continuous, non-degenerate distribution, then regardless of the choice of $\wb_0$, $\nabla \cR(\wb_0)[k] \neq 0$ with probability $1$. Moreover, $\rho$ will only appear in our results in the form of $\log(1/\rho)$, and therefore, even if $\rho$ is small, it will not significantly hurt the convergence rates. A similar assumption has also been considered in  \citet{xie2024implicit}.
\begin{assumption}\label{assumption:eta_decrease}
    $\{\eta_t\}_{t=1}^\infty$ are non-increasing in $t$, and satisfy $\sum_{t=0}^\infty \eta_t =\infty$, $\lim_{t\to\infty} \eta_t = 0$.
\end{assumption}
Assumption~\ref{assumption:eta_decrease} is a mild and standard assumption of the learning rates $\{\eta_t\}_{t=0}^\infty$ that is commonly considered in the general optimization literature. It has also been considered in recent studies of Adam and its variants \citep{de2018convergence,huang2021super,xie2024implicit}.
\begin{assumption}\label{assumption:eta}
    For all $\beta \in (0,1)$ and $c_1 > 0$, there exist $t_1\in \NN_+$ and $c_2>0$ that only depend on $\beta,c_1$, such that $\sum_{\tau=0}^t \beta^\tau \big(e^{c_1\sum_{\tau'=1}^\tau\eta_{t-\tau'}} -1\big) \leq c_2 \eta_t$
for all $t \geq t_1$.
\end{assumption}
Although Assumption~\ref{assumption:eta} seems non-trivial, we claim it is a fairly mild assumption. In fact, for both small fixed learning rate $\eta_t=\eta$, and decay learning rate $\eta_t=(t+2)^{-a}$ with $a\in (0,1]$, Assumption~\ref{assumption:eta} always hold. We formally prove this result in Lemma~\ref{lemma:eta} in the appendix.

Now, we state our main theorem about the implicit bias about Adam as follows.

\begin{theorem}\label{thm:convergence_rate}
 Let $\{\wb_t\}_{t=0}^\infty$ be the iterates of Adam in \eqref{eq:mt_update}-\eqref{eq:def_adam_wt_update} with $\beta_1 \leq \beta_2$. In addition, let $\gamma$ be defined in \eqref{eq:def_maximum_margin} and $B := \max_{i\in[n]}\| \xb_i \|_1$. Then under Assumptions~\ref{assumption:linear_separable_margin}, \ref{assumption:w0}, \ref{assumption:eta_decrease} and \ref{assumption:eta}, there exists $t_0=t_0(n, d,\beta_1, \beta_2, \gamma, B, \rho, \wb_0)$ such that
    \begin{itemize}[leftmargin = *]
        \item If $\ell=\ell_{\exp}$, then for all $t \geq t_0$,
        \begin{align*}
            \cR(\wb_t)\leq \frac{\log2}{n}\cdot e^{-\frac{\gamma}{2}\sum_{\tau=t_0}^{t-1}\eta_\tau},~ \text{and}~ \bigg|\min_{i\in[n]}\frac{\la\wb_{t}, y_i\cdot\xb_i\ra}{\|\wb_t\|_\infty}-\gamma\bigg|\leq O\Bigg(\frac{\sum_{\tau=0}^{t_0-1}\eta_\tau + d\sum_{\tau=t_0}^{t-1}\eta_\tau^{\frac{3}{2}}}{\sum_{\tau=0}^{t-1}\eta_\tau}\Bigg).
        \end{align*}
        \item If $\ell=\ell_{\log}$, then for all $t \geq t_0$,
        \begin{align*}
            &\cR(\wb_t)\leq  \frac{\log2}{n}\cdot e^{-\frac{\gamma}{4}\sum_{\tau=t_0}^{t-1}\eta_\tau},
        \end{align*}
        and 
        \begin{align*}
            \bigg|\min_{i\in[n]}\frac{\la\wb_{t}, y_i\cdot\xb_i\ra}{\|\wb_t\|_\infty}-\gamma\bigg|\leq O\Bigg(\frac{\sum_{\tau=0}^{t_0-1}\eta_\tau + d\sum_{\tau=t_0}^{t-1}\eta_\tau^{\frac{3}{2}}+\sum_{\tau=t_0}^{t-1}\eta_\tau e^{-\frac{\gamma}{4}\sum_{\tau'=t_0}^{\tau-1}\eta_{\tau'}}}{\sum_{\tau=0}^{t-1}\eta_\tau}\Bigg),
        \end{align*}
    \end{itemize}
    where we use $O(\cdot)$ to omit factors that only depend on $\beta_1,\beta_2,\gamma,B$.
\end{theorem}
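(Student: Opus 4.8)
The target quantity already satisfies the free upper bound $\min_{i}\la\wb_t,y_i\xb_i\ra/\|\wb_t\|_\infty\le\gamma$ directly from the definition of $\gamma$ in \eqref{eq:def_maximum_margin}, so the whole content of the theorem is the matching lower bound together with the loss‑decay estimate. My plan is to reduce everything to three ingredients: (i) a geometric decay $\cR(\wb_t)\le C\,e^{-(\gamma-o(1))\sum_{\tau=t_0}^{t-1}\eta_\tau}$ of the training loss; (ii) an almost‑tight bound $\|\wb_t\|_\infty\le\|\wb_{t_0}\|_\infty+\sum_{\tau=t_0}^{t-1}\eta_\tau\,(1+o(1))$; and (iii) the elementary inequality $\min_i\la\wb_t,y_i\xb_i\ra\ge-\log\!\big(n\cR(\wb_t)\big)$ for $\ell_{\exp}$ (and an analogue for $\ell_{\log}$), which turns (i) into $\min_i\la\wb_t,y_i\xb_i\ra\ge(\gamma-o(1))\sum_{\tau=t_0}^{t-1}\eta_\tau-O(1)$. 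Dividing this by (ii) and bookkeeping the $o(1)$ terms quantitatively yields the stated rates; the $\sum_{\tau<t_0}\eta_\tau$ summand absorbs all $O(1)$ constants, and the extra $\sum_\tau\eta_\tau e^{-\frac{\gamma}{4}\sum\eta}$ summand in the logistic case comes from the gap between $\ell_{\log}'$ and $\ell_{\exp}'$ away from the asymptotic regime.

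The engine behind (i)–(ii) is that, once the loss is small, Adam behaves like \emph{normalized sign gradient descent}: $\mb_t$ is an exponential moving average of $\nabla\cR(\wb_s)$ and $\vb_t$ of $\nabla\cR(\wb_s)^2$, so we expect $\mb_t/\sqrt{\vb_t}\approx\mathrm{sign}(\nabla\cR(\wb_t))$ coordinatewise, hence $\la\nabla\cR(\wb_t),\mb_t/\sqrt{\vb_t}\ra\approx\|\nabla\cR(\wb_t)\|_1$ and $\wb_{t+1}-\wb_t\approx-\eta_t\,\mathrm{sign}(\nabla\cR(\wb_t))$. The key algebraic input is the LP‑duality identity $\|\nabla\cR(\wb_t)\|_1\ge\gamma\,\cR(\wb_t)$ for $\ell_{\exp}$ (and $\ge(\gamma-o(1))\cR(\wb_t)$ for $\ell_{\log}$): write $-\nabla\cR(\wb_t)=\cR(\wb_t)\sum_i\lambda^{(t)}_i\,y_i\xb_i$ with $\lambda^{(t)}_i\ge0$, $\sum_i\lambda^{(t)}_i=1$ the loss‑induced weights, and use that $\min\{\|\sum_i\lambda_iy_i\xb_i\|_1:\lambda\in\Delta_n\}=\gamma$. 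So I would first prove a \textbf{sign‑alignment lemma}: there is a sequence $\epsilon_t\to0$, which one can take to be $\epsilon_t=O(\sqrt{\eta_t})$ under Assumption~\ref{assumption:eta}, such that for all large $t$, $\la\nabla\cR(\wb_t),\mb_t/\sqrt{\vb_t}\ra\ge(1-\epsilon_t)\|\nabla\cR(\wb_t)\|_1$ and $\|\mb_t/\sqrt{\vb_t}\|_\infty\le1+\epsilon_t$. For the second bound I would use $\beta_1\le\beta_2$ together with Cauchy--Schwarz to get $|\mb_t[k]|^2\le\vb_t[k]\,(1+o(1))$. For the first, I would split the coordinates: on coordinates where $|\nabla\cR(\wb_t)[k]|$ is not too small relative to $\|\nabla\cR(\wb_t)\|_\infty$, Assumption~\ref{assumption:eta} and the already‑established loss monotonicity control the multiplicative drift of $\nabla\cR(\wb_\tau)[k]$ over the effective momentum window (since $\|\wb_\tau-\wb_t\|_\infty\le\sum_{\tau\le s<t}\eta_s$ and the weights $e^{-\la\wb_\tau,y_i\xb_i\ra}$ move by a factor $e^{\pm O(\sum\eta)}$), so these coordinates keep their sign and $\mb_t[k]/\sqrt{\vb_t[k]}$ is within $\epsilon_t$ of $\pm1$; on the remaining near‑zero coordinates the sign is uncontrolled, but their total contribution to $\la\nabla\cR(\wb_t),\mb_t/\sqrt{\vb_t}\ra$ is at most $\|\mb_t/\sqrt{\vb_t}\|_\infty$ times the sum of the small entries, which is again $O(\epsilon_t\|\nabla\cR(\wb_t)\|_1)$.

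Given the sign‑alignment lemma the rest is assembly. For the loss, the Hessians along the trajectory satisfy $\|\nabla^2\cR(\wb)\|_2=O(\cR(\wb))$ (with constant depending on $B$), so a second‑order expansion gives $\cR(\wb_{t+1})\le\cR(\wb_t)-\eta_t\la\nabla\cR(\wb_t),\mb_t/\sqrt{\vb_t}\ra+O(d\,\eta_t^2)\,\cR(\wb_t)$; combined with the lemma and $\|\nabla\cR(\wb_t)\|_1\ge\gamma\cR(\wb_t)$ this yields $\cR(\wb_{t+1})\le\big(1-(\gamma-O(\sqrt{\eta_t}))\,\eta_t\big)\cR(\wb_t)$ for all $t$ past some threshold, and in particular $\cR(\wb_{t+1})\le(1-\tfrac{\gamma}{2}\eta_t)\cR(\wb_t)$ (resp. with $\tfrac{\gamma}{4}$ for $\ell_{\log}$, after accounting that $\ell_{\log}'$ equals $\ell_{\exp}'$ only up to a factor $1-o(1)$); since $\sum_t\eta_t=\infty$ this forces $\cR(\wb_t)\to0$, which shows the threshold time is finite and can be taken as $t_0$, and iterating from $t_0$ gives the stated decay while the sharper $(\gamma-O(\sqrt{\eta_t}))$ recursion gives $-\log(n\cR(\wb_t))\ge\gamma\sum_{\tau=t_0}^{t-1}\eta_\tau-O\big(d\sum_{\tau=t_0}^{t-1}\eta_\tau^{3/2}\big)-O(1)$. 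For the norm, $\|\wb_t\|_\infty\le\|\wb_{t_0}\|_\infty+\sum_{\tau=t_0}^{t-1}\eta_\tau\,\|\mb_\tau/\sqrt{\vb_\tau}\|_\infty\le\|\wb_{t_0}\|_\infty+\sum_{\tau=t_0}^{t-1}\eta_\tau\,(1+\epsilon_\tau)$ with $\sum_\tau\eta_\tau\epsilon_\tau=O(\sum_\tau\eta_\tau^{3/2})$. Dividing $\min_i\la\wb_t,y_i\xb_i\ra\ge-\log(n\cR(\wb_t))$ by $\|\wb_t\|_\infty$ and simplifying finishes the exponential case; for $\ell_{\log}$ the $\ell_{\log}'$‑vs‑$\ell_{\exp}'$ defect, which is $O(e^{-\frac{\gamma}{4}\sum_{\tau'=t_0}^{\tau-1}\eta_{\tau'}})$ by the loss bound itself, enters the loss recursion and produces precisely the additional $\sum_{\tau}\eta_\tau e^{-\frac{\gamma}{4}\sum\eta}$ term.

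The main obstacle is the sign‑alignment lemma, and within it the near‑zero‑gradient coordinates: a priori $\mb_t/\sqrt{\vb_t}$ can carry the ``wrong'' sign on a coordinate $k$ where $\nabla\cR(\wb_t)[k]$ has recently changed sign, and although such a coordinate contributes negligibly to $\la\nabla\cR(\wb_t),\mb_t/\sqrt{\vb_t}\ra$, it could contribute an $O(B)$ error to a single margin $\la\wb_t,y_i\xb_i\ra$. Handling this requires the extra observation that the cumulative increment $\sum_\tau\eta_\tau(\mb_\tau/\sqrt{\vb_\tau})[k]$ on such oscillating coordinates grows sublinearly in $\sum_\tau\eta_\tau$, so their net effect on every margin and on $\|\wb_t\|_\infty$ is of lower order; quantifying this — and squeezing all the drift terms down to the $O(\sqrt{\eta_t})$ scale that Assumption~\ref{assumption:eta} affords — is where the bulk of the work lies. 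A secondary obstacle is upgrading ``$\|\mb_t/\sqrt{\vb_t}\|_\infty$ bounded'' to ``$\le 1+o(1)$'', which is exactly why $\beta_1\le\beta_2$ is assumed, and carefully propagating the $\ell_{\log}$ corrections through the recursions.
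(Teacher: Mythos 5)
Your skeleton for the loss decay and the unnormalized margin is essentially the paper's: approximate $\mb_t$ and $\sqrt{\vb_t}$ by $\nabla\cR(\wb_t)$ and $|\nabla\cR(\wb_t)|$ up to errors $O(\eta_t)\cG(\wb_t)$ and $O(\sqrt{\eta_t})\cG(\wb_t)$ using the slow multiplicative drift of the $\ell'$ terms under Assumption~\ref{assumption:eta}, split coordinates into large versus $O(\sqrt{\eta_t}\cG)$ entries to control $\la\nabla\cR(\wb_t),\mb_t/\sqrt{\vb_t}\ra$ against $\|\nabla\cR(\wb_t)\|_1$, invoke the duality bound $\|\nabla\cR(\wb_t)\|_1\geq\gamma\cG(\wb_t)$, take logs of the resulting multiplicative recursion to get $\min_i\la\wb_t,y_i\xb_i\ra\geq\gamma\sum\eta_\tau-O(d\sum\eta_\tau^{3/2})-O(1)$, and treat the logistic case through the ratio $\cG/\cR\geq1-O(n\cR)$, which is exactly where the extra $\sum_\tau\eta_\tau e^{-\frac{\gamma}{4}\sum\eta}$ term comes from. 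All of this is sound and mirrors Lemmas~\ref{lemma:difference_momentum_gradient}--\ref{lemma:margin_iterates}.

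The genuine gap is your ingredient (ii), the bound $\|\wb_t\|_\infty\leq\|\wb_{t_0}\|_\infty+\sum_{\tau=t_0}^{t-1}\eta_\tau(1+\epsilon_\tau)$ obtained from a per-step claim $\|\mb_\tau/\sqrt{\vb_\tau}\|_\infty\leq1+O(\sqrt{\eta_\tau})$. That claim is false in general: $\beta_1\leq\beta_2$ only yields $|\mb_t[k]|\leq\alpha\sqrt{\vb_t[k]}$ with $\alpha=\sqrt{\beta_2(1-\beta_1)^2/((1-\beta_2)(\beta_2-\beta_1^2))}$, which is a constant strictly larger than $1$ for standard $(\beta_1,\beta_2)$, and on coordinates where $|\nabla\cR(\wb_t)[k]|$ is of order $\sqrt{\eta_t}\cG(\wb_t)$ or smaller (precisely the ``oscillating'' coordinates you flag) the ratio $|\mb_t[k]|/\sqrt{\vb_t[k]}$ can be a constant factor above $1$, independent of $\eta_t$: the additive drift bound only forces $|\mb_t[k]-(1-\beta_1^{t+1})\nabla\cR(\wb_t)[k]|\leq O(\eta_t)\cG$ and $|\sqrt{\vb_t[k]}-\sqrt{1-\beta_2^{t+1}}|\nabla\cR(\wb_t)[k]||\leq O(\sqrt{\eta_t})\cG$, which near zero gradient controls neither sign nor ratio. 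Your proposed fix --- that the cumulative increments on such coordinates grow sublinearly in $\sum\eta_\tau$ --- is precisely the hard step and is asserted rather than proved; without it you only get $\|\wb_t\|_\infty\leq\alpha\sum\eta_\tau(1+o(1))$, hence a normalized margin lower bound of order $\gamma/\alpha$, which does not converge to $\gamma$. The paper closes this with a different mechanism (Lemmas~\ref{lemma:inf_norm_upper_boundII} and~\ref{lemma:inf_norm_upper_boundIII}): a Cauchy--Schwarz inequality applied to the whole trajectory, rewriting $(1-\beta_2)\nabla\cR(\wb_{\tau-\tau'})[k]^2=\vb_{\tau-\tau'}[k]-\beta_2\vb_{\tau-\tau'-1}[k]$ so that the correction telescopes into terms of the form $\log(\vb_\tau[k]/\vb_{\tau-\tau'}[k])$, then bounding $\log\vb_\tau[k]\leq0$ once the loss is below $1/\sqrt{B^2+C_5\eta_0}$ and $\log\vb_\tau[k]\geq\tau\log\beta_2+\log(1-\beta_2)+\log\rho$ via Assumption~\ref{assumption:w0}; this is why the $\rho$ assumption and the requirement $t_0\geq\log(1/\rho)$ appear in the theorem, and your proposal never uses them --- a telling sign that the $\ell_\infty$-norm control is missing. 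You would need to either prove your sublinearity claim quantitatively or adopt an argument of this trajectory-level type to complete the proof.
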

Theorem~\ref{thm:convergence_rate} implies that Adam can minimize the loss function to zero, and that the normalized $\ell_\infty$-margin achieved by Adam will eventually converge to the maximum $\ell_\infty$-margin of the training data set. To address general learning rate schedules, we do not specify a particular convergence rate for either the loss or the margin, nor do we provide an exact formula for $t_0$. However, it can be easily verified that $\cR(\wb_t) \leq O\big(e^{-\gamma t^{1-a} / 4(1-a)}\big)$ when $\eta_t = (t+2)^{-a}$ with $a < 1$. In addition, we have $t_0 = \poly[n, d,(1-\beta_1)^{-1}, (1-\beta_2)^{-1}, \gamma^{-1}, B, \log(1/\rho), \cR(\wb_0)]$ when $\eta_t = (t+2)^{-a}$ with $a< 1$, and we defer the derivation details to Appendix~\ref{section:margin_rate}. Regarding margin convergence, we will give a set of detailed convergence rate results for different learning rate schedules in Corollary~\ref{crlry:margin_rate}.




According to Theorem~\ref{thm:convergence_rate}, the nature of Adam is vastly different from (stochastic) gradient descent from the perspective of implicit bias: Adam maximizes the $\ell_\infty$-margin, while existing works have demonstrated that (stochastic) gradient descent maximizes the $\ell_2$-margin \citep{DBLP:journals/jmlr/SoudryHNGS18, DBLP:conf/aistats/NacsonSS19, pmlr-v99-ji19a}. Compared with existing works on the implicit bias of adaptive gradient methods \citep{DBLP:conf/nips/QianQ19, NEURIPS2022_ab3f6bbe, xie2024implicit}, our result  demonstrates a novel type of implicit bias with accurate convergence rates, which can not been covered in the previous results. Notably, \citet{NEURIPS2022_ab3f6bbe} showed that, if a stability constant $\epsilon$ is added, i.e., \eqref{eq:def_adam_wt_update} is replaced by $\wb_{t+1} = \wb_{t} - \eta_t \frac{\mb_t}{\sqrt{\vb_t} + \epsilon}$, then Adam will eventually be equivalent to gradient descent and will converge to the maximum $\ell_2$-margin solution. However, the analysis in \citet{NEURIPS2022_ab3f6bbe} relies on a positive $\epsilon$: their proof is based the fact that after a large number of iterations, the entries of $\vb_t$ will eventually be much smaller than $\epsilon$, and the update of Adam will be similar to gradient descent with momentum. In our analysis, we are able to cover the setting where $\epsilon = 0$, and our result demonstrates that studying the setting without $\epsilon$ is essential, as the implicit bias is completely different. In Section~\ref{section:experiments}, we will demonstrate by experiments that our setting matches the practical observations better.

As we have discussed, Theorem~~\ref{thm:convergence_rate} implies the convergence of the normalized $\ell_\infty$-margin of Adam iterates towards the maximum $\ell_\infty$-margin. Since the results cover very general learning rates, the convergence rates are presented in rather complicated formats. However, based on the assumption that $\sum_{t=0}^\infty \eta_t =\infty$, $\lim_{t\to\infty} \eta_t = 0$, we can immediately conclude the following simplified result by the Stolz–Cesàro theorem (see Theorem~\ref{thm:stolze_thm} in the appendix). 
\begin{corollary}\label{crlry:implicit_bias_Adam}
    Under the same conditions in Theorem~\ref{thm:convergence_rate}, it holds that
    \begin{align*}
        \lim_{t\to \infty} \min_{i\in[n]}\frac{\la\wb_{t}, y_i\cdot\xb_i\ra}{\|\wb_t\|_\infty} = \max_{\|\wb\|_{\infty}\leq 1}\min_{i\in[n]}\la\wb, y_i\cdot\xb_i\ra.
    \end{align*}
    If there exists a unique maximum $\ell_\infty$-margin solution $\wb^* = \argmax_{\|\wb\|_{\infty}\leq 1}\min_{i\in[n]}\la\wb, \xb_i\ra$, then we have $\lim_{t\to \infty}\frac{\wb_t}{\|\wb_t\|_\infty}=\wb^*$.
\end{corollary}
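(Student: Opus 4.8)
The plan is to derive Corollary~\ref{crlry:implicit_bias_Adam} as a direct consequence of the convergence-rate bounds in Theorem~\ref{thm:convergence_rate}, using the hypotheses $\sum_{t=0}^\infty \eta_t = \infty$ and $\lim_{t\to\infty}\eta_t = 0$ from Assumption~\ref{assumption:eta_decrease}. The key observation is that every term appearing in the numerators of the error bounds, once divided by $\sum_{\tau=0}^{t-1}\eta_\tau$, vanishes as $t\to\infty$. First I would handle the term $\sum_{\tau=0}^{t_0-1}\eta_\tau$: since $t_0$ is a fixed finite index (depending only on the problem parameters), this numerator is a constant, while the denominator $\sum_{\tau=0}^{t-1}\eta_\tau \to \infty$, so this ratio goes to $0$.

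Next I would treat the term $d\sum_{\tau=t_0}^{t-1}\eta_\tau^{3/2}$ divided by $\sum_{\tau=0}^{t-1}\eta_\tau$. Here I would invoke the Stolz--Ces\`aro theorem (Theorem~\ref{thm:stolze_thm}): since both $\sum_{\tau=0}^{t-1}\eta_\tau^{3/2}$ and $\sum_{\tau=0}^{t-1}\eta_\tau$ are monotone and the latter diverges, the limit of the ratio equals the limit of the ratio of increments, namely $\lim_{t\to\infty}\eta_{t-1}^{3/2}/\eta_{t-1} = \lim_{t\to\infty}\eta_{t-1}^{1/2} = 0$, using $\eta_t\to 0$. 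For the logistic-loss case there is the extra term $\sum_{\tau=t_0}^{t-1}\eta_\tau e^{-\frac{\gamma}{4}\sum_{\tau'=t_0}^{\tau-1}\eta_{\tau'}}$; applying Stolz--Ces\`aro again, its ratio with $\sum_{\tau=0}^{t-1}\eta_\tau$ has increment ratio $e^{-\frac{\gamma}{4}\sum_{\tau'=t_0}^{t-2}\eta_{\tau'}} \to 0$ because the exponent tends to $-\infty$ by divergence of the learning-rate sum. Combining these three pieces shows the right-hand side of the Theorem~\ref{thm:convergence_rate} bound converges to $0$, hence $\min_{i\in[n]}\frac{\la\wb_t, y_i\cdot\xb_i\ra}{\|\wb_t\|_\infty} \to \gamma = \max_{\|\wb\|_\infty\le 1}\min_{i\in[n]}\la\wb, y_i\cdot\xb_i\ra$.

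For the second assertion, I would argue by a standard compactness-plus-uniqueness argument. The normalized iterates $\wb_t/\|\wb_t\|_\infty$ lie in the compact $\ell_\infty$-unit sphere, so any subsequence has a convergent sub-subsequence with some limit $\tilde\wb$ satisfying $\|\tilde\wb\|_\infty = 1$. By the first part and continuity of $\wb\mapsto \min_{i\in[n]}\la\wb, y_i\cdot\xb_i\ra$, every such limit point attains the maximum $\ell_\infty$-margin, i.e. $\min_{i\in[n]}\la\tilde\wb, y_i\cdot\xb_i\ra = \gamma$. If the maximum-margin solution $\wb^*$ is unique, then $\tilde\wb = \wb^*$ for every subsequential limit, which forces the whole sequence $\wb_t/\|\wb_t\|_\infty$ to converge to $\wb^*$.

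I do not expect any substantial obstacle here; the corollary is essentially a bookkeeping exercise once Theorem~\ref{thm:convergence_rate} is in hand. The only point requiring a little care is the correct application of Stolz--Ces\`aro: one must verify the denominator sequence $\sum_{\tau=0}^{t-1}\eta_\tau$ is strictly increasing and unbounded (immediate from $\eta_t > 0$ and $\sum\eta_t = \infty$) before passing to the ratio of consecutive differences. The rest is continuity and compactness.
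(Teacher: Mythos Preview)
Your proposal is correct and follows essentially the same approach as the paper. The paper only states that the first part is ``immediate by the Stolz--Ces\`aro theorem'' without spelling out the term-by-term verification you provide, and it gives no argument at all for the second part; your compactness-plus-uniqueness argument is the standard way to complete that half and is exactly what is needed.
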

We can also investigate the convergence rates of the $\ell_\infty$-margin with specific learning rates. The results are summarized in the following Corollary.
\begin{corollary}\label{crlry:margin_rate}
    Consider $\eta_t=(t+2)^{-a}$ with $a\in (0,1]$. Denote by $\wb_t^{\exp}$ and $\wb_t^{\log}$ the iterates of Adam for $\ell =\ell_{\exp}$ and $\ell =\ell_{\log}$ respectively. Suppose that $\beta_1 \leq \beta_2$ and Adam starts with initialization $\wb_0$. Let $B := \max_{i\in[n]}\| \xb_i \|_1$. Then under Assumptions~\ref{assumption:linear_separable_margin} and \ref{assumption:w0}, there exists $t_0=t_0(n, d,\beta_1, \beta_2, \gamma, B, \rho , \wb_0)$ such that for all $t \geq t_0$, the following results hold:
    \begin{itemize}[leftmargin = *]
        \item If $a<\frac{2}{3}$, 
        \begin{align*}
            \Bigg|\min_{i\in[n]}\frac{\la\wb_{t}^{\exp}, y_i\cdot\xb_i\ra}{\|\wb_t^{\exp}\|_\infty}-\gamma\Bigg|, \Bigg|\min_{i\in[n]}\frac{\la\wb_{t}^{\log}, y_i\cdot\xb_i\ra}{\|\wb_t^{\log}\|_\infty}-\gamma\Bigg|\leq O\Bigg(\frac{d}{t^{a/2}}\Bigg).
        \end{align*}
        \item If $a=\frac{2}{3}$, 
        \begin{align*}
           &\Bigg|\min_{i\in[n]}\frac{\la\wb_{t}^{\exp}, y_i\cdot\xb_i\ra}{\|\wb_t^{\exp}\|_\infty}-\gamma\Bigg|\leq O\Bigg(\frac{d\cdot\log t + \log n + \log \cR(\wb_0) + [\log(1/\rho)]^{1/3}}{t^{1/3}}\Bigg),\\
           &\Bigg|\min_{i\in[n]}\frac{\la\wb_{t}^{\log}, y_i\cdot\xb_i\ra}{\|\wb_t^{\log}\|_\infty}-\gamma\Bigg|\leq O\Bigg(\frac{d\cdot\log t +nd+ n \cR(\wb_0) + [\log(1/\rho)]^{1/3}}{t^{1/3}}\Bigg).
        \end{align*}
        \item If $\frac{2}{3}<a<1$, 
        \begin{align*}
            &\Bigg|\min_{i\in[n]}\frac{\la\wb_{t}^{\exp}, y_i\cdot\xb_i\ra}{\|\wb_t^{\exp}\|_\infty}-\gamma\Bigg|\leq O\Bigg(\frac{d+\log n+ \log \cR(\wb_0) + [\log(1/\rho)]^{1-a}}{t^{1-a}}\Bigg),\\
            &\Bigg|\min_{i\in[n]}\frac{\la\wb_{t}^{\log}, y_i\cdot\xb_i\ra}{\|\wb_t^{\log}\|_\infty}-\gamma\Bigg|\leq O\Bigg(\frac{d+nd^{\frac{2(1-a)}{a}}+n \cR(\wb_0) + [\log(1/\rho)]^{1-a}}{t^{1-a}}\Bigg).
        \end{align*}
        \item If $a=1$, 
        \begin{align*}
             &\Bigg|\min_{i\in[n]}\frac{\la\wb_{t}^{\exp}, y_i\cdot\xb_i\ra}{\|\wb_t^{\exp}\|_\infty}-\gamma\Bigg|\leq O\Bigg(\frac{d +\log n + \log\cR(\wb_0) + \log\log(1/\rho)}{\log t}\Bigg),\\
             &\Bigg|\min_{i\in[n]}\frac{\la\wb_{t}^{\log}, y_i\cdot\xb_i\ra}{\|\wb_t^{\log}\|_\infty}-\gamma\Bigg|\leq O\Bigg(\frac{d + n\log d+n \cR(\wb_0) + \log\log(1/\rho)}{\log t}\Bigg).
        \end{align*}
    \end{itemize}
\end{corollary}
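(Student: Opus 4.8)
The plan is to derive Corollary~\ref{crlry:margin_rate} from Theorem~\ref{thm:convergence_rate} by plugging in the specific schedule $\eta_t = (t+2)^{-a}$ and carefully estimating each of the three types of sums appearing in the general bounds: $\sum_{\tau=0}^{t_0-1}\eta_\tau$ (a finite constant), $d\sum_{\tau=t_0}^{t-1}\eta_\tau^{3/2}$, and (for the logistic loss) $\sum_{\tau=t_0}^{t-1}\eta_\tau e^{-\frac{\gamma}{4}\sum_{\tau'=t_0}^{\tau-1}\eta_{\tau'}}$, all divided by $\sum_{\tau=0}^{t-1}\eta_\tau$. First I would record the elementary integral comparisons: for $0<b<1$, $\sum_{\tau=0}^{t-1}(\tau+2)^{-b} = \Theta\big(t^{1-b}\big)$, while for $b=1$ it is $\Theta(\log t)$, and for $b>1$ it converges to a constant. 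Applying this with $b=a$ gives the denominator $\sum_{\tau=0}^{t-1}\eta_\tau = \Theta(t^{1-a})$ for $a<1$ and $\Theta(\log t)$ for $a=1$. Applying it with $b=3a/2$ gives $\sum_{\tau=t_0}^{t-1}\eta_\tau^{3/2} = \Theta(t^{1-3a/2})$ when $a<2/3$, $\Theta(\log t)$ when $a=2/3$, and $\Theta(1)$ when $a>2/3$. Dividing by the denominator yields $\Theta(t^{-a/2})$, $\Theta(t^{-1/3}\log t)$, $\Theta(t^{a-1})$, and (for $a=1$) $\Theta(1/\log t)$ respectively, matching the case split in the corollary.

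Next I would handle the extra exponential-decay sum for the logistic loss. Since $\sum_{\tau'=t_0}^{\tau-1}\eta_{\tau'} \geq c(\tau^{1-a} - t_0^{1-a})$ for some constant $c$ depending on $a$ (again by integral comparison, for $a<1$), the summand $\eta_\tau e^{-\frac{\gamma}{4}\sum_{\tau'=t_0}^{\tau-1}\eta_{\tau'}}$ is bounded by $(\tau+2)^{-a}\exp(-c'\tau^{1-a})$, which is summable over $\tau$; hence $\sum_{\tau=t_0}^{t-1}\eta_\tau e^{-\frac{\gamma}{4}\sum_{\tau'=t_0}^{\tau-1}\eta_{\tau'}} = O(1)$ uniformly in $t$ when $a<1$, so this term contributes at most $O(1/\sum_{\tau=0}^{t-1}\eta_\tau)$, which is dominated by (or of the same order as) the $d\sum\eta_\tau^{3/2}$ term in every case with $a<1$. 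For $a=1$ one uses $\sum_{\tau'=t_0}^{\tau-1}\eta_{\tau'} \geq c\log(\tau/t_0)$, so the summand is $O(\tau^{-1}\cdot\tau^{-c\gamma/4})$; if $c\gamma/4$ is not bounded below by a constant this sum could grow like a small power of $t$, but since the denominator is only $\log t$, one checks it still contributes $O(1/\log t)$ after accounting for the fact that $t_0$ absorbs the relevant constants — this is the one place where the dependence of $t_0$ on $\gamma$ matters. The constant-order term $\sum_{\tau=0}^{t_0-1}\eta_\tau$ divided by the denominator contributes $O(t_0/t^{1-a})$ or $O(t_0/\log t)$; using the bound $t_0 = \poly[n,d,(1-\beta_1)^{-1},(1-\beta_2)^{-1},\gamma^{-1},B,\log(1/\rho),\cR(\wb_0)]$ from the discussion after Theorem~\ref{thm:convergence_rate}, this is subsumed into the stated big-$O$ for $t$ large enough, and the explicit $\log n$, $\log\cR(\wb_0)$, $[\log(1/\rho)]^{1-a}$, etc., in the corollary come from tracking which polynomial factors in $t_0$ sit in the numerator versus being exponentiated away.

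The main obstacle I anticipate is not the integral estimates themselves but bookkeeping the precise polynomial/logarithmic dependence on $n$, $d$, $\cR(\wb_0)$, and $\rho$ that appears explicitly in the corollary's numerators — these must be extracted from the (unstated here) explicit form of $t_0$ and from the $d\sum\eta_\tau^{3/2}$ and $\sum_{\tau=0}^{t_0-1}\eta_\tau$ terms, and the different exponents (e.g. $nd^{2(1-a)/a}$ in the $2/3<a<1$ logistic case versus $nd$ in the $a=2/3$ case) signal that $t_0$ itself depends on $a$ in a way that changes which regime dominates. Concretely, balancing $t_0 \asymp \poly(d)$ against the threshold beyond which the asymptotic rate kicks in produces the $d^{2(1-a)/a}$ factor, and I would need to carry through that optimization for each sub-case. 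Everything else is a routine application of Assumption~\ref{assumption:eta} (verified for this schedule in Lemma~\ref{lemma:eta}) to legitimize invoking Theorem~\ref{thm:convergence_rate}, followed by the sum estimates above.
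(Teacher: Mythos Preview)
Your approach matches the paper's: invoke Theorem~\ref{thm:convergence_rate} (legitimized for this schedule by Lemma~\ref{lemma:eta}), estimate the three sums by integral comparison, and trace the explicit form of $t_0$ to recover the numerators. Two small corrections will be needed when you carry it out. First, $\sum_{\tau=0}^{t_0-1}\eta_\tau=\Theta(t_0^{1-a})$ for $a<1$ and $\Theta(\log t_0)$ for $a=1$, not $\Theta(t_0)$ as you wrote; it is precisely this $(1-a)$-th power, applied to the explicit bound on $t_0$ (which the paper writes out as a sum of terms such as $d^{2/a}$, $[\log n]^{1/(1-a)}$, and $\log(1/\rho)$ in the exponential case, and $n^{1/(1-a)}d^{2/a}$, $n^{1/(1-a)}[\cR(\wb_0)]^{1/(1-a)}$, $\log(1/\rho)$ in the logistic case), that produces the displayed factors $d^{2(1-a)/a}$, $\log n$, $[\log(1/\rho)]^{1-a}$, and so on. Second, for $a<2/3$ the paper obtains the clean $O(d/t^{a/2})$ bound by a device you did not mention: since the corollary only asserts existence of \emph{some} $t_0$, one replaces $t_0$ by a larger $t_0'$ chosen so that $d\cdot(t_0')^{1-3a/2}$ already dominates all the constant contributions from $\sum_{\tau<t_0}\eta_\tau$; then for $t\geq t_0'$ the growing term $d\cdot t^{1-3a/2}$ dominates the numerator and the ratio simplifies. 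Finally, your hesitation about the $a=1$ logistic exponential-decay sum is unnecessary: $\gamma>0$ is a fixed problem constant, so $\sum_{\tau'=t_0}^{\tau-1}(\tau'+2)^{-1}\geq\log\big((\tau+1)/(t_0+1)\big)$ gives summand $\leq(t_0+1)^{\gamma/4}(\tau+1)^{-1-\gamma/4}$, and the tail $\sum_{\tau\geq t_0}(\tau+1)^{-1-\gamma/4}=O\big((t_0+1)^{-\gamma/4}\big)$ cancels the prefactor, yielding an $O(1)$ bound independent of $t_0$.
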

Corollary~\ref{crlry:margin_rate} comprehensively presents the convergence rate of the $\ell_\infty$-margin for different learning rates. It also indicates that the margin convergence rates for $\ell_{\exp}$ and $\ell_{\log}$ are of the same order of $t$. Notably, for $a < 1$, the normalized $\ell_\infty$-margin converges in polynomial time. This clearly distinguishes Adam from (stochastic) gradient descent with/without momentum, for which the normalized $\ell_2$-margin converges at a speed $O(\log\log t/\log t)$ \citep{DBLP:journals/jmlr/SoudryHNGS18, ji2019gradient, NEURIPS2022_ab3f6bbe}. We would also like to remark that, although Corollary~\ref{crlry:margin_rate} seemingly indicates that $\eta_t = (t+2)^{-2/3}$ is the learning rate schedule with the fastest convergence rate, it does not mean that $\eta_t = (t+2)^{-2/3}$ always converge faster than the other learning rate schedules in all learning tasks. The bounds in Corollary~\ref{crlry:margin_rate} are derived under the worst cases, and in practice, we can frequently observe that the margins all converge faster than the bounds in the corollary.

\section{Experiments}\label{section:experiments}
In this section, we conduct numerical experiments to verify our theoretical conclusions. We set the sample size $n = 50$, and dimension $d = 50$. Then the data set $\{(\xb_i,y_i)\}$ are generated as follows:
\begin{enumerate}[leftmargin = *]
    \item $\xb_i$, $i\in [n]$ are independently generated from $N(\mathbf{0},\Ib)$.
    \item $y_i$, $i\in [n]$ are independently generated from as $+1$ or $-1$ with equal probability.
\end{enumerate}
Note that for data sets generated following the procedure above, Assumption~\ref{assumption:linear_separable_margin} almost surely holds. We can also apply standard convex optimization to calculate the maximum $\ell_\infty$-margin $\gamma$. In order to make a clearer comparison between Adam and GD, we generate $10$ independent sets of data, and we select the dataset with the most significant difference in the directions of the maximum $\ell_2$-margin solution and maximum $\ell_\infty$-margin solution. We then run the experiments on this selected data set. Throughout our experiments, for gradient descent with momentum, we set the momentum parameter as $\beta_1 = 0.9$, and for Adam, we set $\beta_1 = 0.9$, $\beta_2 = 0.99$. All these hyper-parameter setups are common in practice. 
All optimization algorithms are initialized with standard Gaussian distribution, and are run for $10^6$ iterations.


We first run GD, GDM, Adam without the stability constant, and Adam with stability constant $\epsilon = 10^{-8}$ to train a linear model minimizing the logistic loss, and compare their normalized $\ell_\infty$-margin and normalized $\ell_2$-margin. The results are given in Figure~\ref{fig:normalized_margin}. We can see that the normalized $\ell_\infty$-margins of Adam, both with and without $\epsilon$, converge to the maximum $\ell_\infty$-margin, whereas the normalized $\ell_\infty$-margins of GD and GDM do not. In contrast, the normalized $\ell_2$-margins of GD and GDM converge to the maximum $\ell_2$-margin, while the $\ell_2$-margins of Adam, both with and without $\epsilon$, do not. By comparing the curves of Adam with and without $\epsilon$, we find that they behave similarly and their convergence remains highly stable. This justifies our theoretical setting where we ignore the stability constant in Adam, and demonstrate that our maximum $\ell_\infty$-margin implicit bias result derived without $\epsilon$ characterizes the practical behaviour of Adam more accurately compared with the maximum $\ell_2$-margin 
result for Adam with $\epsilon$ in \citet{NEURIPS2022_ab3f6bbe}.


We also run a set of experiments to demonstrate the polynomial time convergence rate of the $\ell_\infty$-margin. We run experiments on Adam with learning rates $\eta_t= \Theta (t^{-a})$ for $a\in \{0.3, 0.5, 0.7, 1\}$, and report the log-log plots in Figure~\ref{fig:normalized_margin_gap}, where we perform the experiments for Adam with/without the stability constant separately.  In the log-log plot, we observe that after a certain number of iterations, curves for $a < 1$ almost appear as straight lines, suggesting that the normalized $\ell_\infty$-margin converges in polynomial time for $a<1$, while the curve for $a=1$ exhibits logarithmic behavior, indicating the normalized $\ell_\infty$-margin converges logarithmically in $t$ for $a=1$.
Similarly to the previous observations, there is still no significant distinction between Adam with and without $\epsilon$, further demonstrating that our theoretical setting, which disregards $\epsilon$, is reasonable. We also note that in Figure~\ref{fig:normalized_margin_gap}, the margin achieved by Adam with $\eta_t = \Theta(t^{-0.3})$ converges the fastest. However, as we have commented in Section~\ref{section:main_result}, different learning rate schedules may perform differently on different data sets, and it is not necessarily true that $\eta_t = \Theta(t^{-0.3})$ is always the best learning rate schedule.

\begin{figure}[ht!]
	\begin{center}
 		\hspace{-0.2in}
			\subfigure[normalized $\ell_\infty$-margin]{\includegraphics[width=0.49\textwidth]{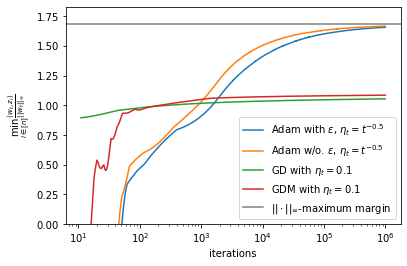}\label{subfig:l_inf}}
			\subfigure[normalized $\ell_2$-margin]{\includegraphics[width=0.49\textwidth]{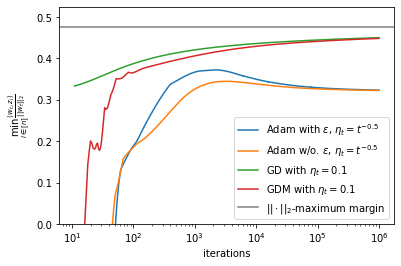}\label{subfig:l_2}}
	\end{center}
	\vskip -12pt
        \caption{Normalized $\ell_\infty$-margins and $\ell_2$-margins achieved by GD, GDM, and Adam with/without the stability constant $\epsilon$ during training. (a) gives the results of normalized $\ell_\infty$-margins, while (b) shows the results of normalized $\ell_2$-margins.}
	\label{fig:normalized_margin}
\end{figure}

\begin{figure}[ht!]
	\begin{center}
 		\hspace{-0.2in}
			\subfigure[normalized $\ell_\infty$-margin gap with $\epsilon$]{\includegraphics[width=0.49\textwidth]{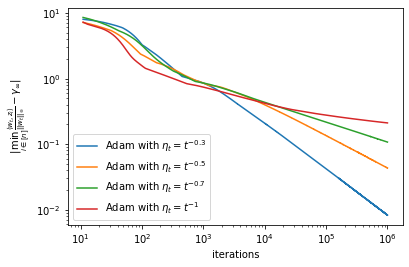}\label{subfig:margin_gap_w}}
			\subfigure[normalized $\ell_\infty$-margin gap without $\epsilon$]{\includegraphics[width=0.49\textwidth]{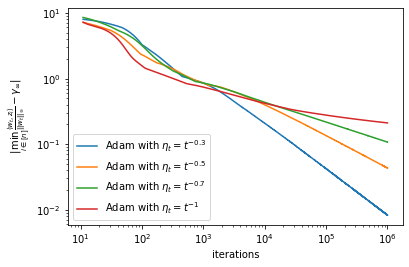}\label{subfig:margin_gap_w.o.}}
	\end{center}
	\vskip -12pt
	\caption{Log-log plots of the normalized $\ell_\infty$-margin gaps $|\min_{i\in[n]}\la\wb_{t}, y_i\cdot\xb_i\ra/ \|\wb_t\|_\infty - \gamma |$ versus training iterations. (a) presents the results for Adam with the stability constant $\epsilon$, and (b) presents the results for Adam without the stability constant $\epsilon$.}
    \label{fig:normalized_margin_gap}
\end{figure}

\section{Proof Sketch for  Theorem~\ref{thm:convergence_rate}}\label{section:proof_overview}
In this section, we explain how we establish the convergence of the $\ell_{\infty}$-margin of linear models trained by Adam, and provide the sketch proof of Theorem~\ref{thm:convergence_rate}. For simplicity, here we focus on the case $\ell=\ell_{\exp}$. The proof for $\ell=\ell_{\log}$ is almost the same. 

We first introduce several notations. Define
\begin{align*}
    \cG(\wb) = -\frac{1}{n}\sum_{i=1}^n \ell'(\la \wb, y_i\cdot \xb_i \ra).
\end{align*}
Then for $\ell\in \{\ell_{\exp},\ell_{\log}\}$, it is clear that $\cG(\wb) >0$ for all $\wb\in \RR^d$.
In the following, we will show that $\cG(\wb)$ plays a key role in the convergence and implicit bias analysis. 

\noindent\textbf{Step 1. Accurate characterizations of the first and second moments.} Adam algorithm is defined based on the first and second moments $\mb_t$ and $\vb_t$, which are calculated as exponential moving averages of the historical gradients and squared gradients respectively. A key challenge in studying Adam is to accurately characterize each entry of $\mb_t$ and $\vb_t$ throughout training. We present the following lemma.
\begin{lemma}\label{lemma:difference_momentum_gradient}
Under the same condition in Theorem~\ref{thm:convergence_rate}, there exists $t_1=t_1(\beta_1, \beta_2, B)$ such that
\begin{align*}
    &\big|\mb_t[k] - (1-\beta_1^{t+1})\cdot\nabla \cR(\wb_t)[k]\big| \leq c_m \eta_t \cG(\wb_t),\\
    &\Big|\sqrt{\vb_t[k]} - \sqrt{1-\beta_2^{t+1}}\cdot\big|\nabla \cR(\wb_t)[k]\big|\Big| \leq c_v \sqrt{\eta_t} \cG(\wb_t)
\end{align*}
for all $t > t_1$ and $k \in [d]$, where $c_m$ and $c_v$ are constants that only depend on $\beta_1$, $\beta_2$ and $B$.
\end{lemma}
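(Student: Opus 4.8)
The plan is to unroll the two exponential moving averages and reduce both estimates to a single ``gradient stability over a sliding window'' bound, which Assumption~\ref{assumption:eta} is designed to control. Since $\mb_{-1}=\vb_{-1}=\mathbf 0$, iterating \eqref{eq:mt_update}--\eqref{eq:vt_update} gives $\mb_t[k] = (1-\beta_1)\sum_{\tau=0}^t\beta_1^{t-\tau}\nabla\cR(\wb_\tau)[k]$ and $\vb_t[k] = (1-\beta_2)\sum_{\tau=0}^t\beta_2^{t-\tau}\nabla\cR(\wb_\tau)[k]^2$, while $(1-\beta_1)\sum_{\tau=0}^t\beta_1^{t-\tau}=1-\beta_1^{t+1}$ and likewise for $\beta_2$. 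Subtracting the ``current-gradient'' terms yields the exact identities
\[
\mb_t[k] - (1-\beta_1^{t+1})\nabla\cR(\wb_t)[k] = (1-\beta_1)\sum_{\tau=0}^t\beta_1^{t-\tau}\big(\nabla\cR(\wb_\tau)[k]-\nabla\cR(\wb_t)[k]\big),
\]
and the analogous one for $\vb_t[k]-(1-\beta_2^{t+1})\nabla\cR(\wb_t)[k]^2$ with $\nabla\cR(\wb_\tau)[k]^2-\nabla\cR(\wb_t)[k]^2$ inside. So everything reduces to bounding how much the $k$-th gradient entry, and its square, can drift between step $\tau$ and step $t$, down-weighted by the geometric factors $\beta_1^{t-\tau}$, $\beta_2^{t-\tau}$.

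\noindent\textbf{Step-size bound.} To control the drift I first establish that the Adam update is uniformly bounded. For $\ell=\ell_{\exp}$ we have $\nabla\cR(\wb)=-\tfrac1n\sum_i e^{-\la\wb,y_i\xb_i\ra}y_i\xb_i$, so $|\nabla\cR(\wb)[k]|\le B\,\cG(\wb)$. More importantly, applying Cauchy--Schwarz with weights $\beta_1^{t-\tau}$ and then using $\beta_1\le\beta_2$,
\[
\mb_t[k]^2 \le (1-\beta_1)^2\Big(\textstyle\sum_{\tau}\beta_1^{t-\tau}\Big)\Big(\textstyle\sum_{\tau}\beta_1^{t-\tau}\nabla\cR(\wb_\tau)[k]^2\Big) \le \frac{(1-\beta_1)}{(1-\beta_2)}\,\vb_t[k],
\]
where Assumption~\ref{assumption:w0} guarantees $\vb_t[k]>0$ at every finite $t$, so the ratio is well-defined. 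Hence $|\mb_t[k]/\sqrt{\vb_t[k]}|\le\sqrt{(1-\beta_1)/(1-\beta_2)}\le1$, and therefore $\|\wb_{s+1}-\wb_s\|_\infty\le\eta_s$ for every $s$, which gives $\|\wb_t-\wb_\tau\|_\infty\le S_{\tau,t}:=\sum_{s=\tau}^{t-1}\eta_s$ for all $\tau\le t$.

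\noindent\textbf{Gradient drift and Assumption~\ref{assumption:eta}.} Using $|\la\wb_t-\wb_\tau,y_i\xb_i\ra|\le B\,S_{\tau,t}$, the identity $e^{-\la\wb_\tau,y_i\xb_i\ra}=e^{-\la\wb_t,y_i\xb_i\ra}e^{\la\wb_t-\wb_\tau,y_i\xb_i\ra}$, and $|e^x-1|\le e^{|x|}-1$, one gets $\cG(\wb_\tau)\le e^{B S_{\tau,t}}\cG(\wb_t)$ and
\[
\big|\nabla\cR(\wb_\tau)[k]-\nabla\cR(\wb_t)[k]\big| \le B\,\cG(\wb_t)\big(e^{B S_{\tau,t}}-1\big);
\]
for the squared entries, factor $a^2-b^2=(a-b)(a+b)$ and bound $|\nabla\cR(\wb_\tau)[k]|+|\nabla\cR(\wb_t)[k]|\le B\cG(\wb_t)(1+e^{B S_{\tau,t}})$ to obtain $\big|\nabla\cR(\wb_\tau)[k]^2-\nabla\cR(\wb_t)[k]^2\big|\le B^2\cG(\wb_t)^2\big(e^{2B S_{\tau,t}}-1\big)$. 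Plugging these into the two identities and substituting $j=t-\tau$ (so $S_{t-j,t}=\sum_{\tau'=1}^{j}\eta_{t-\tau'}$), the $\mb_t$ error is at most $(1-\beta_1)B\,\cG(\wb_t)\sum_{j=0}^t\beta_1^{j}\big(e^{B\sum_{\tau'=1}^{j}\eta_{t-\tau'}}-1\big)$, which is exactly the quantity bounded in Assumption~\ref{assumption:eta} with $\beta=\beta_1$, $c_1=B$; hence it is $\le c_m\eta_t\cG(\wb_t)$ for $t\ge t_1$ with $c_m=(1-\beta_1)Bc_2$. The same with $\beta=\beta_2$, $c_1=2B$ gives $|\vb_t[k]-(1-\beta_2^{t+1})\nabla\cR(\wb_t)[k]^2|\le C\eta_t\cG(\wb_t)^2$. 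Finally, using $|\sqrt a-\sqrt b|\le\sqrt{|a-b|}$ (valid since $(\sqrt a+\sqrt b)^2\ge|a-b|$) with $a=\vb_t[k]$, $b=(1-\beta_2^{t+1})\nabla\cR(\wb_t)[k]^2$ yields the claimed $c_v\sqrt{\eta_t}\cG(\wb_t)$ bound with $c_v=\sqrt C$, enlarging $t_1$ if needed. The logistic case is identical after replacing $e^{-z}$ by $-\ell_{\log}'(z)=1/(1+e^z)$ and checking the monotonicity inequality $-\ell_{\log}'(\la\wb_\tau,\cdot\ra)\le e^{B S_{\tau,t}}\,(-\ell_{\log}'(\la\wb_t,\cdot\ra))$.

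\noindent\textbf{Main obstacle.} The delicate point is that $S_{\tau,t}$ blows up as $\tau$ ranges over early iterates (because $\sum_t\eta_t=\infty$), so the crude bound $\sup_\tau(e^{B S_{\tau,t}}-1)$ is useless; the whole argument rests on (i) the geometric weights $\beta_1^{t-\tau},\beta_2^{t-\tau}$ suppressing far-back iterates, and (ii) Assumption~\ref{assumption:eta} being precisely the statement that this geometric-versus-exponential trade-off collapses to an $O(\eta_t)$ term. Making $S_{\tau,t}$ controllable in the first place is exactly what the uniform step-size bound $|\mb_t[k]/\sqrt{\vb_t[k]}|\le1$ buys us, and this is the one place where the hypothesis $\beta_1\le\beta_2$ is essential.
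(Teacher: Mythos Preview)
Your approach is correct and matches the paper's proof closely: unroll the EMAs, control the gradient drift via the uniform step-size bound together with Assumption~\ref{assumption:eta}, then pass to the square root via $|\sqrt a-\sqrt b|\le\sqrt{|a-b|}$. One slip to fix: since $\beta_1\le\beta_2$ gives $1-\beta_1\ge1-\beta_2$, your constant $\sqrt{(1-\beta_1)/(1-\beta_2)}$ is $\ge1$, not $\le1$. This does no damage---it is still a finite constant depending only on $\beta_1,\beta_2$---so the only change is that you must take $c_1$ equal to this constant times $B$ (and twice that for the squared-gradient sum) when invoking Assumption~\ref{assumption:eta}, and absorb the constant into $c_m,c_v$. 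The paper reaches the same point via a slightly different Cauchy--Schwarz splitting (Lemma~\ref{lemma:inf_norm_upper_boundI}, yielding the constant $\alpha=\sqrt{\beta_2(1-\beta_1)^2/[(1-\beta_2)(\beta_2-\beta_1^2)]}$), and bounds the squared-gradient drift through a product-ratio inequality (Lemma~\ref{lemma:l_propsIII}) rather than your $a^2-b^2=(a-b)(a+b)$ factorization; the two routes are equivalent.
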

Since $\eta_t$, $\beta_1^{t+1}$ and $\beta_2^{t+1}$ all decrease to zero as $t$ increases, Lemma~\ref{lemma:difference_momentum_gradient} implies that after a sufficient number of iterations, the entries of $\mb_t$ and $\vb_t$ will be close to the corresponding entries of $\nabla \cR(\wb_t)$ and $|\nabla \cR(\wb_t)|$ respectively. Notably, the term $\cG(\wb_t)$ also appears in the bounds. In fact, deriving such bounds with the factor $\cG(\wb_t)$ is essential to enable our implicit bias analysis:  when the algorithm converges, by definition, $\cG(\wb_t)$ will also decrease to zero, which implies that the bounds with the factor $\cG(\wb_t)$ are strictly tighter than the bounds without $\cG(\wb_t)$. Lemma~\ref{lemma:difference_momentum_gradient} is one of our key technical contributions.


\noindent\textbf{Step 2. $\cR(\wb_t)$ starts to decrease after a fixed number of iterations.} Based on Lemma~\ref{lemma:difference_momentum_gradient}, we can analyze the convergence of $\cR(\wb_t)$. Specifically, we can show that, after a fixed number of iterations, the training loss function will start to decrease. This result is summarized in the following lemma. 
\begin{lemma}\label{lemma:smoothness}
    Under the same condition in Theorem~\ref{thm:convergence_rate}, there exist $t_1 = t_1(\beta_1,\beta_2, B)$ such that for all $t>t_1$, it holds that
    \begin{align*}
        \cR(\wb_{t+1}) 
        &\leq \cR(\wb_{t})- \eta_t \gamma\cdot \Big(1- C_1 \beta_1^{t/2} - C_2 d \cdot \big( \eta_t^{\frac{1}{2}} +  \eta_t\big) \Big) \cdot \cG(\wb_t), 
    \end{align*}
    where $C_1,C_2$ only depend on $\beta_1,\beta_2,B$.
\end{lemma}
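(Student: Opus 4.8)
The plan is to perform a descent-lemma-style one-step analysis of $\cR(\wb_t)$ along the Adam update $\wb_{t+1} = \wb_t - \eta_t \mb_t / \sqrt{\vb_t}$, using the characterizations of $\mb_t$ and $\sqrt{\vb_t}$ from Lemma~\ref{lemma:difference_momentum_gradient} to show that the Adam direction $\mb_t/\sqrt{\vb_t}$ is, up to vanishing errors, the sign vector $\mathrm{sign}(\nabla\cR(\wb_t))$, which is exactly the steepest-descent direction for the $\ell_\infty$ geometry. First I would Taylor-expand: since $\cR$ is twice differentiable with a Hessian bounded in terms of $B$ and $\cG(\wb_t)$ (note $\cR(\wb) = \frac1n\sum_i \ell(\langle \wb, y_i\xb_i\rangle)$, so $\nabla^2\cR(\wb) = \frac1n\sum_i \ell''(\cdot)\, \xb_i\xb_i^\top$ and $|\ell''|\le |\ell'|$ for both losses, giving $\|\nabla^2\cR(\wb)\| \lesssim B^2 \cG(\wb)$ locally), I would write
\begin{align*}
    \cR(\wb_{t+1}) \leq \cR(\wb_t) - \eta_t \Big\langle \nabla\cR(\wb_t), \frac{\mb_t}{\sqrt{\vb_t}}\Big\rangle + \frac{\eta_t^2}{2}\, L_t\, \Big\|\frac{\mb_t}{\sqrt{\vb_t}}\Big\|_2^2,
\end{align*}
where $L_t$ is the local Hessian bound. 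The second-order term is controlled because each entry of $\mb_t/\sqrt{\vb_t}$ is $O(1)$ (again by Lemma~\ref{lemma:difference_momentum_gradient}: numerator and denominator are both close to $|\nabla\cR(\wb_t)[k]|$ up to $O(\eta_t\cG)$ and $O(\sqrt{\eta_t}\cG)$ corrections), so $\|\mb_t/\sqrt{\vb_t}\|_2^2 \le O(d)$, and combined with $L_t \lesssim \cG(\wb_t)$ this term is at most $O(\eta_t^2 d\, \cG(\wb_t))$, matching the $\eta_t \cdot \eta_t$ contribution in the bracket.

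The core of the argument is the first-order term. I would show entrywise that $\frac{\mb_t[k]}{\sqrt{\vb_t[k]}} = \mathrm{sign}(\nabla\cR(\wb_t)[k]) + \delta_{t,k}$ with $|\delta_{t,k}| \le C(\beta_1^{t/2} + \sqrt{\eta_t})$. To see this, write $\mb_t[k] = (1-\beta_1^{t+1})\nabla\cR(\wb_t)[k] + r_{m}$ with $|r_m|\le c_m \eta_t\cG$ and $\sqrt{\vb_t[k]} = \sqrt{1-\beta_2^{t+1}}|\nabla\cR(\wb_t)[k]| + r_v$ with $|r_v|\le c_v\sqrt{\eta_t}\cG$; then divide, using that $|\nabla\cR(\wb_t)[k]| \gtrsim \cG(\wb_t)$ — this lower bound is a crucial input: since $\nabla\cR(\wb_t)[k] = -\frac1n\sum_i \ell'(\cdot)\, y_i\xb_i[k]$ and $\cG(\wb_t) = -\frac1n\sum_i\ell'(\cdot)$, one has $|\nabla\cR(\wb_t)[k]| \le \max_i|\xb_i[k]| \cdot \cG(\wb_t) \le B\,\cG(\wb_t)$ trivially, but for the division to be safe we need a matching lower bound $|\nabla\cR(\wb_t)[k]| \ge c\,\cG(\wb_t)$ for every $k$, which should be established (presumably in a preceding lemma not shown, or as part of this proof) from linear separability — essentially, the separating direction guarantees the gradient cannot be too small in the relevant coordinates near the max-$\ell_\infty$-margin direction. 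Granting this, $|\nabla\cR(\wb_t)[k]|$ cancels between numerator and denominator and the error terms become $O((\eta_t + \sqrt{\eta_t})/1) = O(\sqrt{\eta_t})$ after also absorbing $\beta_1^{t+1}, \beta_2^{t+1}$ (using $\beta_1\le\beta_2$) into an $O(\beta_1^{t/2})$ term via $\frac{1-\beta_1^{t+1}}{\sqrt{1-\beta_2^{t+1}}} = 1 + O(\beta_1^{t/2})$. Consequently
\begin{align*}
    \Big\langle \nabla\cR(\wb_t), \frac{\mb_t}{\sqrt{\vb_t}}\Big\rangle \geq \|\nabla\cR(\wb_t)\|_1 - O\big(\beta_1^{t/2} + \sqrt{\eta_t}\big)\sum_k |\nabla\cR(\wb_t)[k]| \geq \big(1 - O(\beta_1^{t/2} + \sqrt{\eta_t})\big)\|\nabla\cR(\wb_t)\|_1.
\end{align*}
Finally I would lower-bound $\|\nabla\cR(\wb_t)\|_1$ by $\gamma\,\cG(\wb_t)$: letting $\wb^\star$ achieve the max-$\ell_\infty$-margin with $\|\wb^\star\|_\infty\le 1$, $\|\nabla\cR(\wb_t)\|_1 \ge \langle -\nabla\cR(\wb_t), \wb^\star\rangle = \frac1n\sum_i(-\ell'(\langle\wb_t,y_i\xb_i\rangle))\langle\wb^\star, y_i\xb_i\rangle \ge \gamma\cdot\frac1n\sum_i(-\ell'(\cdot)) = \gamma\,\cG(\wb_t)$. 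Collecting the three pieces gives $\cR(\wb_{t+1}) \le \cR(\wb_t) - \eta_t\gamma(1 - C_1\beta_1^{t/2} - C_2 d(\sqrt{\eta_t}+\eta_t))\cG(\wb_t)$, and the threshold $t_1$ is chosen so that all the approximation steps from Lemma~\ref{lemma:difference_momentum_gradient} are in force.

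The main obstacle I anticipate is the entrywise lower bound $|\nabla\cR(\wb_t)[k]| \gtrsim \cG(\wb_t)$ needed to control the division in $\mb_t[k]/\sqrt{\vb_t[k]}$ — without it, the error term $r_v/\sqrt{\vb_t[k]}$ could blow up for coordinates where the gradient nearly vanishes. This is not automatic from separability alone (a coordinate could genuinely have a tiny gradient), so the real work is either to argue that such coordinates contribute negligibly to both $\langle\nabla\cR,\mb_t/\sqrt{\vb_t}\rangle$ and to $\|\nabla\cR\|_1$ (handling them separately rather than via the sign approximation), or to use a structural fact about the iterates $\wb_t$ that keeps them in a region where no coordinate's gradient collapses relative to $\cG$. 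Everything else — the Taylor expansion, the Hessian bound, the $\ell_1$–$\ell_\infty$ duality giving the $\gamma$ factor — is routine once Lemma~\ref{lemma:difference_momentum_gradient} is available.
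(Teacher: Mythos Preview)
Your overall scaffold matches the paper's proof exactly: second-order Taylor expansion, bound the Hessian term, show the first-order term is close to $\|\nabla\cR(\wb_t)\|_1$, then invoke the duality inequality $\|\nabla\cR(\wb_t)\|_1 \ge \gamma\,\cG(\wb_t)$. You also correctly identified the only real obstacle, and your first suggested fix is precisely what the paper does. There is \emph{no} coordinatewise lower bound $|\nabla\cR(\wb_t)[k]| \gtrsim \cG(\wb_t)$ available (and none is used), so your second suggested route is a dead end.

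Here is how the case split goes. Partition coordinates by whether $\sqrt{1-\beta_2^{t+1}}\,|\nabla\cR(\wb_t)[k]|$ dominates the error $2c_v\sqrt{\eta_t}\,\cG(\wb_t)$. On the ``large-gradient'' set your sign approximation is valid and gives the multiplicative $O(\beta_1^{t/2})$ error plus an additive $O((\sqrt{\eta_t}+\eta_t)\cG)$ per coordinate. On the ``small-gradient'' set you do \emph{not} try to approximate the ratio; instead observe that what enters the inner product is the product $\nabla\cR(\wb_t)[k]\cdot \mb_t[k]/\sqrt{\vb_t[k]}$, and use the crude lower bound $\sqrt{\vb_t[k]} \ge \sqrt{1-\beta_2}\,|\nabla\cR(\wb_t)[k]|$ (immediate from the definition of $\vb_t$) to cancel one factor of $|\nabla\cR(\wb_t)[k]|$ against the denominator. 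The remaining factor $|\nabla\cR(\wb_t)[k]|$ is at most $O(\sqrt{\eta_t})\,\cG(\wb_t)$ by the case hypothesis, so each such coordinate contributes $O(\sqrt{\eta_t})\,\cG(\wb_t)$. Summing over at most $d$ coordinates yields the additive error $O(d\sqrt{\eta_t})\,\cG(\wb_t)$ --- this is exactly where the factor $d$ in the lemma statement originates, and it is why your purely multiplicative bound $(1-O(\beta_1^{t/2}+\sqrt{\eta_t}))\|\nabla\cR(\wb_t)\|_1$ cannot be right as stated.

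Two smaller points. First, for the unconditional bound $\|\mb_t/\sqrt{\vb_t}\|_\infty \le \alpha$ (needed both for the Hessian term and inside the small-gradient case), do not go through Lemma~\ref{lemma:difference_momentum_gradient}; it follows directly from Cauchy--Schwarz applied to the moving-average representations of $\mb_t[k]$ and $\vb_t[k]$ (this is Lemma~\ref{lemma:inf_norm_upper_boundI} in the paper). Second, the Hessian has to be evaluated at an intermediate point $\wb_t + \zeta(\wb_{t+1}-\wb_t)$, so your local smoothness constant is controlled by $\cG$ at that point, not at $\wb_t$; the paper handles this via convexity of $\cG$ and the one-step Lipschitz estimate $\cG(\wb_{t+1})/\cG(\wb_t) \le e^{\alpha B\eta_t}$.
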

Note that by definition, $\cG(\wb) >0$ for all $\wb\in \RR^d$. Therefore, Lemma~\ref{lemma:smoothness} implies that $\cR(\wb_{t})$ starts to decrease after a fixed number of iterations, and gives a bound on the decreasing speed. We remark that the proof of Lemma~\ref{lemma:smoothness} is highly non-trivial. Although we have related $\mb_t$ and $\vb_t$ to the loss gradient $\nabla\cR(\wb_t)$ in Lemma~\ref{lemma:difference_momentum_gradient}, the fact that $\wb_{t+1}$ is updated according to the entry-wise ratio $\mb_t / \sqrt{\vb_t}$ still introduces challenges: under our problem setting, it is entirely possible that at a certain iteration, a certain entry of $\nabla\cR(\wb_t)$ will exactly equal zero. In this case, the results in Lemma~\ref{lemma:difference_momentum_gradient} can not directly lead to any conclusions about the ratio $\mb_t / \sqrt{\vb_t}$. In our proof, we implement a careful inequality that also takes the historical values of $\nabla\cR(\wb_t)$ into consideration. 

\noindent\textbf{Step 3. Lower bound for un-normalized margin.} The proof of the implicit bias towards maximum $\ell_\infty$-margin also relies on a tight analysis on the un-normalized margin $\min_{i\in [n]}\la\wb_t, y_i\cdot\xb_i\ra$ during training. We have the following lemma providing a lower bound on the un-normalized margin.
\begin{lemma}\label{lemma:margin_iterates}
Under the same condition in Theorem~\ref{thm:convergence_rate}, if there exists $t_0$ such that $\cR(w_t)\leq \frac{\log 2}{n}$ for all $t\geq t_0$, then it holds that
\begin{align*}
    \min_{i \in [n]}\la\wb_t, y_i\cdot\xb_i\ra \geq \gamma\sum_{\tau=t_0}^{t-1}\eta_\tau\cdot \frac{\cG(\wb_\tau)}{\cR(\wb_\tau)}-C_3d\Bigg(\sum_{\tau=t_0}^{t-1}\eta_\tau^{\frac{3}{2}}+\sum_{\tau=t_0}^{t-1}\eta_\tau^2\Bigg)-C_4
\end{align*}
for all $t\geq t_0$, where $C_3,C_4$ only depend on $\beta_1,\beta_2,B$.    
\end{lemma}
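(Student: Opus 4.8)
The plan is to prove Lemma~\ref{lemma:margin_iterates} by unrolling the Adam update $\wb_{t+1} = \wb_t - \eta_t \mb_t/\sqrt{\vb_t}$ and testing it against a fixed max-margin direction. Concretely, let $\ub \in \RR^d$ with $\|\ub\|_\infty \leq 1$ attain the maximum $\ell_\infty$-margin, so $\la \ub, y_i\xb_i\ra \geq \gamma$ for all $i$. Since $\min_{i}\la \wb_t, y_i\xb_i\ra$ is hard to control directly, I would instead lower bound it via a smooth surrogate: using $\cR(\wb_t) \leq \frac{\log 2}{n}$, standard arguments (as in \citet{DBLP:journals/jmlr/SoudryHNGS18}) give that $-\log(n\cR(\wb_t)) \leq \min_i \la \wb_t, y_i\xb_i\ra + \log n$ or something comparable; but more directly, I expect the cleanest route is to track the quantity $\la \wb_t, \ub\ra$, show it grows at rate roughly $\gamma \sum \eta_\tau \cG(\wb_\tau)/\cR(\wb_\tau)$, and then separately argue $\|\wb_t\|_\infty$ cannot be much larger than $\la\wb_t,\ub\ra/\gamma$ up to the stated error terms. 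Actually the statement bounds the un-normalized margin $\min_i \la\wb_t,y_i\xb_i\ra$ from below, so the key telescoping identity should be applied to $\min_i\la\wb_t, y_i\xb_i\ra$ itself or to $\la\wb_t,\ub\ra$ combined with a comparison.

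The core computation: from the update,
\begin{align*}
\la \wb_{t+1}, \ub\ra = \la \wb_{t_0}, \ub\ra - \sum_{\tau=t_0}^{t} \eta_\tau \Big\la \frac{\mb_\tau}{\sqrt{\vb_\tau}}, \ub\Big\ra.
\end{align*}
Now I would use Lemma~\ref{lemma:difference_momentum_gradient} to replace $\mb_\tau/\sqrt{\vb_\tau}$ by $\mathrm{sign}(-\nabla\cR(\wb_\tau))$ plus errors: when $\nabla\cR(\wb_\tau)[k]$ is bounded away from zero, $\mb_\tau[k]/\sqrt{\vb_\tau[k]} \approx \mathrm{sign}(\nabla\cR(\wb_\tau)[k]) \cdot \frac{1-\beta_1^{\tau+1}}{\sqrt{1-\beta_2^{\tau+1}}}$ up to relative error controlled by $\eta_\tau^{1/2}\cG(\wb_\tau)/|\nabla\cR(\wb_\tau)[k]|$. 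Then $-\la \mb_\tau/\sqrt{\vb_\tau}, \ub\ra \approx \la \mathrm{sign}(-\nabla\cR(\wb_\tau)), \ub\ra$, and since $-\nabla\cR(\wb_\tau) = \frac{1}{n}\sum_i \ell'(\cdot)\, y_i\xb_i$ points in a "good" direction, I need $\la \mathrm{sign}(-\nabla\cR(\wb_\tau)), \ub\ra \geq \gamma \cdot \cG(\wb_\tau)/\cR(\wb_\tau)$ — this is where the ratio $\cG/\cR$ enters: $\|\nabla\cR(\wb_\tau)\|_1 \leq B\cG(\wb_\tau)$ and $\la -\nabla\cR(\wb_\tau), \ub\ra \geq \gamma\cG(\wb_\tau)$, so $\la \mathrm{sign}(-\nabla\cR(\wb_\tau)),\ub\ra \geq \la-\nabla\cR(\wb_\tau),\ub\ra/\|\nabla\cR(\wb_\tau)\|_\infty$, and $\|\nabla\cR(\wb_\tau)\|_\infty$ should be comparable to $\cR(\wb_\tau)$ up to constants. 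Summing over $\tau$ gives the main term $\gamma\sum_{\tau=t_0}^{t-1}\eta_\tau \cG(\wb_\tau)/\cR(\wb_\tau)$, the $\eta_\tau^{1/2}\cG$ relative errors sum (after bounding $\cG \le $ const) to the $\sum \eta_\tau^{3/2}$ term with a factor $d$, the $\eta_\tau\cG$ errors give $\sum\eta_\tau^2$, and the initialization contributes the constant $C_4$.

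The main obstacle — and it is exactly the one flagged in Step 2 of the proof sketch — is handling coordinates $k$ where $\nabla\cR(\wb_\tau)[k]$ is zero or near-zero, since then Lemma~\ref{lemma:difference_momentum_gradient}'s bound on $\sqrt{\vb_\tau[k]}$ is vacuous as a relative statement and $\mb_\tau[k]/\sqrt{\vb_\tau[k]}$ can be an arbitrary sign. To handle this I would not try to control each bad coordinate's sign, but rather bound $|\la \mb_\tau/\sqrt{\vb_\tau}, \ub\ra - \la\mathrm{sign}(-\nabla\cR(\wb_\tau)),\ub\ra|$ by $\sum_k |\mb_\tau[k]/\sqrt{\vb_\tau[k]} - \mathrm{sign}(-\nabla\cR(\wb_\tau)[k])|$ and invoke a more refined estimate — using the historical values of $\nabla\cR$ inside $\mb_\tau,\vb_\tau$ — showing this sum is $O(d\,\eta_\tau^{1/2} + d\,\beta_1^{\tau/2})$ regardless of whether entries vanish; the $\beta_1^{\tau/2}$ piece sums to a constant absorbed into $C_4$ and the $\eta_\tau^{1/2}$ piece, multiplied by the extra $\eta_\tau$ from the step size, yields $d\sum\eta_\tau^{3/2}$. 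A secondary subtlety is ensuring $\|\nabla\cR(\wb_\tau)\|_\infty \geq c\,\cR(\wb_\tau)$ so that the denominator in $\la-\nabla\cR,\ub\ra/\|\nabla\cR\|_\infty$ yields $\cG/\cR$ rather than $\cG/\cR^{1+\delta}$; this follows because once $\cR(\wb_\tau) \leq \frac{\log 2}{n}$ all the logistic derivatives $|\ell'(\la\wb_\tau,y_i\xb_i\ra)|$ are $\Theta(\cR)$-sized and the data are in general position, or alternatively from a direct argument that $\|\nabla\cR(\wb_\tau)\|_\infty \geq \gamma\,\cG(\wb_\tau)/(\text{const})$ since $\nabla\cR$ has nonnegligible correlation with $\ub$.
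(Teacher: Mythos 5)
Your route is genuinely different from the paper's, and it has gaps that I do not think can be repaired in the form you propose. The paper never tests the iterates against a fixed max-margin comparator. It proves the margin bound entirely through the loss: by Lemma~\ref{lemma:smoothness} together with $\cG(\wb)\le\cR(\wb)$ (Lemma~\ref{lemma:l_propsI}), each step gives $\cR(\wb_{\tau+1})\le\cR(\wb_\tau)\exp\big(-\gamma\eta_\tau\,\cG(\wb_\tau)/\cR(\wb_\tau)+C_1\gamma\eta_\tau\beta_1^{\tau/2}+C_2d(\eta_\tau^{3/2}+\eta_\tau^2)\big)$; telescoping from $t_0$, using $\cR(\wb_{t_0})\le\tfrac{\log 2}{n}$, and then converting the loss bound back into a margin bound via $e^{-\min_i\la\wb_t,y_i\cdot\xb_i\ra}\le\tfrac{n}{\log 2}\cR(\wb_t)$ (Lemmas~\ref{lemma:l_propsIV} and \ref{lemma:l_propsV}, which apply because all margins are nonnegative once $\cR\le\tfrac{\log2}{n}$) and taking logarithms gives exactly the stated inequality, with the geometric $\beta_1$-sum absorbed into $C_4$. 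The relation you mention in passing and then discard ("$-\log(n\cR(\wb_t))\le\min_i\la\wb_t,y_i\cdot\xb_i\ra+\dots$") is in fact the heart of the argument: a lower bound on the un-normalized margin is obtained from an upper bound on the loss, and the factor $\cG/\cR$ in the main term enters through the descent inequality (via the duality bound $\|\nabla\cR\|_1\ge\gamma\,\cG$ of Lemma~\ref{lemma:fyine} paired with $\la\nabla\cR(\wb_t),\mb_t/\sqrt{\vb_t}\ra\approx\|\nabla\cR(\wb_t)\|_1$), not through any property of the sign vector against a comparator.

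Concretely: (i) a lower bound on $\la\wb_t,u\ra$ for a fixed max-margin $u$ does not lower bound $\min_i\la\wb_t,y_i\cdot\xb_i\ra$ --- $\wb_t$ can have a large component along $u$ while violating the margin on some example --- and the "comparison" step you allude to is never specified; the only available conversion from such a scalar to the margin is the loss-based one above. (ii) The inequality you rely on, $\la\mathrm{sign}(-\nabla\cR),u\ra\ge\la-\nabla\cR,u\ra/\|\nabla\cR\|_\infty$, is false as a general step: with $u=(1,-1)$ and $-\nabla\cR=(1,0.1)$ the left side is $0$ and the right side is $0.9$. Duality only gives $\|\nabla\cR\|_\infty\ge\gamma\cG/d$, so even the weaker route through $\|\nabla\cR\|_\infty$ would degrade the main term by a factor of $d$, which the lemma does not permit, and the claim $\|\nabla\cR\|_\infty\ge c\,\cR$ from "general position" is not an assumption of the theorem. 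Worse, the intermediate target itself is unattainable in general: $\la\mathrm{sign}(-\nabla\cR),u\ra\le d$ while $\gamma\,\cG/\cR$ can exceed $d$ (e.g.\ a single point $\zb=(B/d,\dots,B/d)$ with $B>d$ under exponential loss), so the growth rate $\gamma\sum\eta_\tau\cG/\cR$ for $\la\wb_t,u\ra$ cannot hold. (iii) Your fix for near-zero gradient coordinates --- a uniform bound $\sum_k\big|\mb_\tau[k]/\sqrt{\vb_\tau[k]}-\mathrm{sign}(-\nabla\cR(\wb_\tau)[k])\big|=O(d\eta_\tau^{1/2}+d\beta_1^{\tau/2})$ --- is also unavailable: when $\nabla\cR(\wb_\tau)[k]$ has just crossed zero, $\mb_\tau[k]/\sqrt{\vb_\tau[k]}$ can be of order one with the wrong sign, so the per-coordinate error is $\Theta(1)$, not $O(\sqrt{\eta_\tau})$. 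The paper's Lemma~\ref{lemma:difference_inner_product} escapes this precisely because such coordinates are weighted by $|\nabla\cR(\wb_\tau)[k]|\lesssim\sqrt{\eta_\tau}\,\cG(\wb_\tau)$ in the inner product with the gradient itself; that weighting is lost once you pair with a fixed comparator $u$.
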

 Note that this lower bound contains a negative term $-C_3d\big(\sum_{\tau=t_0}^{t-1}\eta_\tau^{3/2}+\sum_{\tau=t_0}^{t-1}\eta_\tau^2\big)$. Under our (mild) assumptions on the learning rates, it is entirely possible that $\sum_{\tau=t_0}^{\infty}\eta_\tau^{3/2}, \sum_{\tau=t_0}^{\infty}\eta_\tau^2 = +\infty$ and thus the negative term in the lower bound may go to $-\infty$. However, we can show that $ \lim_{t\rightarrow \infty} \cG(\wb_t) / \cR(\wb_t) = 1$ (in fact, for exponential loss, it is obvious that $\cG(\wb_t) / \cR(\wb_t) = 1$). Therefore, after a fixed number of iterations, the positive term in the lower bound will dominate, and Lemma~\ref{lemma:margin_iterates} gives a non-trivial bound. 
 The strength of this lemma lies in its applicability to very general  learning rates $\{\eta_t\}_{t=1}^\infty$.

\noindent\textbf{Step 4. An upper bound of $\|\wb_t\|_\infty$.}

In Lemma~\ref{lemma:margin_iterates}, we have obtained a lower bound of the un-normalized margin. However, to show the convergence of the $\ell_\infty$-normalized margin, we also need to establish a tight upper bound of $\|\wb_t\|_\infty$. We present this result in the following lemma, which is inspired by Lemma~4.2 in \citet{xie2024implicit}.
\begin{lemma}\label{lemma:inf_norm_upper_boundII}
Suppose that the same conditions in Theorem~\ref{thm:convergence_rate} hold. There exist $C_5, C_6$ that only depend on $\beta_1, \beta_2, B$, such that the following result hold: if there exists  $t_0 > \log(1/\rho)$ such that $\cR(w_t) \leq \frac{1}{\sqrt{B^2 + C_5\eta_{0}}}$ for all $t\geq t_0$, then $\|\wb_t\|_{\infty} \leq \sum_{\tau=t_0}^{t-1}\eta_\tau + C_6\sum_{\tau=0}^{t_0-1}\eta_\tau$
    for all $t > t_0$.
\end{lemma}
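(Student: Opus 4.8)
The plan is to control $\|\wb_t\|_\infty = \max_{k \in [d]} |\wb_t[k]|$ by tracking each coordinate $\wb_t[k]$ along the Adam trajectory and exploiting the near-sign-gradient behavior of the update $\wb_{t+1} = \wb_t - \eta_t \mb_t / \sqrt{\vb_t}$. First I would invoke Lemma~\ref{lemma:difference_momentum_gradient} to write, for $t$ large, $\mb_t[k] \approx (1-\beta_1^{t+1})\nabla\cR(\wb_t)[k]$ and $\sqrt{\vb_t[k]} \approx \sqrt{1-\beta_2^{t+1}}\,|\nabla\cR(\wb_t)[k]|$ up to additive errors of order $\eta_t\cG(\wb_t)$ and $\sqrt{\eta_t}\cG(\wb_t)$ respectively. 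Dividing, the ratio $\mb_t[k]/\sqrt{\vb_t[k]}$ is, up to controllable errors, equal to $\mathrm{sign}(\nabla\cR(\wb_t)[k])$ times a factor close to $1$; crucially the ``error'' in this ratio needs to be bounded relative to the magnitude of $\nabla\cR(\wb_t)[k]$, which can vanish, so I would handle the degenerate coordinates by either a separate case split or a perturbation argument akin to the one used for Lemma~\ref{lemma:smoothness}. The upshot is $|\wb_{t+1}[k] - \wb_t[k]| \le \eta_t(1 + \text{small})$, so that each coordinate moves by at most roughly $\eta_t$ per step, giving the crude bound $\|\wb_t\|_\infty \lesssim \sum_{\tau<t}\eta_\tau$. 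To get the sharper constant $\sum_{\tau=t_0}^{t-1}\eta_\tau + C_6\sum_{\tau=0}^{t_0-1}\eta_\tau$, I would be more careful for $\tau \ge t_0$: once $\cR(\wb_\tau)$ is below the threshold $1/\sqrt{B^2 + C_5\eta_0}$, the approximation errors in the ratio are uniformly at most, say, a factor $1$ correction (no extra multiplicative blow-up), so the per-step increment is genuinely $\le \eta_\tau$, while for the initial $\tau < t_0$ one simply absorbs the (bounded, since each step is at most $\eta_\tau \cdot (\text{const depending on }\beta_1,\beta_2,B)$ in sup-norm) contribution into $C_6\sum_{\tau=0}^{t_0-1}\eta_\tau$.

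Concretely, the argument proceeds in the following order. (i) Fix $t_0$ as in the hypothesis and let $t > t_0$; telescoping the update gives $\wb_t[k] = \wb_0[k] - \sum_{\tau=0}^{t-1}\eta_\tau \mb_\tau[k]/\sqrt{\vb_\tau[k]}$, hence $|\wb_t[k]| \le |\wb_0[k]| + \sum_{\tau=0}^{t-1}\eta_\tau\, |\mb_\tau[k]/\sqrt{\vb_\tau[k]}|$. (ii) For $\tau \ge t_0$, establish the pointwise bound $|\mb_\tau[k]/\sqrt{\vb_\tau[k]}| \le 1$; this is where the loss being small is used — it forces $\cG(\wb_\tau) \le \cR(\wb_\tau) \cdot (\text{const}) $ to be small enough that the $\eta_\tau\cG$ and $\sqrt{\eta_\tau}\cG$ errors in Lemma~\ref{lemma:difference_momentum_gradient} are dominated, after dividing by $|\nabla\cR(\wb_\tau)[k]|$ and using $1-\beta_1^{\tau+1} \le 1 \le 1/\sqrt{1-\beta_2^{\tau+1}}$ together with $\beta_1 \le \beta_2$, by $1$ (possibly needing the extra room from $\cR(\wb_\tau) \le 1/\sqrt{B^2 + C_5\eta_0}$ rather than just $\cR(\wb_\tau) \le \log 2 / n$). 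Here I expect to need to treat the coordinates with $\nabla\cR(\wb_\tau)[k] = 0$ (or very small) separately, reusing the ``historical-value'' inequality idea flagged after Lemma~\ref{lemma:smoothness}: if the current gradient entry is tiny, then $\mb_\tau[k]$ and $\vb_\tau[k]$ are both controlled by recent history, and the ratio is still $\le 1$. (iii) For $\tau < t_0$, bound $|\mb_\tau[k]/\sqrt{\vb_\tau[k]}|$ by an absolute constant $C_6$ depending only on $\beta_1, \beta_2, B$ — this uses Assumption~\ref{assumption:w0} ($\nabla\cR(\wb_0)[k]^2 \ge \rho$, and the requirement $t_0 > \log(1/\rho)$ ensures $\vb_\tau[k]$ stays bounded below in terms of the decaying contribution of the initial gradient, or more directly that $\beta_2^{\tau+1}\rho$ is not yet negligible) plus the fact that $\|\nabla\cR(\wb)\|_\infty \le B \cG(\wb) \le B$ uniformly. (iv) Combine: $|\wb_t[k]| \le |\wb_0[k]| + \sum_{\tau=t_0}^{t-1}\eta_\tau + C_6\sum_{\tau=0}^{t_0-1}\eta_\tau$; finally absorb the fixed term $\|\wb_0\|_\infty$ into $C_6 \sum_{\tau=0}^{t_0-1}\eta_\tau$ (possible since $\sum_{\tau=0}^{t_0-1}\eta_\tau \ge \eta_0 > 0$ is a positive constant once $t_0 \ge 1$, after enlarging $C_6$) to reach the stated form, taking the maximum over $k$.

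The main obstacle I anticipate is step (ii): showing the clean per-step bound $|\mb_\tau[k]/\sqrt{\vb_\tau[k]}| \le 1$ (or $\le 1 + o(1)$ summing to the stated bound) uniformly for $\tau \ge t_0$, because the additive error terms in Lemma~\ref{lemma:difference_momentum_gradient} are not obviously small \emph{relative to} $\sqrt{\vb_\tau[k]}$ when $|\nabla\cR(\wb_\tau)[k]|$ is small or zero. Resolving this requires a quantitative comparison between the numerator and denominator errors that uses $\beta_1 \le \beta_2$ in an essential way (so that the momentum smoothing of the signed gradient cannot outrun the smoothing of the squared gradient) and a lower bound on $\vb_\tau[k]$ coming from the recent history of the gradient — the same phenomenon that made Lemma~\ref{lemma:smoothness} delicate. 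I would model this part on the proof of Lemma~4.2 in \citet{xie2024implicit}, adapting it to the present moment-approximation framework and to the absence of the stability constant $\epsilon$; the key new input relative to that reference is that we have the $\cG(\wb_t)$-refined bounds of Lemma~\ref{lemma:difference_momentum_gradient}, which is precisely what lets the error terms be absorbed once $\cR(\wb_\tau)$ (hence $\cG(\wb_\tau)$) is below the stated threshold.
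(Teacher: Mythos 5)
Your step (ii) is the heart of the proposal, and it is based on a claim that is false in general: the per-step bound $\bigl|\mb_\tau[k]/\sqrt{\vb_\tau[k]}\bigr|\le 1$ does not follow from the loss being small, and indeed cannot hold uniformly when $\beta_1<\beta_2$. The errors in Lemma~\ref{lemma:difference_momentum_gradient} are additive terms of size $\eta_\tau\cG(\wb_\tau)$ and $\sqrt{\eta_\tau}\,\cG(\wb_\tau)$, while the gradient entry itself is at most of order $B\,\cG(\wb_\tau)$; after dividing, the scale $\cG(\wb_\tau)$ cancels, so smallness of $\cR(\wb_\tau)$ buys you no room when $|\nabla\cR(\wb_\tau)[k]|$ is itself of order $\sqrt{\eta_\tau}\,\cG(\wb_\tau)$ or smaller (e.g.\ near a sign change of that coordinate). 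In that regime the sharp uniform bound on the ratio is the constant $\alpha=\sqrt{\beta_2(1-\beta_1)^2/\bigl((1-\beta_2)(\beta_2-\beta_1^2)\bigr)}$ of Lemma~\ref{lemma:inf_norm_upper_boundI}, which exceeds $1$ for the practically relevant case $\beta_1<\beta_2$ (about $2.3$ for $\beta_1=0.9,\beta_2=0.99$): if a coordinate of the gradient grows over a few consecutive steps relative to its own recent history (which happens when it crosses zero, no matter how small the loss is), the ratio genuinely exceeds $1$ by a constant factor. Consequently any per-step argument can only give $\alpha\sum_{\tau\ge t_0}\eta_\tau$, not the leading coefficient $1$ demanded by the lemma, and even a bound of the form $1+O(\sqrt{\eta_\tau})$ per step is unavailable.

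The paper's proof is therefore not a refined per-step estimate but an amortized one: it telescopes from $t_0$, applies Cauchy--Schwarz to the whole window $[t_0,t-1]$ (Lemma~\ref{lemma:inf_norm_upper_boundIII}), rewrites $(1-\beta_2)\nabla\cR(\wb_{\tau-\tau'})[k]^2=\vb_{\tau-\tau'}[k]-\beta_2\vb_{\tau-\tau'-1}[k]$, and reduces the excess over $\sum_{\tau=t_0}^{t-1}\eta_\tau$ to correction terms involving $\log(\vb_\tau[k])$. The hypothesis $\cR(\wb_t)\le 1/\sqrt{B^2+C_5\eta_0}$ is used only to force $\vb_t[k]\le 1$, i.e.\ $\log(\vb_\tau[k])\le 0$, so that the log-terms with positive coefficients can be discarded; the surviving terms carry geometric weights $\beta_1^{\tau-t_0}$ and are bounded via the lower bound $\vb_\tau[k]\ge\beta_2^{\tau}(1-\beta_2)\rho$ from Assumption~\ref{assumption:w0}, with $t_0>\log(1/\rho)$ used at the very end to absorb the resulting $\eta_{t_0}(-\log\rho)$ term into $C_6\sum_{\tau=0}^{t_0-1}\eta_\tau$ (this is also different from the role you assign to these two hypotheses). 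Your steps (i), (iii) and (iv) are compatible with the paper's treatment of the pre-$t_0$ phase, but without replacing step (ii) by an argument of this amortized type (as in Lemma~4.2 of \citet{xie2024implicit}, which you cite but do not actually carry out), the proposal does not prove the stated bound.
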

Lemma~\ref{lemma:inf_norm_upper_boundII} gives an upper bound of $\|\wb_t\|_{\infty}$ which mainly depends on  $\sum_{\tau=t_0}^{t-1}\eta_\tau$. Note that Lemma~\ref{lemma:margin_iterates} also gives a lower bound of the un-normalized margin which mainly depends on $\sum_{\tau=t_0}^{t-1}\eta_\tau \cG(\wb_\tau)/\cR(\wb_\tau)$. These two lemmas will be combined to derive the convergence of the normalized margin.

\noindent\textbf{Step 5. Finalizing the proof.} Finally, based on the lemmas established in the previous steps, we can prove Theorem~\ref{thm:convergence_rate}. We also need the following utility lemma provided by \citet{DBLP:conf/iclr/Zou0LG23}.
\begin{lemma}[Lemma~A.2 in \citet{DBLP:conf/iclr/Zou0LG23}]\label{lemma:inf_norm_upper_boundI}
For Adam iterations defined in \eqref{eq:mt_update}-\eqref{eq:def_adam_wt_update} with $\beta_1\leq \beta_2$ and let $\alpha = \sqrt{\frac{\beta_2(1-\beta_1)^2}{(1-\beta_2)(\beta_2-\beta_1^2)}} $, then $\mb_t[k] \leq \alpha\cdot \sqrt{\vb_t[k]}$  for all $k \in [d]$.
\end{lemma}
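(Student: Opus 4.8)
The plan is to prove Lemma~\ref{lemma:inf_norm_upper_boundI} (the utility lemma attributed to \citet{DBLP:conf/iclr/Zou0LG23}) by an inductive argument on $t$, using the recursive definitions \eqref{eq:mt_update}--\eqref{eq:vt_update} of $\mb_t$ and $\vb_t$. Fix a coordinate $k\in[d]$ and write $g_s := \nabla\cR(\wb_s)[k]$; then $\mb_t[k] = (1-\beta_1)\sum_{s=0}^t \beta_1^{t-s} g_s$ and $\vb_t[k] = (1-\beta_2)\sum_{s=0}^t \beta_2^{t-s} g_s^2$. The claim $\mb_t[k]^2 \le \alpha^2\, \vb_t[k]$ with $\alpha^2 = \frac{\beta_2(1-\beta_1)^2}{(1-\beta_2)(\beta_2-\beta_1^2)}$ is then a purely algebraic statement about these two weighted sums, and the natural route is Cauchy--Schwarz together with a geometric-series bound.

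\medskip
\noindent\textbf{Key steps.} First, I would apply Cauchy--Schwarz to $\mb_t[k] = (1-\beta_1)\sum_{s=0}^t \beta_1^{t-s} g_s$ by splitting the summand as $\beta_1^{t-s} = \big(\beta_1^{t-s}/\beta_2^{t-s}\big)^{1/2}\cdot \big(\beta_1^{t-s}\beta_2^{t-s}/\beta_2^{t-s}\big)^{1/2}$, i.e. writing $\beta_1^{t-s} g_s = \rho^{(t-s)/2}\cdot \big(\beta_2^{(t-s)/2} g_s\big)$ where $\rho := \beta_1^2/\beta_2 \le \beta_1 < 1$ (using $\beta_1\le\beta_2$). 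Then
\begin{align*}
    \mb_t[k]^2 \le (1-\beta_1)^2 \Big(\sum_{s=0}^t \rho^{t-s}\Big)\Big(\sum_{s=0}^t \beta_2^{t-s} g_s^2\Big) \le \frac{(1-\beta_1)^2}{1-\rho}\cdot \frac{\vb_t[k]}{1-\beta_2}.
\end{align*}
Second, I would simplify $\frac{1}{1-\rho} = \frac{\beta_2}{\beta_2-\beta_1^2}$, so that the constant becomes exactly $\frac{(1-\beta_1)^2\beta_2}{(1-\beta_2)(\beta_2-\beta_1^2)} = \alpha^2$, giving $\mb_t[k]^2 \le \alpha^2\, \vb_t[k]$; taking square roots (all quantities nonnegative, noting $\mb_t[k]\le|\mb_t[k]|$) yields the claim. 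Since the bound holds for every $k$ and every $t$, the lemma follows. I should double-check the edge cases: the geometric sum $\sum_{s=0}^t\rho^{t-s} = \frac{1-\rho^{t+1}}{1-\rho} \le \frac{1}{1-\rho}$ is valid since $\rho\in[0,1)$ (strict because $\beta_1,\beta_2\in[0,1)$ and $\beta_2-\beta_1^2 > 0$ under $\beta_1\le\beta_2<1$), and $\beta_2>0$ is needed for $\rho$ to be well-defined — if $\beta_2=0$ then $\beta_1=0$ too and $\mb_t[k]=g_t=\pm\sqrt{\vb_t[k]}$, which is the boundary case $\alpha\ge1$.

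\medskip
\noindent\textbf{Main obstacle.} There is no serious analytic obstacle here; the only thing to be careful about is the correct split in the Cauchy--Schwarz step so that the resulting constant matches $\alpha^2$ exactly rather than a loose multiple of it, and confirming $\beta_2 - \beta_1^2 > 0$ so the denominator is positive (this is where the hypothesis $\beta_1 \le \beta_2$ enters, since it gives $\beta_1^2 \le \beta_1\beta_2 < \beta_2$). Since the statement is quoted verbatim from prior work, I would simply cite \citet{DBLP:conf/iclr/Zou0LG23} and, if a self-contained proof is desired, include the three-line Cauchy--Schwarz computation above.
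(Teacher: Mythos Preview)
Your proof is correct and takes essentially the same approach as the paper's: both apply Cauchy--Schwarz to the expansion $\mb_t[k] = (1-\beta_1)\sum_\tau \beta_1^\tau g_{t-\tau}$ by splitting $\beta_1^\tau = (\beta_1^2/\beta_2)^{\tau/2}\cdot \beta_2^{\tau/2}$, then bound the resulting geometric series to obtain exactly $\alpha^2$. The only cosmetic discrepancy is that your opening sentence mentions an ``inductive argument on $t$'' while the actual computation (in both your proposal and the paper) is a direct, non-inductive application of Cauchy--Schwarz.
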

We are now ready to prove Theorem~\ref{thm:convergence_rate} for the case $\ell =\ell_{\exp}$.
\begin{proof}[Proof of Theorem~\ref{thm:convergence_rate}]
By Lemma~\ref{lemma:smoothness}, there exists $t_2 = t_2(d,\beta_1,\beta_2,\gamma,B)$ such that 
\begin{align}\label{eq:adam_loss_decrease_exp}
     \cR(\wb_{t+1})\leq \cR(\wb_{t}) - \frac{\gamma\eta_t}{2}\cG(\wb_t)
    \end{align}
    for all $t\geq t_2$. 
Note that for $\ell = \ell_{\exp}$, by definition we have $\cG(\wb_t) = \frac{1}{n}\sum_{i=1}^n\exp(-\la\wb_t, y_i\cdot \xb_i\ra) = \cR(\wb_t)$. Therefore, for all $t>t_2$, \eqref{eq:adam_loss_decrease_exp} can be re-written as
    \begin{align*}
        \cR(\wb_{t+1})\leq \Big(1-\frac{\gamma\eta_t}{2}\Big)\cdot \cR(\wb_{t})\leq \cR(\wb_{t})\cdot e^{-\frac{\gamma\eta_t}{2}}\leq\cR(\wb_{t_2})\cdot e^{-\frac{\gamma\sum_{\tau=t_2}^{t}\eta_\tau}{2}}.
    \end{align*}

    Although $\ell_{\exp}$ is not Lipschitz continuous over $\RR$, we have $\cR(\wb_{t_2}) \leq \cR(\wb_{0})\cdot e^{\alpha B\sum_{\tau=0}^{t_2-1}\eta_\tau}$
    by Lemma~\ref{lemma:inf_norm_upper_boundI} and triangle inequality. Letting $\cR_0 = \min\{
    \frac{\log 2}{n}, \frac{1}{\sqrt{B^2+C_5\eta_0}}\}$ and $t_0 = t_0(n, d,\beta_1, \beta_2, \gamma, B, \rho, \wb_0)$ be the first time such that $ \sum_{\tau =t_2}^{t_0-1}\eta_\tau\geq \frac{2\alpha B}{\gamma}\sum_{\tau=0}^{t_2-1}\eta_\tau + \frac{2\log \cR(\wb_0) - 2\log \cR_0}{\gamma}$ and $t_0\geq -\log \rho$.
    By such definition of $t_0$, we can derive that for all $t\geq t_0$,
    \begin{align*}
        \cR(\wb_t)\leq \cR(\wb_{t_2})\cdot e^{-\frac{\gamma\sum_{\tau=t_0}^{t}\eta_\tau}{2}} \cdot e^{-\frac{\gamma\sum_{\tau=t_2}^{t_0-1}\eta_\tau}{2}}\leq \cR_0\cdot e^{-\frac{\gamma\sum_{\tau=t_0}^{t}\eta_\tau}{2}},
    \end{align*}
    which proves the bound on $\cR(\wb_t)$. Since $t_0$ satisfies all the conditions in Lemmas~\ref{lemma:margin_iterates} and~\ref{lemma:inf_norm_upper_boundII}, by Lemmas~\ref{lemma:margin_iterates}, \ref{lemma:inf_norm_upper_boundII} and the fact that $\cG(\wb_\tau)=\cR(\wb_\tau)$ for exponential loss, we have
    \begin{align*}
        \frac{\la\wb_{t}, y_i\cdot\xb_i\ra}{\|\wb_t\|_\infty} \geq \frac{ \gamma\sum_{\tau=t_0}^{t-1}\eta_\tau-C_3d\big(\sum_{\tau=t_0}^{t-1}\eta_\tau^{\frac{3}{2}}+\sum_{\tau=t_0}^{t-1}\eta_\tau^2\big)-C_4 }{  \sum_{\tau=t_0}^{t-1}\eta_\tau + C_6\sum_{\tau=0}^{t_0-1}\eta_\tau }
    \end{align*}
    for all $i \in [n]$, where $C_3, C_4$ and $C_6$ are constants solely depending on $\beta_1, \beta_2$ and $B$. Now by definition, we have $\gamma \geq \min_{i\in[n]}\la\wb_{t}, y_i\cdot\xb_i\ra / \|\wb_t\|_\infty$. Therefore, we have
    \begin{align*}
        \Bigg|\min_{i\in[n]}\frac{\la\wb_{t}, y_i\cdot\xb_i\ra}{\|\wb_t\|_\infty}-\gamma\Bigg|
        &\leq \frac{\gamma C_6\sum_{\tau=0}^{t_0-1}\eta_\tau + C_3 d \Big(\sum_{\tau=t_0}^{t-1}\eta_\tau^{\frac{3}{2}}+\sum_{\tau=t_0}^{t-1}\eta_\tau^2\Big) + C_4}{\sum_{\tau=t_0}^{t-1}\eta_\tau + C_6\sum_{\tau=0}^{t_0-1}\eta_\tau}\\
        &\leq O\Bigg(\frac{\sum_{\tau=0}^{t_0-1}\eta_\tau + d\sum_{\tau=t_0}^{t-1}\eta_\tau^{\frac{3}{2}}}{\sum_{\tau=0}^{t-1}\eta_\tau}\Bigg),
    \end{align*}
    where the second inequality follows by the assumption that  $\eta_t\to 0$. This finishes the proof.
\end{proof}

\section{Conclusion and Future Work}
In this paper, we study the implicit bias of Adam under a challenging but insightful setting where the "stability constant $\epsilon$" is negligible and set to zero. We demonstrate that Adam has an implicit bias converging towards the maximum $\ell_\infty$-margin solution, and such convergence occurs in polynomials of time for a general class of learning rates. This result further helps to understand the distinctions between Adam and (stochastic) gradient descent with/without momentum, whose iterates will eventually converge to the maximum $\ell_2$-margin solution with an $O(\log\log t/\log t)$ convergence rate. This finding aligns with the implicit bias of Adam observed in experiments, for both cases the stability constant $\epsilon$ is zero and $10^{-8}$. We predict that similar result can be extended to homogeneous neural networks, and we believe that this is a good future work direction. Moreover, since this paper focuses on full-batch Adam, another feasible future work is to investigate the implicit bias of stochastic Adam based on our results.

\appendix

\section{Proof in Section~\ref{section:proof_overview}}
Here we give the proofs of lemmas in Section~\ref{section:proof_overview}. Before we give the proofs, we first introduce and explain some simplifications of notations.

It is clear that only the product $y_i \cdot\xb_i$ is concerned in \eqref{eq:def_logistic_loss_I}. Therefore, we define $\zb_i = y_i \cdot\xb_i$, and then \eqref{eq:def_logistic_loss_I} could be re-written as
\begin{align*}
    \cR(\wb) = \frac{1}{n} \sum_{i=1}^n\ell(\la\wb, \zb_i\ra). 
\end{align*}
We also define $\Zb = [\zb_1,\zb_2,\cdots,\zb_n]^{\top}\in \RR^{n\times d}$ to denote our sample with $i-$th row is $\zb_i^{\top}$, and we will use $\Zb\in \RR^{n\times d}$ to denote the data sample instead of $\{(\xb_i, y_i)\}_{i=1}^n$ in the following paragraphs. Then $\cG(\wb) =  \frac{1}{n} \sum_{i=1}^n-\ell'(\la\wb, \zb_i\ra)$, and we introduce $L'(\wb) = [-\frac{1}{n}\ell'(\la\wb, \zb_1\ra), -\frac{1}{n}\ell'(\la\wb, \zb_2\ra), \cdots, -\frac{1}{n}\ell'(\la\wb, \zb_n\ra)]^\top$, which means $\cG(\wb) = \|L'(\wb)\|_1$. 
The following lemma reveals the relationship between the maximum margin $\gamma$ and $\nabla\cR(\wb_t)$ by duality.
\begin{lemma}\label{lemma:fyine}
For data sample $\Zb$ under Assumption~\ref{assumption:linear_separable_margin} and maximum $\ell_\infty$-margin 
$\gamma$ as defined in ~\eqref{eq:def_maximum_margin}, then
\begin{align*}
\gamma \leq \min_{\br \in \Delta_n} \|\Zb^\top \br\|_1,    
\end{align*}
where $\Delta_n = \{ \br | \br\in \RR^{n}, \sum_{i=1}^n r_i= 1, r_i \geq 0\}$ is the $n$ dimensional simplex.
\end{lemma}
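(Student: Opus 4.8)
The plan is to establish this as a consequence of LP duality (or equivalently, a minimax / Sion-type argument) applied to the definition of $\gamma$. Recall that $\gamma = \max_{\|\wb\|_\infty \le 1}\min_{i\in[n]}\la\wb,\zb_i\ra$, where $\zb_i = y_i\xb_i$. The key observation is that $\min_{i\in[n]}\la\wb,\zb_i\ra = \min_{\br\in\Delta_n}\sum_{i=1}^n r_i\la\wb,\zb_i\ra = \min_{\br\in\Delta_n}\la\wb,\Zb^\top\br\ra$, since minimizing a linear functional over the simplex is attained at a vertex. Hence
\begin{align*}
\gamma = \max_{\|\wb\|_\infty\le 1}\ \min_{\br\in\Delta_n}\ \la\wb,\Zb^\top\br\ra.
\end{align*}
The function $(\wb,\br)\mapsto\la\wb,\Zb^\top\br\ra$ is bilinear, and both $\{\wb:\|\wb\|_\infty\le1\}$ and $\Delta_n$ are nonempty convex compact sets, so by the minimax theorem we may swap the max and min:
\begin{align*}
\gamma = \min_{\br\in\Delta_n}\ \max_{\|\wb\|_\infty\le 1}\ \la\wb,\Zb^\top\br\ra = \min_{\br\in\Delta_n}\ \|\Zb^\top\br\|_1,
\end{align*}
where the last equality uses that the $\ell_\infty$–$\ell_1$ duality gives $\max_{\|\wb\|_\infty\le1}\la\wb,\ub\ra = \|\ub\|_1$. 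This in fact yields equality, so the claimed inequality $\gamma\le\min_{\br\in\Delta_n}\|\Zb^\top\br\|_1$ follows a fortiori.

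First I would spell out the vertex-optimality step: for fixed $\wb$, the minimum of the linear function $\br\mapsto\la\wb,\Zb^\top\br\ra$ over the simplex equals $\min_i(\Zb\wb)_i = \min_i\la\wb,\zb_i\ra$, which rewrites $\gamma$ in the bilinear form above. Then I would invoke the minimax theorem (von Neumann's, or Sion's) to justify the exchange of $\max$ and $\min$; both feasible sets are convex and compact and the payoff is bilinear hence continuous and concave-convex, so the hypotheses are met. Finally I would apply the standard dual-norm identity $\sup_{\|\wb\|_\infty\le1}\la\wb,\ub\ra=\|\ub\|_1$ to each fixed $\ub=\Zb^\top\br$. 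If one prefers to avoid citing a minimax theorem, the inequality direction $\gamma\le\min_{\br}\|\Zb^\top\br\|_1$ alone follows from weak duality: for any $\wb$ with $\|\wb\|_\infty\le1$ and any $\br\in\Delta_n$, $\min_i\la\wb,\zb_i\ra\le\la\wb,\Zb^\top\br\ra\le\|\wb\|_\infty\|\Zb^\top\br\|_1\le\|\Zb^\top\br\|_1$; taking $\max$ over $\wb$ on the left and $\min$ over $\br$ on the right gives the result directly, with no compactness or minimax machinery needed.

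I do not anticipate a genuine obstacle here — the only point requiring a little care is making sure the minimax exchange is properly justified (or, in the weak-duality route, that the quantifiers are ordered correctly so that the one-sided inequality is valid). Since the lemma only asserts the $\le$ direction, the clean path is the weak-duality chain in the previous paragraph, which is a two-line argument; the minimax version is worth mentioning only if the paper later needs the matching lower bound. Under Assumption~\ref{assumption:linear_separable_margin} we additionally know $\gamma>0$, which guarantees $\|\Zb^\top\br\|_1>0$ for all $\br\in\Delta_n$, a fact that may be convenient to record for later use (e.g., in ensuring denominators are nonzero), though it is not strictly needed for the stated inequality.
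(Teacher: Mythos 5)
Your proposal is correct, and the one-sided weak-duality chain you identify as the ``clean path'' is all the lemma needs: for any $\|\wb\|_\infty\leq 1$ and $\br\in\Delta_n$, $\min_{i}\la\wb,\zb_i\ra\leq\sum_i r_i\la\wb,\zb_i\ra=\la\wb,\Zb^\top\br\ra\leq\|\wb\|_\infty\|\Zb^\top\br\|_1\leq\|\Zb^\top\br\|_1$, and taking the max over $\wb$ and min over $\br$ gives the claim. The paper proves the same inequality by formally invoking the Fenchel--Young inequality: it sets $f(\br)=\iota_{\Delta_n}(\br)$ and $g(\bz)=\|\bz\|_1$, computes the conjugates $f^*(\br^*)=\max_i\la\eb_i,\br^*\ra$ and $g^*=\iota_{\{\|\cdot\|_\infty\leq1\}}$, and reads off $\min_{\br\in\Delta_n}\|\Zb^\top\br\|_1\geq\max_{\|\wb\|_\infty\leq1}\min_i\eb_i^\top\Zb\wb=\gamma$. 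The two arguments are the same duality fact; yours is the bare-hands rendering (no conjugate machinery, no compactness or minimax theorem), while the paper's conjugate formulation packages the bound in a form that parallels the primal-dual analyses it cites (Gunasekar et al., Ji--Telgarsky). Your minimax remark that equality in fact holds is correct but, as you note, not needed, and the paper likewise only states and uses the $\leq$ direction (in the form $\gamma\,\cG(\wb_t)\leq\|\nabla\cR(\wb_t)\|_1$).
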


\begin{remark}
    Since $\frac{L'(\wb_t)}{\cG(\wb_t)} \in \Delta_n$, and $\nabla\cR(\wb_t) = \Zb^\top L'(\wb_t)$, we always have $\gamma\cG(\wb_t)\leq \|\nabla \cR(\wb_t)\|_1$, which is the essence for proving the convergence direction of gradient-based algorithms. This result was also proposed in \citet{DBLP:journals/ml/Shalev-ShwartzS10,DBLP:conf/icml/Telgarsky13,pmlr-v80-gunasekar18a, pmlr-v99-ji19a, DBLP:conf/alt/JiT21}.
\end{remark}

\begin{proof}[Proof of Lemma~\ref{lemma:fyine}]
Firstly, we introduce a definition of indicator function $\iota(\cdot)$ as 
\begin{align*}
    \iota_{E}(z)=\left\{
\begin{aligned}
&0, \text{if} \ z\in E;\\
&+\infty, \text{if} \ z\notin E,
\end{aligned}
\right. 
\end{align*}
where $E$ is any set. Let $f(\br) = \iota_{\Delta_n}(\br)$ and $g(\bz) = \|\bz\|_1$,  then we could derive that $f^{*}(\br^*) = \max_{i\in [n]} \la\eb_i, \br^*\ra$ is the dual function of $f(\br)$ and $g^{*}(\bz^*) = \iota_{\|\bz^*\|_\infty\leq 1}$ is the dual function of $g(\bz)$. Then by Fenchel-Young inequality, we have
\begin{align*}
    \min_{\br \in \Delta_n} \|\Zb^\top \br\|_1 
    &= \min_{\br \in \RR^n} \Big[f(\br) + g(\Zb^\top \br)\Big]\geq \max_{\wb \in \RR^d}\Big[-f^*(\Zb \wb) - g^*(-\wb)\Big]\\
    &= \max_{\wb \in \RR^d} \Big[-\max_{i\in [n]} \eb_i^\top \Zb \wb - \iota_{\|\wb\|_\infty\leq 1}\Big] = \max_{\|\wb\|_\infty\leq 1}\min_{i\in [n]} \eb_i^\top \Zb \wb = \gamma.
\end{align*}
This finishes the proof.
\end{proof}

\subsection{Proof of Lemma~\ref{lemma:inf_norm_upper_boundI}}
We first introduce the proof of Lemma~\ref{lemma:inf_norm_upper_boundI} since it will be used for further proof of other lemmas.
\begin{proof}[Proof of Lemma~\ref{lemma:inf_norm_upper_boundI}]
    By Cauchy-Schwartz inequality, we could derive an upper bound for $\mb_t[k]$ as
    \begin{align*}
               \big|\mb_{t}[k]\big| &= \big|\beta_1 \mb_{t-1}[k] + (1-\beta_1) \cdot \nabla \cR(\wb_{t})[k]\big| \leq \sum_{\tau=0}^t\beta_1^\tau(1-\beta_1)\cdot\big|\nabla \cR(\wb_{t-\tau})[k]\big|\\
               &= \sum_{\tau=0}^t \sqrt{\beta_2^\tau(1-\beta_2)}\frac{\beta_1^\tau(1-\beta_1)}{\sqrt{\beta_2^\tau(1-\beta_2)}}\cdot\big|\nabla \cR(\wb_{t-\tau})[k]\big|\\
               &\leq \Big(\sum_{\tau=0}^t \beta_2^\tau(1-\beta_2)\cdot\nabla \cR(\wb_{t-\tau})[k]^2\Big)^{\frac{1}{2}}\Big(\sum_{\tau=0}^t \frac{\beta_1^{2\tau}(1-\beta_1)^2}{\beta_2^\tau(1-\beta_2)}\Big)^\frac{1}{2}\\
               &\leq \alpha\sqrt{\vb_t[k]}. 
    \end{align*}
    The last inequality is from \begin{align*}
        \vb_t[k] = \sum_{\tau=0}^t \beta_2^\tau(1-\beta_2)\cdot\nabla \cR(\wb_{t-\tau})[k]^2,
    \end{align*} 
    and 
    \begin{align*}
        \sum_{\tau=0}^t \frac{\beta_1^{2\tau}(1-\beta_1)^2}{\beta_2^\tau(1-\beta_2)} \leq \frac{(1-\beta_1)^2}{1-\beta_2}\sum_{\tau=0}^\infty\Big(\frac{\beta_1^2}{\beta_2}\Big)^\tau = \frac{\beta_2(1-\beta_1)^2}{(1-\beta_2)(\beta_2-\beta_1^2)} = \alpha^2.
    \end{align*}
    This finishes the proof.
\end{proof}

\subsection{Proof of Lemma~\ref{lemma:difference_momentum_gradient}}
\begin{proof}[Proof of Lemma~\ref{lemma:difference_momentum_gradient}]
Let $\alpha=\sqrt{\frac{\beta_2(1-\beta_1)^2}{(1-\beta_2)(\beta_2-\beta_1^2)}}$ be defined in Lemma~\ref{lemma:inf_norm_upper_boundI}. 
    For $\mb_t[k]$, it could be rewritten as
    \begin{align*}
        \mb_t[k] &= \sum_{\tau=0}^t \beta_1^\tau(1-\beta_1)\cdot\nabla \cR(\wb_{t-\tau})[k]\\
        &=(1-\beta_1^{t+1})\cdot\nabla \cR(\wb_t)[k] + \sum_{\tau=0}^t (1-\beta_1)\beta_1^\tau \cdot \Big(\nabla \cR(\wb_{t-\tau})[k] - \nabla \cR(\wb_t)[k]\Big).
    \end{align*}
    Therefore the difference between $\mb_t[k]$ and $(1-\beta_1^{t+1})\cdot\nabla \cR(\wb_t)[k]$ can be bounded as
    \begin{align*}
        \bigg|\mb_t[k] - (1-\beta_1^{t+1})\cdot\nabla \cR(\wb_t)[k]\bigg| &= \bigg|\sum_{\tau=0}^t (1-\beta_1)\beta_1^\tau \cdot \Big(\nabla \cR(\wb_{t-\tau})[k] - \nabla \cR(\wb_t)[k]\Big)\bigg|\\
        &= \bigg|\sum_{\tau=0}^t (1-\beta_1)\beta_1^\tau  \Big(\frac{1}{n} \sum_{i=1}^n\big[\ell'(\la\wb_{t-\tau}, \zb_i\ra)- \ell'(\la\wb_{t}, \zb_i\ra) \big]\cdot\zb_i[k]\Big)\bigg|\\
        &\leq \sum_{\tau=0}^t (1-\beta_1)\beta_1^\tau  \Big(\frac{1}{n} \sum_{i=1}^n \big|\ell'(\la\wb_{t}, \zb_i\ra)\big|\bigg|\frac{\ell'(\la\wb_{t-\tau}, \zb_i\ra)}{\ell'(\la\wb_{t}, \zb_i\ra)} - 1\bigg|\big|\zb_i[k]\big|\Big)\\
        &\leq (1-\beta_1)B\sum_{\tau=0}^t \beta_1^\tau  \Big(\frac{1}{n} \sum_{i=1}^n \big|\ell'(\la\wb_{t}, \zb_i\ra)\big|\Big) \Big(e^{\alpha B\sum_{\tau' = 1}^\tau \eta_{t-\tau'}}-1\Big)\\
        &\leq (1-\beta_1)B c_2 \eta_t \cdot\cG(\wb_t) = c_m \eta_t\cdot \cG(\wb_t).
    \end{align*}
    The second inequality holds since $|\zb_i[k]|\leq \|\zb_i\|_1 \leq B$, and for both $\ell \in \{\ell_{\exp}, \ell_{\log}\}$, we have
    \begin{align*}
        \bigg|\frac{\ell'(\la\wb_{t-\tau}, \zb_i\ra)}{\ell'(\la\wb_{t}, \zb_i\ra)} - 1\bigg|\leq e^{|\la \wb_t -\wb_{t-\tau},\zb_i\ra|}-1\leq e^{\|\wb_t -\wb_{t-\tau}\|_{\infty}\|\zb_i\|_1}-1 \leq e^{\alpha B\sum_{\tau' = 1}^\tau \eta_{t-\tau'}}-1,
    \end{align*}
    by Lemma~\ref{lemma:inf_norm_upper_boundI} and Lemma~\ref{lemma:l_propsII}. The last inequality holds since  $\sum_{\tau=0}^t \beta_1^\tau \big(e^{\alpha B\sum_{\tau' = 1}^\tau \eta_{t-\tau'}}-1\big)\leq c_2\eta_t$ by our Assumption~\ref{assumption:eta}.  
    Similarly for $\vb_t[k]$, we also have 
    \begin{align*}
        &\bigg|\vb_t[k] - (1-\beta_2^{t+1})\cdot\nabla \cR(\wb_t)[k]^2\bigg|\\
        &=\bigg|\sum_{\tau=0}^t(1-\beta_2)\beta_2^\tau\cdot \Big(\nabla \cR(\wb_{t-\tau})[k]^2 - \nabla \cR(\wb_{t})[k]^2\Big)\bigg|\\
        &=\bigg|\sum_{\tau=0}^t(1-\beta_2)\beta_2^\tau \Big(\frac{1}{n^2} \sum_{i,j=1}^n\big[\ell'(\la\wb_{t-\tau}, \zb_i\ra)\ell'(\la\wb_{t-\tau}, \zb_j\ra)- \ell'(\la\wb_{t}, \zb_i\ra)\ell'(\la\wb_{t}, \zb_j\ra) \big]\cdot\zb_i[k]\zb_j[k]\Big)\bigg|\\
        &\leq\bigg|\sum_{\tau=0}^t(1-\beta_2)\beta_2^\tau \Big(\frac{1}{n^2} \sum_{i,j=1}^n\big|\ell'(\la\wb_{t}, \zb_i\ra)\big|\big|\ell'(\la\wb_{t}, \zb_j\ra)\big|\bigg|\frac{\ell'(\la\wb_{t-\tau}, \zb_i\ra)\ell'(\la\wb_{t-\tau}, \zb_j\ra)}{\ell'(\la\wb_{t}, \zb_i\ra)\ell'(\la\wb_{t}, \zb_j\ra)} - 1\bigg|\big|\zb_i[k]\big|\big|\zb_j[k]\big|\Big)\\
        &\leq 3(1-\beta_2)B^2\sum_{\tau=0}^t \beta_2^\tau  \Big(\frac{1}{n^2} \sum_{i,j=1}^n \big|\ell'(\la\wb_{t}, \zb_i\ra)\big|\big|\ell'(\la\wb_{t}, \zb_j\ra)\big|\Big) \Big(e^{2\alpha B\sum_{\tau' = 1}^\tau \eta_{t-\tau'}}-1\Big)\\
        &\leq 3(1-\beta_2)B^2 c_2 \eta_t \cdot\cG(\wb_t)^2 = c_v^2 \eta_t \cdot\cG(\wb_t)^2.
    \end{align*}
    Similarly, the second inequality holds since $|\zb_i[k]||\zb_j[k]|\leq \|\zb_i\|_1\|\zb_j\|_1 \leq B^2$, and for both $\ell \in \{\ell_{\exp}, \ell_{\log}\}$, we have
    \begin{align*}
        &\bigg|\frac{\ell'(\la\wb_{t-\tau}, \zb_i\ra)\ell'(\la\wb_{t-\tau}, \zb_j\ra)}{\ell'(\la\wb_{t}, \zb_i\ra)\ell'(\la\wb_{t}, \zb_j\ra)} - 1\bigg|\\
        &\leq \Big(e^{|\la \wb_t -\wb_{t-\tau},\zb_i\ra|}-1\Big)+\Big(e^{|\la \wb_t -\wb_{t-\tau},\zb_j\ra|}-1\Big)+\Big(e^{|\la \wb_t -\wb_{t-\tau},\zb_i + \zb_j\ra|}-1\Big)\\
        &\leq \Big(e^{\|\wb_t -\wb_{t-\tau}\|_{\infty}\|\zb_i\|_1}-1\Big)+\Big(e^{\|\wb_t -\wb_{t-\tau}\|_{\infty}\|\zb_j\|_1}-1\Big)+\Big(e^{\|\wb_t -\wb_{t-\tau}\|_{\infty}\|\zb_i+\zb_j\|_1}-1\Big)\\
        &\leq 3 \Big(e^{2\alpha B\sum_{\tau' = 1}^\tau \eta_{t-\tau'}}-1\Big)
    \end{align*}
    by Lemma~\ref{lemma:inf_norm_upper_boundI} and Lemma~\ref{lemma:l_propsIII}. The last inequality holds since  $\sum_{\tau=0}^t \beta_2^\tau \big(e^{2\alpha B\sum_{\tau' = 1}^\tau \eta_{t-\tau'}}-1\big)\leq c_2\eta_t$ by our Assumption~\ref{assumption:eta}.
    Now, it remains to show the upper bound for $|\sqrt{\vb_t[k]} - \sqrt{1-\beta_2^{t+1}}\cdot\big|\nabla \cR(\wb_t)[k]\big|$. Notice that both $\vb_t[k]$ and $(1-\beta_2^{t+1})\cdot\nabla \cR(\wb_t)[k]^2$ are positive and for two positive numbers $a$ and $b$, $|a^2-b^2| = |a-b||a+b| \geq |a-b|^2$, therefore we finally conclude that,
    \begin{align*}
        \Big|\sqrt{\vb_t[k]} - \sqrt{1-\beta_2^{t+1}}\cdot\big|\nabla \cR(\wb_t)[k]\big|\Big| \leq c_v \sqrt{\eta_t} \cdot\cG(\wb_t).
    \end{align*}
    This finishes the proof.
\end{proof}

\subsection{Proof of Lemma~\ref{lemma:smoothness}}\label{section:smoothness}

Before we prove Lemma~\ref{lemma:smoothness}, we first introduce and prove Lemma~\ref{lemma:difference_inner_product}, which will be used for proving Lemma~\ref{lemma:smoothness}.


\begin{lemma}\label{lemma:difference_inner_product}
    Under the same condition in Theorem~\ref{thm:convergence_rate}, there exists $t_1 = t_1(\beta_1,\beta_2,\gamma)$, such that when $t > t_1$, we have
\begin{align}\label{eq:difference_inner_product}
        \bigg| \Big\la \nabla \cR(\wb_t), \frac{\mb_t}{\sqrt{\vb_t}}\Big\ra
        -\Big\|\nabla \cR(\wb_t)\Big\|_1\bigg|&\leq 4\sqrt{\frac{\beta_1^{t+1}}{1-\beta_2^{t+1}}}\cdot\Big\|\nabla \cR(\wb_t)\Big\|_1 \notag\\
        &\quad + \frac{d}{\sqrt{1-\beta_2}}\Big(\frac{6c_v}{\sqrt{1-\beta_2^{t+1}}}\sqrt{\eta_t} + 3c_m\eta_t\Big) \cdot \cG(\wb_t),
    \end{align}
    where $c_m$ and $c_v$ are both constants which only depend on $\beta_1$, $\beta_2$ and $B$.
\end{lemma}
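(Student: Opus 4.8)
The plan is to expand the inner product coordinate by coordinate and, for each $k$, compare $\nabla\cR(\wb_t)[k]\cdot\mb_t[k]/\sqrt{\vb_t[k]}$ with $|\nabla\cR(\wb_t)[k]|$ --- the value it would take if $\mb_t[k]$ and $\sqrt{\vb_t[k]}$ were exactly proportional to $\nabla\cR(\wb_t)[k]$ and $|\nabla\cR(\wb_t)[k]|$. Writing $\la\nabla\cR(\wb_t),\mb_t/\sqrt{\vb_t}\ra-\|\nabla\cR(\wb_t)\|_1=\sum_{k=1}^d\big(\nabla\cR(\wb_t)[k]\,\mb_t[k]/\sqrt{\vb_t[k]}-|\nabla\cR(\wb_t)[k]|\big)$, the coordinates with $\nabla\cR(\wb_t)[k]=0$ contribute nothing to either term (Assumption~\ref{assumption:w0} guarantees $\vb_t[k]>0$, so the ratio is well defined), so only coordinates with $\nabla\cR(\wb_t)[k]\ne0$ matter.

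Fix such a $k$ and write $g=\nabla\cR(\wb_t)[k]$. Rewrite the $k$-th summand as $(g\,\mb_t[k]-|g|\sqrt{\vb_t[k]})/\sqrt{\vb_t[k]}$ and substitute the two estimates of Lemma~\ref{lemma:difference_momentum_gradient}: $\mb_t[k]=(1-\beta_1^{t+1})g+\delta^m_k$ with $|\delta^m_k|\le c_m\eta_t\cG(\wb_t)$, and $\sqrt{\vb_t[k]}=\sqrt{1-\beta_2^{t+1}}\,|g|+\delta^v_k$ with $|\delta^v_k|\le c_v\sqrt{\eta_t}\cG(\wb_t)$. The numerator then equals $(1-\beta_1^{t+1}-\sqrt{1-\beta_2^{t+1}})g^2+g\,\delta^m_k-|g|\,\delta^v_k$. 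For the denominator I would use that $\vb_t[k]\ge(1-\beta_2)g^2$ (keeping only the $\tau=0$ term of the exponential moving average), hence $\sqrt{\vb_t[k]}\ge\sqrt{1-\beta_2}\,|g|$; to recover the sharper $1/\sqrt{1-\beta_2^{t+1}}$ weight on the $c_v$-term one instead splits according to whether $\sqrt{1-\beta_2^{t+1}}\,|g|$ dominates $c_v\sqrt{\eta_t}\cG(\wb_t)$, falling back on the uniform bound $\mb_t[k]\le\alpha\sqrt{\vb_t[k]}$ from Lemma~\ref{lemma:inf_norm_upper_boundI} when it does not. After dividing, the factor $|g|$ in $g\,\delta^m_k-|g|\,\delta^v_k$ cancels against $\sqrt{\vb_t[k]}\gtrsim|g|$, so the per-coordinate estimate takes the shape $\big|\nabla\cR(\wb_t)[k]\,\mb_t[k]/\sqrt{\vb_t[k]}-|\nabla\cR(\wb_t)[k]|\big|\le \tfrac{|1-\beta_1^{t+1}-\sqrt{1-\beta_2^{t+1}}|}{\sqrt{1-\beta_2^{t+1}}}\,|g|+\tfrac{1}{\sqrt{1-\beta_2}}\big(\tfrac{c_v}{\sqrt{1-\beta_2^{t+1}}}\sqrt{\eta_t}+c_m\eta_t\big)\cG(\wb_t)$ up to universal constants: one piece proportional to $|\nabla\cR(\wb_t)[k]|$ and one gradient-free piece of order $(\sqrt{\eta_t}+\eta_t)\cG(\wb_t)$.

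Summing over $k\in[d]$ turns the gradient-free pieces into the claimed $d$-scaled error term, and the $|\nabla\cR(\wb_t)[k]|$-pieces into $|1-\beta_1^{t+1}-\sqrt{1-\beta_2^{t+1}}|/\sqrt{1-\beta_2^{t+1}}$ times $\|\nabla\cR(\wb_t)\|_1$. It then remains to bound this ``normalization mismatch'' coefficient by $4\sqrt{\beta_1^{t+1}/(1-\beta_2^{t+1})}$; I would do this using $\beta_1\le\beta_2$ and the elementary inequalities $1-x\le\sqrt{1-x}\le1$ and $1-\sqrt{1-x}\le x$ for $x\in[0,1]$, together with $1-\beta_1^{t+1}\le1$, and I would take $t_1=t_1(\beta_1,\beta_2,\gamma)$ large enough that the powers $\beta_1^{t},\beta_2^{t}$ produced in this step sit below whatever thresholds the argument needs and that the fallback case above is genuinely the small-gradient regime.

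The step I expect to be the main obstacle is the treatment of coordinates $k$ where $\nabla\cR(\wb_t)[k]$ is nonzero but very small: there the naive bound $\sqrt{\vb_t[k]}\gtrsim|\nabla\cR(\wb_t)[k]|$ combined with the \emph{absolute} errors $\delta^m_k,\delta^v_k$ of Lemma~\ref{lemma:difference_momentum_gradient} threatens a per-coordinate blow-up of order $\cG(\wb_t)/|\nabla\cR(\wb_t)[k]|$, and one must instead invoke $\mb_t[k]\le\alpha\sqrt{\vb_t[k]}$ (Lemma~\ref{lemma:inf_norm_upper_boundI}) so that such a coordinate contributes at most $(\alpha+1)|\nabla\cR(\wb_t)[k]|$, which is already small on its own. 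Threading this case split so that the summed error still comes out exactly as $d\cdot O(\sqrt{\eta_t}+\eta_t)\,\cG(\wb_t)$ --- and crucially with the \emph{small} factor $\cG(\wb_t)$ rather than $\|\nabla\cR(\wb_t)\|_1$ --- is where the bookkeeping is delicate.
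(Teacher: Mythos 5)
Your proposal follows essentially the same route as the paper's proof: the coordinate-wise decomposition of $\la \nabla \cR(\wb_t), \mb_t/\sqrt{\vb_t}\ra - \|\nabla \cR(\wb_t)\|_1$, the substitution of the error forms from Lemma~\ref{lemma:difference_momentum_gradient}, and your split according to whether $\sqrt{1-\beta_2^{t+1}}\,|\nabla\cR(\wb_t)[k]|$ dominates the $c_v\sqrt{\eta_t}\,\cG(\wb_t)$ error is exactly the paper's case analysis on the events $A_{t,k}$ and $A_{t,k}^c$, including the arithmetic bound on the normalization mismatch $|1-\beta_1^{t+1}-\sqrt{1-\beta_2^{t+1}}|$. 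The only difference is cosmetic: in the small-gradient case the paper bounds $|\mb_t[k]|\le |\nabla\cR(\wb_t)[k]|+c_m\eta_t\cG(\wb_t)$ and uses $\sqrt{\vb_t[k]}\ge\sqrt{1-\beta_2}\,|\nabla\cR(\wb_t)[k]|$ so that the prefactor $\nabla\cR(\wb_t)[k]$ cancels, whereas you invoke Lemma~\ref{lemma:inf_norm_upper_boundI}; both yield a per-coordinate contribution of order $\sqrt{\eta_t}\,\cG(\wb_t)$ and differ only in the (immaterial) constants.
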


\begin{proof}[Proof of Lemma~\ref{lemma:difference_inner_product}]
    By Lemma~\ref{lemma:difference_momentum_gradient}, we could re-write $\mb_t[k]$ and $\sqrt{\vb_t[k]}$ as
    \begin{align*}
        \mb_t[k] = (1-\beta_1^{t+1})\cdot\nabla \cR(\wb_t)[k] + c_m \eta_t \cdot \cG(\wb_t)\cdot \epsilon_{t, m, k},
    \end{align*}
    and 
    \begin{align*}
        \sqrt{\vb_t[k]} = \sqrt{1-\beta_2^{t+1}}\cdot\big|\nabla \cR(\wb_t)[k]\big| + c_v \sqrt{\eta_t} \cdot \cG(\wb_t)\cdot \epsilon_{t, v, k} > 0,
    \end{align*}
    where $\epsilon_{t, m, k}$ and $\epsilon_{t, v, k}$ are just some error terms with $|\epsilon_{t, m, k}|, |\epsilon_{t, v, k}| \leq 1$.
    Then we can calculate the inner-product $\la \nabla \cR(\wb_t), \frac{\mb_t}{\sqrt{\vb_t}}\ra$ for each iteration as
    \begin{align*}
        \Big\la \nabla \cR(\wb_t), \frac{\mb_t}{\sqrt{\vb_t}}\Big\ra
        &=\Big\|\nabla \cR(\wb_t)\Big\|_1+\underbrace{\sum_{k=1}^d \nabla \cR(\wb_t)[k]\cdot\bigg(\frac{\mb_t[k]}{\sqrt{\vb_t[k]}} -\frac{\nabla \cR(\wb_t)[k]}{|\nabla \cR(\wb_t)[k]|}\bigg)}_{(*)}.\\
    \end{align*}
    Moreover, we let
    \begin{align*}
        \xi_{t, k} &= \nabla \cR(\wb_t)[k]\cdot\bigg(\frac{\mb_t[k]}{\sqrt{\vb_t[k]}} -\frac{\nabla \cR(\wb_t)[k]}{|\nabla \cR(\wb_t)[k]|}\bigg)\\
        &=\nabla \cR(\wb_t)[k]\cdot\bigg(\frac{(1-\beta_1^{t+1})\cdot\nabla \cR(\wb_t)[k] + c_m \eta_t \cdot\cG(\wb_t)\cdot \epsilon_{t, m, k}}{\sqrt{1-\beta_2^{t+1}}\cdot\big|\nabla \cR(\wb_t)[k]\big| + c_v \sqrt{\eta_t} \cdot\cG(\wb_t)\cdot \epsilon_{t, v, k}} -\frac{\nabla \cR(\wb_t)[k]}{|\nabla \cR(\wb_t)[k]|}\bigg), 
    \end{align*}
    and consider to separate $\xi_{t, k}$ into two complementary parts. The first part is $\xi_{t, k}\mathds{1}_{A_{t, k}}$, where $ A_{t, k} = \Big\{\sqrt{1-\beta_2^{t+1}}\cdot\big|\nabla \cR(\wb_t)[k]\big| \geq 2 c_v \sqrt{\eta_t} \cdot\cG(\wb_t)\cdot |\epsilon_{t, v, k}|\Big\}$. While another part is $\xi_{t, k}\mathds{1}_{A_{t, k}^c}$, where $A_{t, k}^c = \Big\{\sqrt{1-\beta_2^{t+1}}\cdot\big|\nabla \cR(\wb_t)[k]\big| < 2 c_v \sqrt{\eta_t} \cdot\cG(\wb_t)\cdot |\epsilon_{t, v, k}|\Big\}$.  By such separation, we have
    \begin{align*}
        \big|(*)\big| = \bigg|\sum_{k=1}^d \xi_{t,k}\mathds{1}_{A_{t, k}}+\xi_{t,k}\mathds{1}_{A_{t, k}^c}\bigg|\leq \bigg|\sum_{k=1}^d \xi_{t,k}\mathds{1}_{A_{t, k}}\bigg|+\bigg|\sum_{k=1}^d \xi_{t,k}\mathds{1}_{A_{t, k}^c}\bigg| 
    \end{align*}
    We calculate it part by part. For the first part $\big|\sum_{k=1}^d \xi_{t,k}\mathds{1}_{A_{t, k}}\big|$, we have
    \begin{align}\label{eq:xi_sum_boundI}
        \bigg|\sum_{k=1}^d \xi_{t,k}\mathds{1}_{A_{t, k}}\bigg|
        &\leq \sum_{k=1}^d\frac{\Big|1-\beta^{t+1} - \sqrt{1-\beta_2^{t+1}}\Big|\cdot\big|\nabla \cR(\wb_t)[k]\big|^3 + \Big(c_m\eta_t+c_v\sqrt{\eta_t}\Big)\cdot \cG(\wb_t)\cdot\big|\nabla \cR(\wb_t)[k]\big|^2}{\Big(\sqrt{1-\beta_2^{t+1}}\cdot\big|\nabla \cR(\wb_t)[k]\big| + c_v \sqrt{\eta_t}\cdot \cG(\wb_t)\cdot \epsilon_{t, v, k}\Big)\cdot\big|\nabla \cR(\wb_t)[k]\big|}\mathds{1}_{A_{t, k}}\notag\\
        &\leq \sum_{k=1}^d\frac{\Big|1-\beta^{t+1} - \sqrt{1-\beta_2^{t+1}}\Big|\cdot\big|\nabla \cR(\wb_t)[k]\big|^3 + \Big(c_m\eta_t+c_v\sqrt{\eta_t}\Big)\cdot \cG(\wb_t)\cdot\big|\nabla \cR(\wb_t)[k]\big|^2}{\frac{1}{2}\sqrt{1-\beta_2^{t+1}}\cdot\big|\nabla \cR(\wb_t)[k]\big|^2}\notag\\
        &\leq 4\sqrt{\frac{\beta_1^{t+1}}{1-\beta_2^{t+1}}}\cdot\Big\|\nabla \cR(\wb_t)\Big\|_1 + \frac{2d}{\sqrt{1-\beta_2^{t+1}}}\Big(c_m\eta_t+c_v\sqrt{\eta_t}\Big)\cdot \cG(\wb_t).
    \end{align}
    The first inequality is derived by triangle inequality and $|\epsilon_{t, m, k}|, |\epsilon_{t, v, k}| \leq 1$. The second inequality holds since $\sqrt{1-\beta_2^{t+1}}\cdot\big|\nabla \cR(\wb_t)[k]\big| + c_v \sqrt{\eta_t}\cdot \cG(\wb_t) \cdot\epsilon_{t, v, k} >  \frac{1}{2}\sqrt{1-\beta_2^{t+1}}\cdot\big|\nabla \cR(\wb_t)[k]\big|$ when $\mathds{1}_{A_{t, k}}=1$. And the last inequality is simply due to an arithmetic result that 
    \begin{align*}
        \Big|1-\beta_1^{t+1} - \sqrt{1-\beta_2^{t+1}}\Big| \leq \Big|1 - \sqrt{1-\beta_2^{t+1}}\Big| + \beta_1^{t+1} \leq \sqrt{1-1+\beta_2^{t+1}}+ \beta_1^{t+1}\leq 2\beta_1^{\frac{t+1}{2}}.
    \end{align*}
    Then for another part  $\big|\sum_{k=1}^d \xi_{t,k}\mathds{1}_{A_{t, k}^c}\big|$, we use the property $\sqrt{\vb_t[k]} \geq \sqrt{1-\beta_2}\cdot\big|\nabla \cR(\wb_t)[k]\big|$ to derive an upper bound as
    \begin{align}\label{eq:xi_sum_boundII}
        \bigg|\sum_{k=1}^d \xi_{t,k}\mathds{1}_{A_{t, k}^c}\bigg| 
        &\leq \sum_{k=1}^d \big|\nabla \cR(\wb_t)[k]\big|\cdot\bigg(\frac{\big|\nabla \cR(\wb_t)[k]\big| + c_m\eta_t \cdot\cG(\wb_t)}{\sqrt{1-\beta_2}\cdot\big|\nabla \cR(\wb_t)[k]\big|}+1\bigg)\mathds{1}_{A_{t, k}^c}\notag\\
        &\leq \frac{d}{\sqrt{1-\beta_2}}\Big(\frac{4c_v}{\sqrt{1-\beta_2^{t+1}}}\sqrt{\eta_t} + c_m\eta_t\Big) \cdot\cR(\wb_t).
    \end{align}
     The first inequality is derived by triangle inequality and $|\epsilon_{t, m, k}|\leq 1$. The second inequality holds since $\big|\nabla \cR(\wb_t)[k]\big| \leq 
     \frac{2c_v}{\sqrt{1-\beta_2^{t+1}}} \sqrt{\eta_t} \cdot\cG(\wb_t)$ when $\mathds{1}_{A_{t, k}^c}=1$. Combining the results of \eqref{eq:xi_sum_boundI},\eqref{eq:xi_sum_boundII} and Lemma~\ref{lemma:l_propsI}, we finally prove finish the proof.
\end{proof}

Now, we are ready to prove Lemma~\ref{lemma:smoothness}.
\begin{proof}[Proof of Lemma~\ref{lemma:smoothness}]
We upper bound $\cR(\wb_{t+1})$ for $t>t_1$ by second-order Taylor expansion as
\begin{align*}
        \cR(\wb_{t+1}) &= \cR(\wb_{t}) + \Big\la\nabla\cR(\wb_t), \wb_{t+1} - \wb_{t}\Big\ra  + \frac{1}{2} \big(\wb_{t+1} - \wb_{t}\big)^\top \nabla^2 \cR\big(\wb_{t} + \zeta(\wb_{t+1} - \wb_{t})\big) \big(\wb_{t+1} - \wb_{t}\big)\notag\\
        &=\cR(\wb_{t}) -\Big\la\nabla\cR(\wb_t), \eta_t \frac{\mb_t}{\sqrt{\vb_t}}\Big\ra  + \frac{1}{2} \Big(\eta_t \frac{\mb_t}{\sqrt{\vb_t}}\Big)^\top \nabla^2 \cR\big(\wb_{t} + \zeta(\wb_{t+1} - \wb_{t})\big) \Big(\eta_t \frac{\mb_t}{\sqrt{\vb_t}}\Big)\notag\\
        &\leq \cR(\wb_{t})- \eta_t \bigg(1-4\sqrt{\frac{\beta_1^{t+1}}{1-\beta_2^{t+1}}}\bigg)\cdot\Big\|\nabla\cR(\wb_t)\Big\|_1 \notag\\
        &\quad +\eta_t\frac{d}{\sqrt{1-\beta_2}} \Big(\frac{6c_v}{\sqrt{1-\beta_2^{t+1}}} \sqrt{\eta_t} +3c_m\eta_t\Big)\cdot\cG(\wb_t) + \frac{\eta_t^2 \alpha^2 B^2}{2}\cdot\max\Big\{\cG(\wb_t), \cG(\wb_{t+1})\Big\}\notag\\
        &\leq \cR(\wb_{t})- \eta_t \gamma\bigg(1-4\sqrt{\frac{\beta_1^{t+1}}{1-\beta_2^{t+1}}}\bigg)\cdot\cG(\wb_t) + \eta_t^{\frac{3}{2}}\frac{6c_vd}{\sqrt{(1-\beta_2)(1-\beta_2^{t+1})}}\cdot \cG(\wb_t) \notag\\
        &\quad + \eta_t^2 \Big(\frac{\alpha^2 B^2 e^{\alpha B\eta_0}}{2}+\frac{3c_md}{\sqrt{1-\beta_2}} \Big)\cdot\cG(\wb_t).
    \end{align*}
The first inequality is from Lemma~\ref{lemma:difference_inner_product}, and for the vector $\frac{\mb_t}{\sqrt{\vb_t}}$, 
\begin{align*}
    \Big(\frac{\mb_t}{\sqrt{\vb_t}}\Big)^\top \nabla^2 \cR(\wb) \Big(\frac{\mb_t}{\sqrt{\vb_t}}\Big) \leq \frac{1}{n}\sum_{i=1}^n \ell''\big(\la\wb,\zb_i\ra\big) \Big\|\frac{\mb_t}{\sqrt{\vb_t}}\Big\|_\infty^2 \big\|\zb_i\big\|_1^2 \leq \alpha^2 B^2 \cdot\cG(\wb)
\end{align*}
by Lemma~\ref{lemma:l_propsI} and Lemma~\ref{lemma:inf_norm_upper_boundI}, and $\cG\big(\wb_t+ \zeta(\wb_{t+1}-\wb_t)\big) \leq \max\big\{\cG(\wb_t), \cG(\wb_{t+1})\big\}$ from convexity of $\cG(\wb)$. The last inequality is from $\frac{\cG(\wb_{t+1})}{\cG(\wb_{t})}\leq e^{\alpha B \eta_t} \leq e^{\alpha B \eta_0}$ by Lemma~\ref{lemma:l_propsII} and $\gamma\cG(\wb_t)\leq \|\nabla\cR(\wb_t)\|_1$ by Lemma~\ref{lemma:fyine}.
\end{proof}

\subsection{Proof of Lemma~\ref{lemma:margin_iterates}}
\begin{proof}[Proof of Lemma~\ref{lemma:margin_iterates}]
By Lemma~\ref{lemma:smoothness} and Lemma~\ref{lemma:l_propsI} , we have
\begin{align}\label{eq:adam_loss_taylor_III}
        \cR(\wb_{t+1}) &\leq \cR(\wb_{t})\cdot\bigg(1 - \gamma\eta_t\cdot\frac{\cG(\wb_t)}{\cR(\wb_t)}+\Big(C_1\gamma\eta_t\beta_1^{t/2} +\eta_t^{\frac{3}{2}}C_2d
         + \eta_t^2 C_2d\Big)\cdot\frac{\cG(\wb_t)}{\cR(\wb_t)}\bigg)\notag\\
        &\leq \cR(\wb_{t})\cdot\exp\Big({- \gamma\eta_t\cdot\frac{\cG(\wb_t)}{\cR(\wb_t)}+C_1\gamma\eta_t\beta_1^{t/2} +C_2 d\cdot (\eta_t^{\frac{3}{2}} + \eta_t^2)}\Big)\notag\\
        &\leq \frac{\log 2}{n}\cdot\exp\Big({-\gamma \sum_{\tau=t_0}^t \eta_{\tau}\cdot\frac{\cG(\wb_\tau)}{\cR(\wb_\tau)} + C_2d\Big(\sum_{\tau=t_0}^t\eta_{\tau}^{\frac{3}{2}}+ \sum_{\tau=t_0}^t\eta_{\tau}^{2}\Big) + \frac{C_1\gamma\eta_{t_0}\beta_1^{\frac{t_0+1}{2}}}{1-\sqrt{\beta_1}}\Big)}.
    \end{align}
for all $t\geq t_0$. By Lemma~\ref{lemma:l_propsV}, we can derive that $\la \wb_t, \zb_i\ra \geq 0$ for all $i\in[n]$ and $t\geq t_0$. Then we have $e^{-\min_{i\in [n]}\la\wb_{t}, \zb_i\ra}\leq \frac{1}{\log 2} \max_{i\in [n]}\ell(\la\wb_{t}, \zb_i\ra)\leq \frac{n}{\log 2}\cR(\wb_t)$ by Lemma~\ref{lemma:l_propsIV}. Plugging this result into \eqref{eq:adam_loss_taylor_III} and taking $\log$ on both sides, we finish the proof for Lemma~\ref{lemma:margin_iterates}. 
\end{proof}
\subsection{Proof of Lemma~\ref{lemma:inf_norm_upper_boundII}}
Before we prove Lemma~\ref{lemma:inf_norm_upper_boundII}, we first present and prove Lemma~\ref{lemma:inf_norm_upper_boundIII} which will be used for proving Lemma~\ref{lemma:inf_norm_upper_boundII}.

\begin{lemma}\label{lemma:inf_norm_upper_boundIII}
For Adam iterations defined in \eqref{eq:mt_update}-\eqref{eq:def_adam_wt_update} with $\beta_1\leq \beta_2$, for any $t_0 \in \NN_+$, $t > t_0$, and all $k \in [d]$,
    \begin{align}\label{eq:inf_norm_upper_sum_I}
    &\big|\wb_t[k] - \wb_{t_0}[k]\big| \leq \Big(\sum_{\tau=t_0}^{t-1} \eta_\tau\Big)\cdot\notag\\
      &\bigg(1+\frac{\beta_2-\beta_1}{1-\beta_2}\frac{\sum_{\tau=t_0}^{t-1} \beta_1^{\tau-t_0}\eta_\tau}{\sum_{\tau=t_0}^{t-1} \eta_\tau}+\frac{(\beta_2-\beta_1)(1-\beta_1)}{1-\beta_2}\sum_{\tau=t_0}^{t-1}\frac{\eta_\tau\frac{1-\beta_1^{\tau-t_0}}{1-\beta_1}-\sum_{\tau'=1}^{t-\tau-1}\eta_{\tau+\tau'}\beta_1^{\tau'-1}}{\sum_{\tau=t_0}^{t-1}\eta_\tau}\log(\vb_\tau[k])\bigg)^{\frac{1}{2}}.
    \end{align}
\end{lemma}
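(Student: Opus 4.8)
Writing the update \eqref{eq:def_adam_wt_update} coordinate-wise and summing from $t_0$ to $t-1$ gives $\wb_t[k]-\wb_{t_0}[k]=-\sum_{\tau=t_0}^{t-1}\eta_\tau\,\mb_\tau[k]/\sqrt{\vb_\tau[k]}$ (well-defined since $\vb_\tau[k]>0$ at every finite step), so the triangle inequality followed by Cauchy--Schwarz yields
\begin{align*}
\big|\wb_t[k]-\wb_{t_0}[k]\big|\ \le\ \sum_{\tau=t_0}^{t-1}\eta_\tau\,\frac{|\mb_\tau[k]|}{\sqrt{\vb_\tau[k]}}\ \le\ \Big(\sum_{\tau=t_0}^{t-1}\eta_\tau\Big)^{1/2}\Big(\sum_{\tau=t_0}^{t-1}\eta_\tau\,\frac{\mb_\tau[k]^2}{\vb_\tau[k]}\Big)^{1/2}.
\end{align*}
Thus it suffices to show that $\sum_{\tau=t_0}^{t-1}\eta_\tau\,\mb_\tau[k]^2/\vb_\tau[k]$ is at most $\sum_{\tau=t_0}^{t-1}\eta_\tau$ times the bracketed expression appearing inside the square root on the right-hand side of \eqref{eq:inf_norm_upper_sum_I}; the claim then follows by taking square roots.

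To bound this sum I would first apply Cauchy--Schwarz to the moving average $\mb_\tau[k]=(1-\beta_1)\sum_{s=0}^{\tau}\beta_1^{\tau-s}g_s$, where $g_s:=\nabla\cR(\wb_s)[k]$, obtaining $\mb_\tau[k]^2\le(1-\beta_1)\sum_{s=0}^{\tau}\beta_1^{\tau-s}g_s^2$; this is sharp when $\beta_1=\beta_2$, where it reduces to $\mb_\tau[k]^2\le\vb_\tau[k]$, which explains why every correction term carries the factor $\beta_2-\beta_1$. Setting $A_\tau:=\sum_{s=0}^{\tau}\beta_1^{\tau-s}g_s^2$ and $V_\tau:=\vb_\tau[k]/(1-\beta_2)=\sum_{s=0}^{\tau}\beta_2^{\tau-s}g_s^2$, we get $\mb_\tau[k]^2/\vb_\tau[k]\le\tfrac{1-\beta_1}{1-\beta_2}\cdot A_\tau/V_\tau$. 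Since $\beta_1\le\beta_2$ gives $A_\tau\le V_\tau$, I write $A_\tau/V_\tau=1-(V_\tau-A_\tau)/V_\tau$ and split $\tfrac{1-\beta_1}{1-\beta_2}=1+\tfrac{\beta_2-\beta_1}{1-\beta_2}$; the task reduces to showing that $\tfrac{1-\beta_1}{1-\beta_2}\sum_\tau\eta_\tau(V_\tau-A_\tau)/V_\tau$ absorbs the leftover $\tfrac{\beta_2-\beta_1}{1-\beta_2}\sum_\tau\eta_\tau$ up to exactly the two correction terms.

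For this step I would use the recursion $V_\tau-A_\tau=\beta_1(V_{\tau-1}-A_{\tau-1})+(\beta_2-\beta_1)V_{\tau-1}$, which is immediate from $V_\tau=\beta_2V_{\tau-1}+g_\tau^2$ and $A_\tau=\beta_1A_{\tau-1}+g_\tau^2$, and which unrolls, using $V_{-1}=0$, to $V_\tau-A_\tau=(\beta_2-\beta_1)\sum_{s=0}^{\tau-1}\beta_1^{\tau-1-s}V_s$. Dropping the nonnegative terms with $s<t_0$ and then invoking $u\ge1+\log u$ (log-concavity) for $u=V_s/V_\tau$, noting that the normalization $1-\beta_2$ cancels in the ratio $V_s/V_\tau=\vb_s[k]/\vb_\tau[k]$, gives
\begin{align*}
\frac{V_\tau-A_\tau}{V_\tau}\ \ge\ (\beta_2-\beta_1)\bigg[\frac{1-\beta_1^{\tau-t_0}}{1-\beta_1}\big(1-\log\vb_\tau[k]\big)+\sum_{s=t_0}^{\tau-1}\beta_1^{\tau-1-s}\log\vb_s[k]\bigg].
\end{align*}
Substituting this, the ``$1$'' piece produces $\tfrac{\beta_2-\beta_1}{1-\beta_2}\sum_\tau\eta_\tau(1-\beta_1^{\tau-t_0})$, which combines with $\sum_\tau\eta_\tau+\tfrac{\beta_2-\beta_1}{1-\beta_2}\sum_\tau\eta_\tau$ to leave precisely $\sum_\tau\eta_\tau+\tfrac{\beta_2-\beta_1}{1-\beta_2}\sum_\tau\beta_1^{\tau-t_0}\eta_\tau$, while the two $\log$-pieces, after swapping $\sum_{\tau}\eta_\tau\sum_{s=t_0}^{\tau-1}\beta_1^{\tau-1-s}(\cdot)$ into $\sum_\tau(\cdot)\sum_{\tau'=1}^{t-\tau-1}\eta_{\tau+\tau'}\beta_1^{\tau'-1}$, collapse into the stated coefficient $\eta_\tau\tfrac{1-\beta_1^{\tau-t_0}}{1-\beta_1}-\sum_{\tau'=1}^{t-\tau-1}\eta_{\tau+\tau'}\beta_1^{\tau'-1}$ of $\log\vb_\tau[k]$.

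The routine ingredients are the two Cauchy--Schwarz applications, the one-line log-concavity bound, and the recursion; the genuine difficulty — and where I expect to spend the most care — is the final paragraph's algebraic reconciliation: performing the nested-summation swap so that the coefficient of each $\log\vb_\tau[k]$ emerges \emph{exactly} as in \eqref{eq:inf_norm_upper_sum_I} with no residual boundary terms, and verifying that the indices $s<t_0$ inside $A_\tau,V_\tau$ are harmlessly discarded by nonnegativity rather than generating spurious contributions.
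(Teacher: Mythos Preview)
Your proposal is correct, and the core ingredients are the same as the paper's: Cauchy--Schwarz to pull out the factor $\sum_\tau\eta_\tau$, the log-concavity inequality $u\ge 1+\log u$ applied to ratios $\vb_s[k]/\vb_\tau[k]$, and a final summation swap to isolate the coefficient of each $\log\vb_\tau[k]$. The bookkeeping, however, differs in two places. First, the paper unrolls $\mb_\tau$ only back to $t_0$, retaining a residual $\beta_1^{\tau-t_0+1}\mb_{t_0-1}[k]$, and performs a single Cauchy--Schwarz across the resulting double sum; the residual then pairs with a boundary $\vb_{t_0-1}[k]$ term whose combined coefficient $\beta_1\alpha^2-\tfrac{\beta_2(1-\beta_1)}{1-\beta_2}\le 0$ (via Lemma~\ref{lemma:inf_norm_upper_boundI}) allows it to be discarded. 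You instead unroll to $0$, apply Cauchy--Schwarz twice, and later drop the nonnegative $V_s$ terms with $s<t_0$ directly. Second, the paper arrives at the ratios $\vb_{\tau-\tau'}/\vb_\tau$ by substituting $g_s^2=(\vb_s-\beta_2\vb_{s-1})/(1-\beta_2)$ and rearranging the telescoped sum by hand, while your recursion $V_\tau-A_\tau=\beta_1(V_{\tau-1}-A_{\tau-1})+(\beta_2-\beta_1)V_{\tau-1}$ packages the same algebra more cleanly. Your route avoids invoking $\alpha$ altogether at the price of a slightly looser second Cauchy--Schwarz (you discard a factor $1-\beta_1^{\tau+1}$); both reach the identical final bound.
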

\begin{proof}[Proof of Lemma~\ref{lemma:inf_norm_upper_boundIII}]
    If we consider implementing the Cauchy-Schwartz inequality on the sum of the iterations, we can get,
    \begin{align}\label{eq:inf_norm_upper_sum_II}
        &\big|\wb_t[k] - \wb_{t_0}[k]\big| = \bigg|\sum_{\tau=t_0}^{t-1} \eta_\tau\frac{\mb_\tau[k]}{\sqrt{\vb_\tau[k]}}\bigg|\notag\\
        =&\bigg|\sum_{\tau=t_0}^{t-1}\frac{\eta_\tau}{\sqrt{\vb_\tau[k]}}\Big(\beta_1^{\tau-t_0+1}\mb_{t_0-1}[k]+\sum_{\tau'=0}^{\tau-t_0}\beta_1^{\tau'}(1-\beta_1)\cdot\nabla \cR(\wb_{\tau-\tau'})[k]\Big)\bigg|\notag\\
        \leq& \bigg[\sum_{\tau=t_0}^{t-1}\frac{\eta_\tau}{\vb_\tau[k]}\Big(\beta_1^{\tau-t_0+1}\mb_{t_0-1}[k]^2+\sum_{\tau'=0}^{\tau-t_0}\beta_1^{\tau'}(1-\beta_1)\cdot\nabla \cR(\wb_{\tau-\tau'})[k]^2\Big)\bigg]^{\frac{1}{2}}\notag\\
        &\cdot\bigg[\sum_{\tau=t_0}^{t-1}\eta_\tau\Big(\beta_1^{\tau-t_0+1}+\sum_{\tau'=0}^{\tau-t_0}\beta_1^{\tau'}(1-\beta_1)\Big)\bigg]^{\frac{1}{2}}\notag\\
        =& \bigg[\underbrace{\sum_{\tau=t_0}^{t-1}\Big(\frac{\eta_\tau}{\vb_\tau[k]}\beta_1^{\tau-t_0+1}\mb_{t_0-1}[k]^2+\sum_{\tau'=0}^{\tau-t_0}\eta_\tau\beta_1^{\tau'}\frac{1-\beta_1}{1-\beta_2}\frac{\vb_{\tau-\tau'}[k]-\beta_2 \vb_{\tau-\tau'-1}[k]}{\vb_\tau[k]}\Big)}_{(*)}\bigg]^{\frac{1}{2}}\bigg(\sum_{\tau=t_0}^{t-1}\eta_\tau\bigg)^\frac{1}{2}.
    \end{align}
    The inequality is by Cauchy-Schwartz inequality and the second equality is from
    \begin{align*}
        (1-\beta_2)\cdot\nabla \cR(\wb_{\tau-\tau'})[k]^2=\vb_{\tau-\tau'}[k] - \beta_2\vb_{\tau-\tau'-1}[k],
    \end{align*}
    and
    \begin{align*}
        \sum_{\tau=t_0}^{t-1}\eta_\tau\Big(\beta_1^{\tau-t_0+1}+\sum_{\tau'=0}^{\tau-t_0}\beta_1^{\tau'}(1-\beta_1)\Big) = \sum_{\tau=t_0}^{t-1}\eta_\tau\Big(\beta_1^{\tau-t_0+1}+1-\beta_1^{\tau-t_0+1}\Big) = \sum_{\tau=t_0}^{t-1}\eta_\tau.
    \end{align*}
    For the first part $(*)$ defined in \eqref{eq:inf_norm_upper_sum_II}, we could re-arrange it as,
    \begin{align}\label{eq:inf_norm_upper_sum_III}
        (*)&= \sum_{\tau=t_0}^{t-1}\frac{\eta_\tau}{\vb_\tau[k]}\beta_1^{\tau-t_0+1}\mb_{t_0-1}[k]^2\notag\\
        &\quad+\sum_{\tau=t_0}^{t-1}\eta_\tau(1-\beta_1)\frac{\vb_{\tau}[k] - \beta_2\beta_1^{\tau-t_0}\vb_{t_0-1}[k]+(\beta_1-\beta_2)\sum_{\tau'=1}^{\tau-t_0}\beta_1^{\tau'-1}\vb_{\tau-\tau'}[k]}{(1-\beta_2)\vb_\tau[k]}\notag\\
        &\leq \sum_{\tau=t_0}^{t-1}\frac{\eta_\tau}{\vb_\tau[k]}\beta_1^{\tau-t_0}\Big(\beta_1\alpha^2-\frac{\beta_2(1-\beta_1)}{1-\beta_2}\Big)\vb_{t_0-1}[k] + \frac{1-\beta_1}{1-\beta_2}\sum_{\tau=t_0}^{t-1}\eta_\tau\notag\\ 
        &\quad + \frac{(1-\beta_1)(\beta_1-\beta_2)}{1-\beta_2}\sum_{\tau=t_0}^{t-1}\sum_{\tau'=1}^{\tau-t_0}\eta_\tau\beta_1^{\tau'-1}\frac{\vb_{\tau-\tau'}[k]}{\vb_{\tau}[k]}\notag\\
        &\leq \frac{1-\beta_1}{1-\beta_2}\sum_{\tau=t_0}^{t-1}\eta_\tau + \frac{(1-\beta_1)(\beta_1-\beta_2)}{1-\beta_2}\sum_{\tau=t_0}^{t-1}\sum_{\tau'=1}^{\tau-t_0}\eta_\tau\beta_1^{\tau'-1}\frac{\vb_{\tau-\tau'}[k]}{\vb_{\tau}[k]}\notag\\
        &= \sum_{\tau =t_0}^{t-1}\eta_\tau + \frac{\beta_2-\beta_1}{1-\beta_2}\sum_{\tau =t_0}^{t-1}\eta_\tau - \frac{(1-\beta_1)(\beta_2-\beta_1)}{1-\beta_2}\sum_{\tau=t_0}^{t-1}\sum_{\tau'=1}^{\tau-t_0}\eta_\tau\beta_1^{\tau'-1}\frac{\vb_{\tau-\tau'}[k]}{\vb_{\tau}[k]}\notag\\
        &=\sum_{\tau =t_0}^{t-1}\eta_\tau + \frac{\beta_2-\beta_1}{1-\beta_2}\sum_{\tau =t_0}^{t-1}\eta_\tau\bigg(1-(1-\beta_1)\sum_{\tau'=1}^{\tau-t_0}\beta_1^{\tau'-1}\frac{\vb_{\tau-\tau'}[k]}{\vb_{\tau}[k]}\bigg)\notag\\
        &=\sum_{\tau =t_0}^{t-1}\eta_\tau + \frac{\beta_2-\beta_1}{1-\beta_2}\sum_{\tau =t_0}^{t-1}\eta_\tau\bigg(\beta_1^{\tau-t_0} + (1-\beta_1)\sum_{\tau'=1}^{\tau-t_0}\beta_1^{\tau'-1}-(1-\beta_1)\sum_{\tau'=1}^{\tau-t_0}\beta_1^{\tau'-1}\frac{\vb_{\tau-\tau'}[k]}{\vb_{\tau}[k]}\bigg)\notag\\
        &=\sum_{\tau =t_0}^{t-1}\eta_\tau + \frac{\beta_2-\beta_1}{1-\beta_2}\sum_{\tau =t_0}^{t-1}\eta_\tau\bigg(\beta_1^{\tau-t_0} + (1-\beta_1)\sum_{\tau'=1}^{\tau-t_0}\beta_1^{\tau'-1}\Big(1 - \frac{\vb_{\tau-\tau'}[k]}{\vb_{\tau}[k]}\Big)\bigg)\notag\\
        &\leq \sum_{\tau =t_0}^{t-1}\eta_\tau + \frac{\beta_2-\beta_1}{1-\beta_2}\sum_{\tau =t_0}^{t-1}\eta_\tau\bigg(\beta_1^{\tau-t_0} + (1-\beta_1)\sum_{\tau'=1}^{\tau-t_0}\beta_1^{\tau'-1}\log\Big(\frac{\vb_{\tau}[k]}{\vb_{\tau-\tau'}[k]}\Big)\bigg)\notag\\
        &=\sum_{\tau =t_0}^{t-1}\eta_\tau + \frac{\beta_2-\beta_1}{1-\beta_2}\sum_{\tau =t_0}^{t-1}\eta_\tau\beta_1^{\tau-t_0} +\frac{(\beta_2-\beta_1)(1-\beta_1)}{1-\beta_2}\sum_{\tau =t_0}^{t-1}\eta_\tau\sum_{\tau'=1}^{\tau-t_0}\beta_1^{\tau'-1}\Big[\log(\vb_{\tau}[k]) -\log(\vb_{\tau-\tau'}[k])\Big]\notag\\
        &=\sum_{\tau=t_0}^{t-1}\eta_\tau + \frac{\beta_2-\beta_1}{1-\beta_2}\sum_{\tau =t_0}^{t-1}\eta_\tau\beta_1^{\tau-t_0}\notag\\
        &\quad +\frac{(\beta_2-\beta_1)(1-\beta_1)}{1-\beta_2}\bigg(\sum_{\tau =t_0}^{t-1}\eta_\tau \frac{1-\beta_1^{\tau-t_0}}{1-\beta_1}\log(\vb_{\tau}[k]) - \sum_{\tau =t_0}^{t-1}\eta_\tau \sum_{\tau'=1}^{\tau-t_0}\beta_1^{\tau'-1}\log(\vb_{\tau-\tau'}[k])\bigg)\notag\\
        &\xlongequal{\tau^* = \tau-\tau'}\sum_{\tau =t_0}^{t-1}\eta_\tau + \frac{\beta_2-\beta_1}{1-\beta_2}\sum_{\tau =t_0}^{t-1}\eta_\tau\beta_1^{\tau-t_0}\notag\\
        &\quad \quad \quad \quad +\frac{(\beta_2-\beta_1)(1-\beta_1)}{1-\beta_2}\bigg(\sum_{\tau =t_0}^{t-1}\eta_\tau \frac{1-\beta_1^{\tau-t_0}}{1-\beta_1}\log(\vb_{\tau}[k]) - \sum_{\tau =t_0}^{t-1}\eta_\tau \sum_{\tau^{*}=t_0}^{\tau-1}\beta_1^{\tau-\tau^*-1}\log(\vb_{\tau^*}[k])\bigg)\notag\\
        &=\sum_{\tau =t_0}^{t-1}\eta_\tau + \frac{\beta_2-\beta_1}{1-\beta_2}\sum_{\tau =t_0}^{t-1}\eta_\tau\beta_1^{\tau-t_0}\notag\\
        &\quad \quad \quad \quad +\frac{(\beta_2-\beta_1)(1-\beta_1)}{1-\beta_2}\bigg(\sum_{\tau =t_0}^{t-1}\eta_\tau \frac{1-\beta_1^{\tau-t_0}}{1-\beta_1}\log(\vb_{\tau}[k]) - \sum_{\tau^* =t_0}^{t-1}\log(\vb_{\tau^*}[k]) \sum_{\tau =\tau^*+1}^{t-1}\eta_\tau \beta_1^{\tau-\tau^*-1}\bigg)\notag\\
        &=\sum_{\tau =t_0}^{t-1}\eta_\tau + \frac{\beta_2-\beta_1}{1-\beta_2}\sum_{\tau =t_0}^{t-1}\eta_\tau\beta_1^{\tau-t_0} \notag\\
        &\quad \quad \quad \quad +\frac{(\beta_2-\beta_1)(1-\beta_1)}{1-\beta_2}\sum_{\tau =t_0}^{t-1}\bigg(\eta_\tau \frac{1-\beta_1^{\tau-t_0}}{1-\beta_1} - \sum_{\tau'=1}^{t-\tau-1}\eta_{\tau+\tau'} \beta_1^{\tau'-1}\bigg)\log(\vb_{\tau}[k]). 
    \end{align}
    Plugging \eqref{eq:inf_norm_upper_sum_III} into \eqref{eq:inf_norm_upper_sum_II}, then we derive the result of \eqref{eq:inf_norm_upper_sum_I}.
\end{proof}
Now we are ready to prove Lemma~\ref{lemma:inf_norm_upper_boundII}.
\begin{proof}[Proof of Lemma~\ref{lemma:inf_norm_upper_boundII}]
Considering the last two terms on the RHS of \eqref{eq:inf_norm_upper_sum_III}, for the second term, we can upper-bound it as,
\begin{align*}
    \frac{\beta_2-\beta_1}{1-\beta_2}\sum_{\tau =t_0}^{t-1}\eta_\tau\beta_1^{\tau-t_0}\leq \frac{\eta_{t_0}(\beta_2-\beta_1)}{(1-\beta_1)(1-\beta_2)},
\end{align*}
since $\eta_t$ is decreasing. Let $C_5=c_v^2$ in statement of Lemma~\ref{lemma:inf_norm_upper_boundIII}. Then for the third term, by Lemma~\ref{lemma:difference_momentum_gradient} and our condition $\cR(\wb_t)\leq\frac{1}{\sqrt{B^2+C_5\eta_0}}$, we have
\begin{align*}
    \vb_{t}[k]\leq \nabla \cR(\wb_t)[k]^2+c_v^2\eta_t\cdot\cG(\wb_t)^2\leq (B^2+C_5\eta_0)\cdot\cR(\wb_t)^2\leq 1
\end{align*}
for all $k\in[d]$, which implies that $\log (\vb_{t}[k]) < 0$ for all $t > t_0$. On the other hand, 
\begin{align*}
    \log (\vb_{t}[k]) \geq \log (\beta_2^t (1-\beta_2)\nabla\cR(\wb_0)[k]^2)\geq t\log \beta_2 +\log (1-\beta_2)+\log \rho ,
\end{align*}
and
\begin{align*}
    \eta_\tau \frac{1-\beta_1^{\tau-t_0}}{1-\beta_1} - \sum_{\tau'=1}^{t-\tau-1}\eta_{\tau+\tau'} \beta_1^{\tau'-1}\geq \eta_\tau\Big(\frac{1-\beta_1^{\tau-t_0}}{1-\beta_1}-\sum_{\tau'=1}^{t-\tau-1} \beta_1^{\tau'-1}\Big)\geq -\eta_{\tau}\frac{\beta_1^{\tau-t_0}}{1-\beta_1}.
\end{align*}
Combining these results, we can upper-bound the third term as,
\begin{align*}
    &\quad\frac{(\beta_2-\beta_1)(1-\beta_1)}{1-\beta_2}\sum_{\tau =t_0}^{t-1}\bigg(\eta_\tau \frac{1-\beta_1^{\tau-t_0}}{1-\beta_1} - \sum_{\tau'=1}^{t-\tau-1}\eta_{\tau+\tau'} \beta_1^{\tau'-1}\bigg)\log(\vb_{\tau}[k]) \\&\leq \frac{\beta_2-\beta_1}{1-\beta_2}\sum_{\tau =t_0}^{t-1}\eta_{\tau}\beta_1^{\tau-t_0}\big(-\tau\log \beta_2-\log (1-\beta_2)-\log \rho\big)\\
    &\leq \frac{\eta_{t_0}(\beta_2-\beta_1)}{(1-\beta_1)(1-\beta_2)}\bigg[\Big(t_0+\frac{1}{1-\beta_1}\Big)(-\log \beta_2)-\log (1-\beta_2)-\log \rho\bigg].
\end{align*}
Plugging these results into \eqref{eq:inf_norm_upper_sum_I} with Bernoulli inequality, we have,
\begin{align*}
    |\wb_t[k]|&\leq |\wb_{t_0}[k]| + \sum_{\tau=t_0}^{t-1}\eta_\tau + \frac{\eta_{t_0}(\beta_2-\beta_1)}{2(1-\beta_1)(1-\beta_2)}\bigg[\Big(t_0+\frac{1}{1-\beta_1}\Big)(-\log \beta_2)-\log (1-\beta_2)-\log \rho + 1\bigg]\\
    &\leq \sum_{\tau=t_0}^{t-1}\eta_\tau + \alpha\sum_{\tau=0}^{t_0-1}\eta_\tau + \frac{\eta_{t_0}(\beta_2-\beta_1)}{2(1-\beta_1)(1-\beta_2)}\bigg[\Big(t_0+\frac{1}{1-\beta_1}\Big)(-\log \beta_2)-\log (1-\beta_2)-\log \rho + 1\bigg]\\
    &\leq \sum_{\tau=t_0}^{t-1}\eta_\tau + \alpha\sum_{\tau=0}^{t_0-1}\eta_\tau + C_6' \eta_0 t_0\leq \sum_{\tau=t_0}^{t-1}\eta_\tau + C_6\sum_{\tau=0}^{t_0-1}\eta_\tau,
\end{align*}
where $C_6$ and $C_6'$ are constants only depending on $\beta_1, \beta_2$ and $B$. The second inequality is from triangle inequality of $|\wb_{t_0}[k]|$ and Lemma~\ref{lemma:inf_norm_upper_boundI}. The third inequality is from our condition $t_0 > -\log \rho$ and the last inequality is because $\eta_t$ is decreasing. Since the preceding result holds for all $k\in [d]$, it also holds for $\|\wb_t\|\infty$, which finishes the proof.
\end{proof}

\section{Complete Proof for Theorem~\ref{thm:convergence_rate} and Calculation Details for Corollary~\ref{crlry:margin_rate}}\label{section:theorem_proof_appendix}
\subsection{Complete Proof for Theorem~\ref{thm:convergence_rate}}\label{section:theorem_proof}

\begin{proof}[Proof of Theorem~\ref{thm:convergence_rate}]
For $C_1, C_2$ defined in Lemma~\ref{lemma:smoothness}, 
 it's trivial that when $t$ is large we have the following inequalities hold:(i).$\beta_1^{t/2}\leq \frac{1}{6C_1}$; 
  (ii).$\eta_t\leq \min\big\{\frac{\gamma^2}{36C_2^2d^2}, \frac{\gamma}{6C_2d}\big\}$. 
  We use $t_2 = t_2(d,\beta_1, \beta_2, \gamma, B)$ to denote the first time that all the preceding inequalities hold after $t_1= t_1(\beta_1, \beta_2, B)$ in Assumption~\ref{assumption:eta}. Plugging all aforementioned inequality conditions into Lemma~\ref{lemma:smoothness}, we can derive that for all $t\geq t_2$,
\begin{align}\label{eq:adam_loss_decrease}
     \cR(\wb_{t+1})\leq \cR(\wb_{t}) - \frac{\eta_t\gamma}{2}\cG(\wb_t).
    \end{align}
    Therefore we prove that $\cR(\wb_{t+1})<\cR(\wb_{t})$ for all $t \geq t_2$. For further proof, we separately consider $\ell = \ell_{\exp}$ and $\ell = \ell_{\log}$.
    
    When $\ell = \ell_{\exp}$, by definition we have $\cG(\wb_t)= \cR(\wb_t)$. Therefore, for all $t\geq t_2$, \eqref{eq:adam_loss_decrease} can be re-written as
    \begin{align*}
        \cR(\wb_{t+1})\leq \Big(1-\frac{\gamma\eta_t}{2}\Big)\cdot \cR(\wb_{t})\leq \cR(\wb_{t})\cdot e^{-\frac{\gamma\eta_t}{2}}\leq\cR(\wb_{t_2})\cdot e^{-\frac{\gamma\sum_{\tau=t_2}^{t}\eta_\tau}{2}}.
    \end{align*}
    Although $\ell_{\exp}$ is not Lipschitz continuous over $\RR$, we have 
    \begin{align*}
        \cR(\wb_{t_2}) \leq \cR(\wb_{0})\cdot \exp{\Big(\frac{1}{n}\sum_{i=1}^n\|\xb_i\|_1\|\wb_{t_2}-\wb_0\|_\infty\Big)}\leq \cR(\wb_{0})\cdot \exp{\Big(\alpha B\sum_{\tau=0}^{t_2-1}\eta_\tau\Big)}
    \end{align*} by Lemma~\ref{lemma:inf_norm_upper_boundI} and triangle inequality. Letting $\cR_0=\min\{\frac{\log 2}{n}, \frac{1}{\sqrt{B^2+c_v^2\eta_0}}\}$ and $t_0 = t_0(n, d,\beta_1, \beta_2, \gamma, B, t_1, \wb_0)$ be the first time be the first time such that $ \sum_{\tau =t_2}^{t_0-1}\eta_\tau\geq \frac{2\alpha B}{\gamma}\sum_{\tau=0}^{t_2-1}\eta_\tau + \frac{2\log \cR(\wb_0) - 2\log \cR_0}{\gamma}$ and $t_0\geq -\log \rho$. By such definition of $t_0$, we can derive that for all $t\geq t_0$,
    
    \begin{align*}
        \cR(\wb_t)\leq \cR(\wb_{t_2})\cdot e^{-\frac{\gamma\sum_{\tau=t_0}^{t}\eta_\tau}{2}} \cdot e^{-\frac{\gamma\sum_{\tau=t_2}^{t_0-1}\eta_\tau}{2}}\leq \cR_0\cdot e^{-\frac{\gamma\sum_{\tau=t_0}^{t}\eta_\tau}{2}}.
    \end{align*}
Since $t_0$ satisfies all the requirements in Lemma~\ref{lemma:margin_iterates} and Lemma~\ref{lemma:inf_norm_upper_boundII}, we can finally derive that
    \begin{align*}
        \bigg|\min_{i\in[n]}\frac{\la\wb_{t}, y_i\cdot\xb_i\ra}{\|\wb_t\|_\infty}-\gamma\bigg|
        &\leq \frac{\gamma C_6\sum_{\tau=0}^{t_0-1}\eta_\tau + C_3 d \Big(\sum_{\tau=t_0}^{t-1}\eta_\tau^{\frac{3}{2}}+\sum_{\tau=t_0}^{t-1}\eta_\tau^2\Big) + C_4}{\sum_{\tau=t_0}^{t-1}\eta_\tau + C_6\sum_{\tau=0}^{t_0-1}\eta_\tau}\\
        &\leq O\Bigg(\frac{\sum_{\tau=0}^{t_0-1}\eta_\tau + d\sum_{\tau=t_0}^{t-1}\eta_\tau^{\frac{3}{2}}}{\sum_{\tau=0}^{t-1}\eta_\tau}\Bigg),
    \end{align*}
    since the decay learning rate $\eta_t\to 0$ by Assumption~\ref{assumption:eta_decrease}, and $C_3, C_4$ and $C_6$ are constants solely depending on $\beta_1, \beta_2$ and $B$.

When $\ell = \ell_{\log}$, by taking a telescoping sum on the result of \eqref{eq:adam_loss_decrease}, we obtain $\gamma\sum_{\tau=t_2}^t \eta_\tau \cG(\wb_t)\leq 2\cR(\wb_{t_2})$ for any $t\geq t_2$.
    Since $\ell_{\log}$ is $1$-Lipschitz continuous, we can derive that $\cR(\wb_{t_1})\leq \cR(\wb_0) + \alpha B\sum_{\tau=0}^{t_2-1} \eta_\tau$. Letting $\cR_0=\min\{\frac{\log 2}{n}, \frac{1}{\sqrt{B^2+c_v^2\eta_0}}\}$ and $t_0 = t_0(n, d,\beta_1, \beta_2, \gamma, B, t_1, \wb_0)$ be the first time such that $\sum_{\tau =t_2}^{t_0-1}\eta_\tau\geq \frac{4\cR(\wb_0) +4\alpha B\sum_{\tau=0}^{t_2-1} \eta_\tau}{\gamma\cR_0}$ and $t_0 \geq -\log \rho$, then we can conclude that
    \begin{align*}
        \min_{\tau\in[t_2, t_0]}\cG(\wb_\tau)\leq \frac{2\cR(\wb_{t_2})}{\gamma\sum_{\tau =t_2}^{t_0-1}\eta_\tau} \leq \frac{2\cR(\wb_0) + 2\alpha B\sum_{\tau=0}^{t_2-1} \eta_\tau}{\gamma\sum_{\tau =t_2}^{t_0-1}\eta_\tau}\leq \frac{\cR_0}{2}.
    \end{align*}
    Let $\tau' =\argmin_{\tau\in[t_2, t_0]}\cG(\wb_\tau)$, then we obtain that $\la \zb_i,\wb_{\tau'}\ra \geq 0$ for all $i\in [n]$ by Lemma~\ref{lemma:l_propsV}. Moreover, by Lemma~\ref{lemma:l_propsIV} and the monotonicity of $\cR(\wb_t)$ derived in \eqref{eq:adam_loss_decrease}, we can conclude that $\cR(\wb_{t}) < \cR(\wb_{\tau'})\leq 2\cG(\wb_{\tau'}) \leq \cR_0$ for all $t > t_0$. Similarly, the inequality $\cR(\wb_{t}) < \cR_0$ also implies $\la \zb_i,\wb_{t}\ra \geq 0$ for all $t > t_0$ by Lemma~\ref{lemma:l_propsV}, and correspondingly $\cR(\wb_t)\leq 2\cG(\wb_t)$ by Lemma~\ref{lemma:l_propsIV}.
    Then for all $t\geq t_0$, we can re-write \eqref{eq:adam_loss_decrease} as
    \begin{align*}
        \cR(\wb_t)&\leq \cR(\wb_{t-1})-\frac{\gamma\eta_{t-1}}{2}\cdot\cG(\wb_t)\leq \Big(1-\frac{\gamma\eta_{t-1}}{4}\Big)\cdot\cR(\wb_{t-1})\\
        &\leq \cR(\wb_{t-1})\cdot e^{-\frac{\gamma\eta_{t-1}}{4}}\leq \cR(\wb_{t_0}) \cdot e^{-\frac{\gamma}{4}\sum_{\tau=t_0}^{t-1}\eta_\tau}\leq \cR_0\cdot e^{-\frac{\gamma}{4}\sum_{\tau=t_0}^{t-1}\eta_\tau}.
    \end{align*}
    By this result and Lemma~\ref{lemma:ratio_log}, we have
    \begin{align}\label{eq:ratio_log}
        \frac{\cG(\wb_t)}{\cR(\wb_t)}\geq 1-\frac{n\cR(\wb_t)}{2}\geq 1-\frac{n\cR_0\cdot e^{-\frac{\gamma}{4}\sum_{\tau=t_0}^{t-1}\eta_\tau}}{2}\geq 1-e^{-\frac{\gamma}{4}\sum_{\tau=t_0}^{t-1}\eta_\tau}.
    \end{align}
    Since $t_0$ satisfies all the requirements in Lemma~\ref{lemma:margin_iterates} and Lemma~\ref{lemma:inf_norm_upper_boundII}, we can combine Lemma~\ref{lemma:margin_iterates}, Lemma~\ref{lemma:inf_norm_upper_boundII} and \eqref{eq:ratio_log} and finally derive that
    \begin{align*}
\bigg|\min_{i\in[n]}\frac{\la\wb_{t}, y_i\cdot\xb_i\ra}{\|\wb_t\|_\infty}-\gamma\bigg|
        &\leq \frac{\gamma C_6\sum_{\tau=0}^{t_0-1}\eta_\tau + C_3 d \Big(\sum_{\tau=t_0}^{t-1}\eta_\tau^{\frac{3}{2}}+\sum_{\tau=t_0}^{t-1}\eta_\tau^2\Big) + C_4 + \sum_{\tau=t_0}^{t-1}\eta_\tau e^{-\frac{\gamma}{4}\sum_{\tau'=t_0}^{\tau-1}\eta_\tau'}}{\sum_{\tau=t_0}^{t-1}\eta_\tau + C_6\sum_{\tau=0}^{t_0-1}\eta_\tau}\\
        &\leq O\Bigg(\frac{\sum_{\tau=0}^{t_0-1}\eta_\tau + d\sum_{\tau=t_0}^{t-1}\eta_\tau^{\frac{3}{2}}+ \sum_{\tau=t_0}^{t-1}\eta_\tau e^{-\frac{\gamma}{4}\sum_{\tau'=t_0}^{\tau-1}\eta_\tau'}}{\sum_{\tau=0}^{t-1}\eta_\tau}\Bigg),
    \end{align*}
    since the decay learning rate $\eta_t\to 0$ by Assumption~\ref{assumption:eta_decrease}, and $C_3, C_4$ and $C_6$ are constants solely depending on $\beta_1, \beta_2$ and $B$.
\end{proof}
\subsection{Calculation Details for Corollary~\ref{crlry:margin_rate}}\label{section:margin_rate}
 In this section, we use the notation $C_1, C_2, C_3, \ldots$ to denote constants solely depending on $\beta_1, \beta_2, \gamma$ and $B$. While it may seem an abuse of notation as these symbols could be different from Section~\ref{section:proof_overview} or denote distinct constants across different formulas, we assert that their exact values are immaterial for our analysis. Therefore, we opt for this shorthand notation for the sake of brevity and clarity, without concern for the precise numerical values of these constants in each instance. 
 
 For given $\eta_t=(t+2)^{-a}$ with $a\in (0,1]$, recall the definition of $t_2$ in Appendix~\ref{section:theorem_proof} to be the first time such that (i).$\beta_1^{t/2}\leq \frac{1}{6C_1}$; 
  (ii).$\eta_t\leq \min\big\{\frac{\gamma^2}{36C_2^2d^2}, \frac{\gamma}{6C_2d}\big\}$. We can derive that \begin{align*}
      t_2 = \max\bigg\{-\frac{2\log 6 + 2\log C_1}{\log\beta_1}, \Big(\frac{36C_2^2d^2}{\gamma^2}\Big)^{\frac{1}{a}}, \Big(\frac{6C_2d}{\gamma}\Big)^{\frac{1}{a}}\bigg\} = C_3d^{\frac{2}{a}}.
  \end{align*}
  We consider the following four cases,
  \begin{itemize}[leftmargin = *]
      \item If $\ell=\ell_{\exp}$ and $a\in (0,1)$,
      recall in Appendix~\ref{section:theorem_proof}, the definition for $t_0$ when $\ell=\ell_{\exp}$ is the first time such that $\sum_{\tau =t_2}^{t_0-1}\eta_\tau\geq \frac{2\alpha B}{\gamma}\sum_{\tau=0}^{t_2-1}\eta_\tau + \frac{2\log \cR(\wb_0) - 2\log \cR_0}{\gamma}$ and $t_0\geq -\log \rho$, by simple approximation from integral of $t^{-a}$, we can derive that, 
      \begin{align*}
          t_0\leq C_1 d^{\frac{2}{a}} + C_2 [\log n]^{\frac{1}{1-a}}+ C_3 [\log \cR(\wb_0)]^{\frac{1}{1-a}} + C_4\log (1/\rho).
      \end{align*}
      Similarly, we can also derive
      \begin{align*}
          \sum_{\tau=0}^{t_0-1}\eta_\tau\leq C_1 t_0^{1-a}\leq C_2 d^{\frac{2(1-a)}{a}} + C_3 \log n + C_4 \log \cR(\wb_0) + C_5 [\log(1/\rho)]^{1-a}.
      \end{align*}
      
      When $a > \frac{2}{3}$, $\sum_{\tau=t_0}^\infty \eta_\tau^{\frac{3}{2}}$ is bounded by some constant, $\sum_{\tau=0}^{t-1} \eta_\tau = O(t^{1-a})$ and $\frac{2(1-a)}{a} < 1$, therefore we conclude that,
      \begin{align*}
          \bigg|\min_{i\in[n]}\frac{\la\wb_{t}^{\exp}, \zb_i\ra}{\|\wb_t^{\exp}\|_\infty}-\gamma\bigg|\leq O\Bigg(\frac{d+\log n+ \log \cR(\wb_0) + [\log(1/\rho)]^{1-a}}{t^{1-a}}\Bigg).
      \end{align*}
      When $a = \frac{2}{3}$, similarly we have $\sum_{\tau=t_0}^{t-1} \eta_\tau^{\frac{3}{2}} = O(\log t)$ and $\sum_{\tau=0}^{t-1} \eta_\tau = O(t^{1/3})$, then we conclude that
      \begin{align*}
           \bigg|\min_{i\in[n]}\frac{\la\wb_{t}^{\exp}, \zb_i\ra}{\|\wb_t^{\exp}\|_\infty}-\gamma\bigg|\leq O\Bigg(\frac{d\cdot\log t + \log n + \log \cR(\wb_0) + [\log(1/\rho)]^{1/3}}{t^{1/3}}\Bigg).
      \end{align*}
      When $a < \frac{2}{3}$, similarly we have $\sum_{\tau=t_0}^{t-1} \eta_\tau^{\frac{3}{2}} = O(t^{1-\frac{3a}{2}})$ and $\sum_{\tau=0}^{t-1} \eta_\tau = O(t^{1-a})$. Under this sub-case, we could always find a new $t_{0}' \geq t_0$ such that besides the preceding condition for $t_0$, we also have $d\cdot t_{0}'^{1-3a/2} > \max\{d^{\frac{2(1-a)}{a}}, \log n, \log \cR(\wb_0), [\log(1-\rho)]^{1-a}\}$. Letting this new $t_{0}'$ to be the $t_0$ in our statement of Corollary~\ref{crlry:margin_rate},  then we conclude that,
      \begin{align*}
 \bigg|\min_{i\in[n]}\frac{\la\wb_{t}^{\exp}, \zb_i\ra}{\|\wb_t^{\exp}\|_\infty}-\gamma\bigg|\leq &O\Bigg(\frac{d \cdot t^{1-\frac{3a}{2}} + d^{\frac{2(1-a)}{a}} +\log n + \log \cR(\wb_0) + [\log(1/\rho)]^{1-a}}{t^{1-a}}\Bigg)\leq O\Bigg(\frac{d}{t^{a/2}}\Bigg).
      \end{align*}
      \item If $\ell=\ell_{\exp}$ and $a=1$, then by the definition of $t_0$ and integral of $t^{-1}$, we obtain that,
      \begin{align*}
          \log t_0 \leq C_1 \log d + C_2\log n + C_3 \log\cR(\wb_0) + C_4\log\log(1/\rho).
      \end{align*}
       Similarly, we can also derive
      \begin{align*}
          \sum_{\tau=0}^{t_0-1}\eta_\tau\leq C_1 \log d + C_2\log n + C_3 \log\cR(\wb_0) + C_4\log\log(1/\rho).
      \end{align*}
      Since $\sum_{\tau=0}^{t-1} \eta_\tau = O(
      \log t)$ and $\sum_{\tau=t_0}^{t-1} \eta_\tau^{\frac{3}{2}}$ is bounded, we obtain that,
      \begin{align*}
           \bigg|\min_{i\in[n]}\frac{\la\wb_{t}^{\exp}, \zb_i\ra}{\|\wb_t^{\exp}\|_\infty}-\gamma\bigg|\leq O\Bigg(\frac{d+\log n + \log\cR(\wb_0) + \log\log(1/\rho)}{\log t}\Bigg).
      \end{align*}
      \item If $\ell=\ell_{\log}$ and $a\in (0,1)$, firstly we upper bound the last term $\sum_{\tau=t_0}^{t}\eta_\tau e^{-\frac{\gamma}{4}\sum_{\tau'=t_0}^{\tau-1}\eta_{\tau'}}$ as
      \begin{align*}
    \sum_{\tau=t_0}^{t-1}\eta_\tau e^{-\frac{\gamma}{4}\sum_{\tau'=t_0}^{\tau-1}\eta_{\tau'}}\leq e^{\frac{\gamma(t_0+1)^{1-a}}{4(1-a)}}\sum_{\tau=t_0}^\infty\frac{1}{(\tau+2)^a} e^{-\frac{\gamma(\tau+1)^{1-a}}{4(1-a)}}\leq \frac{4}{\gamma}e^{\frac{\gamma}{4(1-a)}}.
      \end{align*}
      Therefore, $\sum_{\tau=t_0}^{t}\eta_\tau e^{-\frac{\gamma}{4}\sum_{\tau'=t_0}^{\tau-1}\eta_{\tau'}}$ is always bounded by a constant. The only difference between $\ell_{\log}$ and $\ell_{\exp}$ is how to determine the value of $t_0$. For $\ell_{\log}$, the formula for $t_0$ is the first time such that $\sum_{\tau =t_2}^{t_0-1}\eta_\tau\geq \frac{4\cR(\wb_0) +4\alpha B\sum_{\tau=0}^{t_2-1} \eta_\tau}{\gamma\cR_0}$ and $t\geq \log(1/\rho)$. Similar to the preceding process, we could derive that
      \begin{align*}
          t_0\leq C_1 n^{\frac{1}{1-a}}d^{\frac{2}{a}} + C_2n^{\frac{1}{1-a}} [\cR(\wb_0)]^{\frac{1}{1-a}} + C_3\log(1/\rho).
      \end{align*}
      and
      \begin{align*}
          \sum_{\tau=0}^{t_0-1}\eta_\tau \leq C_1 n d^{\frac{2(1-a)}{a}} + C_2n \cR(\wb_0) + C_3[\log(1/\rho)]^{1-a}.
      \end{align*}
      
      When $a > \frac{2}{3}$, $\sum_{\tau=t_0}^\infty \eta_\tau^{\frac{3}{2}}$ is bounded by some constant, $\sum_{\tau=0}^{t-1} \eta_\tau = O(t^{1-a})$ and $\frac{2(1-a)}{a}\leq 1$, therefore we conclude that
      \begin{align*}
          \bigg|\min_{i\in[n]}\frac{\la\wb_{t}^{\log}, \zb_i\ra}{\|\wb_t^{\log}\|_\infty}-\gamma\bigg|\leq O\Bigg(\frac{d+nd^{\frac{2(1-a)}{a}}+n \cR(\wb_0) + [\log(1/\rho)]^{1-a}}{t^{1-a}}\Bigg).
      \end{align*}
      When $a = \frac{2}{3}$, similarly we have $\sum_{\tau=t_0}^{t-1} \eta_\tau^{\frac{3}{2}} = O(\log t)$ and $\sum_{\tau=0}^{t-1} \eta_\tau = O(t^{1/3})$, then we conclude that
      \begin{align*}
          \bigg|\min_{i\in[n]}\frac{\la\wb_{t}^{\log}, \zb_i\ra}{\|\wb_t^{\log}\|_\infty}-\gamma\bigg|\leq O\Bigg(\frac{d\cdot\log t +nd+ n \cR(\wb_0) + [\log(1/\rho)]^{1/3}}{t^{1/3}}\Bigg).
      \end{align*}
      When $a < \frac{2}{3}$, similarly we have $\sum_{\tau=t_0}^{t-1} \eta_\tau^{\frac{3}{2}} = O(t^{1-\frac{3a}{2}})$ and $\sum_{\tau=0}^{t-1} \eta_\tau = O(t^{1-a})$. Under this setting, we could always find a new $t_{0}' \geq t_0$ such that besides the preceding condition for $t_0$, we also have $d\cdot t_{0}'^{1-3a/2} > \max\{nd^{\frac{2(1-a)}{a}}, n\log \cR(\wb_0), [\log(1-\rho)]^{1-a}\}$. Letting this new $t_{0}'$ to be the $t_0$ in our statement of Corollary~\ref{crlry:margin_rate}, then we conclude that,
      \begin{align*}
          \bigg|\min_{i\in[n]}\frac{\la\wb_{t}^{\log}, \zb_i\ra}{\|\wb_t^{\log}\|_\infty}-\gamma\bigg|\leq &O\Bigg(\frac{d\cdot t^{1-\frac{3a}{2}} + nd^{\frac{2(1-a)}{a}} +n \cR(\wb_0) + [\log(1/\rho)]^{1-a}}{t^{1-a}}\Bigg)\leq O\Bigg(\frac{d}{t^{a/2}}\Bigg).
      \end{align*}
      \item If $\ell=\ell_{\log}$ and $a=1$, 
      firstly we upper bound the last term $\sum_{\tau=t_0}^{t}\eta_\tau e^{-\frac{\gamma}{4}\sum_{\tau'=t_0}^{\tau-1}\eta_{\tau'}}$ as
      \begin{align*}
    \sum_{\tau=t_0}^{t-1}\eta_\tau e^{-\frac{\gamma}{4}\sum_{\tau'=t_0}^{\tau-1}\eta_{\tau'}}\leq (t_0+1)^{\gamma/4}\sum_{\tau=t_0}^\infty\frac{1}{(\tau+1)^{1+\gamma/4}} \leq \frac{\gamma}{4}2^{\gamma/4},
      \end{align*}
      which implies $\sum_{\tau=t_0}^{t}\eta_\tau e^{-\frac{\gamma}{4}\sum_{\tau'=t_0}^{\tau-1}\eta_{\tau'}}$ is also a constant. Then by the definition of $t_0$ and integral of $t^{-1}$, we obtain that
      \begin{align*}
          \log t_0 \leq C_1 n\log d + C_2 n \cR(\wb_0) + C_3 \log\log(1/\rho).
      \end{align*}
      and similarly
      \begin{align*}
          \sum_{\tau=0}^{t_0-1} \leq C_1 n\log d + C_2 n \cR(\wb_0) + C_3 \log\log(1/\rho).
      \end{align*}
      Since $\sum_{\tau=0}^{t-1} \eta_\tau = O(
      \log t)$ and $\sum_{\tau=t_0}^{t-1} \eta_\tau^{\frac{3}{2}}$ is bounded, we obtain that
      \begin{align*}
          \bigg|\min_{i\in[n]}\frac{\la\wb_{t}^{\log}, \zb_i\ra}{\|\wb_t^{\log}\|_\infty}-\gamma\bigg|\leq O\Bigg(\frac{d + n\log d+n \cR(\wb_0) + \log\log(1/\rho)}{\log t}\Bigg).
      \end{align*}
  \end{itemize}

\section{Technical Lemmas}
\subsection{Lemma for Assumption~\ref{assumption:eta}}
\begin{lemma}\label{lemma:eta}
    Assumption~\ref{assumption:eta} holds for both small fixed learning rate $\eta_t=\eta \leq \frac{1-\beta}{2c_1}$, and decay learning rate $\eta_t=(t+2)^{-a}$ with $a\in (0,1]$.
\end{lemma}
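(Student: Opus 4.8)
The plan is to verify Assumption~\ref{assumption:eta} directly for the two schedules, in both cases combining the elementary inequality $e^x-1\le xe^x$ with the geometric decay of $\beta^\tau$. Throughout, write $S_\tau:=\sum_{\tau'=1}^\tau\eta_{t-\tau'}$, so that the quantity to be bounded is $\sum_{\tau=0}^t\beta^\tau(e^{c_1S_\tau}-1)$, and the goal is to show this is $\le c_2\eta_t$ for all large $t$.

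\textbf{Constant learning rate.} If $\eta_t\equiv\eta\le\frac{1-\beta}{2c_1}$ then $S_\tau=\tau\eta$ and $c_1\eta\le\tfrac12$, so $e^{c_1\eta}\le1+2c_1\eta$, which gives $\beta e^{c_1\eta}\le\beta(2-\beta)=1-(1-\beta)^2<1$. Hence, using $e^x-1\le xe^x$,
\begin{align*}
\sum_{\tau=0}^t\beta^\tau\big(e^{c_1\tau\eta}-1\big)\le c_1\eta\sum_{\tau=0}^\infty\tau\big(\beta e^{c_1\eta}\big)^\tau\le\frac{c_1\eta}{(1-\beta)^4},
\end{align*}
which is the claim with $c_2=c_1/(1-\beta)^4$ and any $t_1$.

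\textbf{Decaying learning rate $\eta_t=(t+2)^{-a}$, $a\in(0,1]$.} I would split the sum at $\tau=\lfloor t/2\rfloor$. For $\tau\le\lfloor t/2\rfloor$, every index obeys $t-\tau'\ge t/2$, so $\eta_{t-\tau'}\le2\eta_t$ and $S_\tau\le2\tau\eta_t$; since $\eta_t\to0$, once $t$ is large enough that $2c_1\eta_t\le\tfrac12\log(1/\beta)$ we get $\beta e^{2c_1\eta_t}\le\sqrt\beta$, and therefore
\begin{align*}
\sum_{\tau=0}^{\lfloor t/2\rfloor}\beta^\tau\big(e^{c_1S_\tau}-1\big)\le2c_1\eta_t\sum_{\tau=0}^\infty\tau\big(\sqrt\beta\big)^\tau=\frac{2c_1\sqrt\beta}{(1-\sqrt\beta)^2}\,\eta_t.
\end{align*}
For $\tau>\lfloor t/2\rfloor$, I would instead use $\beta^\tau\le\beta^{t/2}$ and $S_\tau\le E_t:=\sum_{j=0}^{t-1}\eta_j$, where a comparison with an integral gives $E_t\le(t+1)^{1-a}/(1-a)$ when $a<1$ and $E_t\le1+\log(t+1)$ when $a=1$; then $\sum_{\tau>\lfloor t/2\rfloor}\beta^\tau e^{c_1S_\tau}\le\frac{\beta^{t/2}}{1-\beta}e^{c_1E_t}$, and since $(t+2)^a\beta^{t/2}e^{c_1E_t}\to0$ (exponential decay beats the sub-exponential, indeed polynomial when $a=1$, growth of $e^{c_1E_t}$), this part is $\le\eta_t$ for all $t$ beyond some threshold. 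Adding the two estimates yields the assumption with $c_2=1+2c_1\sqrt\beta/(1-\sqrt\beta)^2$ and $t_1$ chosen so that both threshold conditions hold.

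The step I expect to be the main obstacle is the small-$\tau$ regime in the decaying case: there $S_\tau$ need not be $O(\eta_t)$ — it can be as large as $\Theta(t^{1-a})$ — so one cannot simply linearize $e^{c_1S_\tau}-1$. The key point is that $S_\tau\le2\tau\eta_t$ keeps the exponent proportional to $\tau$, so after extracting one factor $\eta_t$ via $e^x-1\le xe^x$ the residual sum $\sum_\tau\tau(\beta e^{2c_1\eta_t})^\tau$ is a convergent geometric-type series precisely because $\beta e^{2c_1\eta_t}<1$ once $t$ is large; everything else is routine bookkeeping.
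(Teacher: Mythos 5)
Your proof is correct, and it reaches the conclusion by a genuinely different decomposition than the paper's, so a comparison is worthwhile. Both arguments hinge on showing that the exponent $S_\tau=\sum_{\tau'=1}^{\tau}\eta_{t-\tau'}$ is at most $\tau$ times a quantity of order $\eta_t$, but the paper achieves this for $a\in(0,1)$ through the sharper estimate $S_\tau\le\frac{2}{1-a}\frac{\tau}{(t+1)^a}$, valid for \emph{all} $\tau\le t$ (an integral comparison combined with the inequality $x^{1-a}y^a-x^ay^{1-a}\le x-y$), then expands $e^{c_1S_\tau}-1$ as a full Taylor series, interchanges the sums and uses $\sum_\tau\beta^\tau\tau^k\le k!/(1-\beta)^{k+1}$; because the factor $\frac{2}{1-a}$ blows up at $a=1$, the case $\eta_t=1/(t+2)$ is handled by a separate argument ($S_\tau\le\log\big(1+\frac{\tau}{t-\tau+1}\big)$, a binomial expansion, and a split at $\lfloor t/2\rfloor$). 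You instead treat all $a\in(0,1]$ uniformly: one split at $\lfloor t/2\rfloor$, the cheap bound $\eta_{t-\tau'}\le 2\eta_t$ on the near range (giving $S_\tau\le2\tau\eta_t$ there), the elementary inequality $e^x-1\le xe^x$ in place of the series manipulation, and a brute-force estimate of the far range via $\beta^{t/2}e^{c_1E_t}\to0$, which works because $E_t$ grows only like $t^{1-a}$ (or $\log t$). What your route buys is a single argument with no special case at $a=1$ and a constant $c_2$ that does not degrade as $a\to1$ (only your $t_1$ depends on $a$, which is consistent with how the paper itself uses the assumption, since its own $t_1$ and $c_2$ depend on $a$); what the paper's sharper bound on $S_\tau$ buys is that no split of the sum is needed when $a<1$. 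Your constant-step argument is likewise a mild simplification of the paper's (linearizing $e^{c_1\eta}\le1+2c_1\eta$ and absorbing it into the geometric ratio rather than expanding and interchanging series), with the same conclusion and a comparable constant.
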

\begin{proof}[Proof of Lemma~\ref{lemma:eta}]
Firstly, we prove it for learning rate $\eta_t = \eta \leq \frac{1-\beta}{2c_1}$, which is a small fixed constant, then we have,
\begin{align*}
    \sum_{\tau=0}^t \beta^\tau \big(e^{c_1\sum_{\tau'=1}^\tau\eta_{t-\tau'}} -1\big) &= \sum_{\tau=0}^t \beta^\tau \big(e^{c_1\eta\tau} -1\big) = \sum_{\tau=0}^t \beta^\tau \sum_{k=1}^{\infty}\frac{(c_1\eta\tau)^k}{k!}\\
    &\leq \sum_{\tau=0}^\infty \beta^\tau \sum_{k=1}^{\infty}\frac{(c_1\eta\tau)^k}{k!} = \sum_{k=1}^{\infty}\frac{(c_1\eta)^k}{k!}\sum_{\tau=0}^\infty \beta^\tau \tau^k\\
    &\leq \frac{1}{1-\beta}\sum_{k=1}^\infty\Big(\frac{c_1 \eta}{1-\beta}\Big)^k \leq \frac{2c_1}{(1-\beta)^2}\eta\\
\end{align*} 
The penultimate inequality holds because
$\sum_{\tau=0}^\infty \beta^\tau \tau^k \leq \int_0^\infty \beta^\tau \tau^k \mathrm{d}\tau = \frac{k!}{[-\log(\beta)]^{k+1}}\leq \frac{k!}{(1-\beta)^{k+1}}$. Therefore, we prove that Assumption~\ref{assumption:eta} holds for $t\in \NN_+$ when $\eta_t=\eta \leq \frac{1-\beta}{2c_1}$.
Next, we consider the case where decay learning rate $\eta_t = \frac{1}{(t+2)^{a}}$ with $a\in (0, 1)$, and similarly, we have,
\begin{align*}
    \sum_{\tau'=1}^\tau\eta_{t-\tau'} 
    &= \sum_{\tau'=1}^\tau \frac{1}{(t -\tau' + 2)^{a}} \leq \int_{1}^{\tau+1}\frac{1}{(t -\tau' + 2)^{a}}\mathrm{d}\tau' \\
    &= \frac{1}{1-a}\Big((t+1)^{1-a} - (t-\tau+1)^{1-a}\Big) \\
    &=\frac{1}{1-a}\frac{\tau + (t+1)^{1-a}(t-\tau+1)^a - (t+1)^{a}(t-\tau+1)^{1-a}}{(t+1)^{a} + (t-\tau+1)^{a}} \\
    &\leq \frac{2}{(1-a)}\frac{\tau}{(t+1)^{a}}
\end{align*}
By this result, we can similarly obtain that for $t \geq \Big(\frac{4c_1}{(1-\beta)^2(1-a)}\Big)^{\frac{1}{a}}$.
\begin{align*}
    \sum_{\tau=0}^t \beta^\tau \big(e^{c_1\sum_{\tau'=1}^\tau\eta_{t-\tau'}} -1\big) &\leq \sum_{\tau=0}^t \beta^\tau \big(e^{\frac{2c_1\tau}{(1-a)(t+1)^{a}}} -1\big) = \sum_{\tau=0}^t \beta^\tau \sum_{k=1}^{\infty}\Big(\frac{2c_1\tau}{(1-a)(t+1)^{a}}\Big)^k\frac{1}{k!}\\
    &\leq \sum_{\tau=0}^\infty \beta^\tau \sum_{k=1}^{\infty}\Big(\frac{2c_1\tau}{(1-a)(t+1)^{a}}\Big)^k\frac{1}{k!} = \sum_{k=1}^{\infty}\Big(\frac{2c_1}{(1-a)(t+1)^{a}}\Big)^k\frac{1}{k!}\sum_{\tau=0}^\infty \beta^\tau \tau^k\\
    &\leq \frac{1}{1-\beta}\sum_{k=1}^\infty\Big(\frac{2c_1}{(1-\beta)(1-a)(t+1)^{a}}\Big)^k \\
    &\leq \frac{4c_1}{(1-\beta)^2(1-a)}\cdot\frac{1}{(t+1)^a} < \frac{8c_1}{(1-\beta)^2(1-a)}\cdot\eta_t.
\end{align*}
Therefore, we prove that Assumption~\ref{assumption:eta} holds for $t \geq \Big(\frac{4c_1}{(1-\beta)^2(1-a)}\Big)^{\frac{1}{a}}$ when $\eta_t=\frac{1}{(t+2)^{a}}$. Finally we consider the case where the decay learning rate $\eta_t = \frac{1}{t+2}$, and similarly, we have $\sum_{\tau'=1}^\tau\eta_{t-\tau'} = \sum_{\tau'=1}^\tau \frac{1}{t -\tau' +2} \leq \int_{1}^{\tau+1}\frac{1}{t -\tau' +2} \mathrm{d}\tau' = \log(1 +\frac{\tau}{t-\tau+1})$. By this result, we obtain that
\begin{align}\label{eq:eta_1/t}
    \sum_{\tau=0}^t \beta^\tau \big(e^{c_1\sum_{\tau'=1}^\tau\eta_{t-\tau'}} -1\big) &\leq \sum_{\tau=0}^t \beta^\tau \bigg(\Big(1+\frac{\tau}{t-\tau+1}\Big)^{\lceil c_1\rceil} -1\bigg)\notag\\
    &=\sum_{\tau=0}^t \beta^\tau \sum_{k=1}^{\lceil c_1\rceil}\binom {\lceil c_1\rceil}k \Big(\frac{\tau}{t-\tau+1}\Big)^{k}.
\end{align}
Because $\lceil c_1\rceil$ and $\binom {\lceil c_1\rceil}k$ are both absolute constant, it suffices to show that $\sum_{\tau=0}^t \beta^\tau \Big(\frac{\tau}{t-\tau+1}\Big)^{k}\leq \frac{c_2}{t+2}$ for all $t > t_1$ and $k \geq 1$ where $t_1, c_2$ are both constants. Actually, we could split the summation of $\tau$ into two parts as
\begin{align*}
    \sum_{\tau=0}^t \beta^\tau \Big(\frac{\tau}{t-\tau+1}\Big)^{k} = \sum_{\tau=0}^{\lfloor \frac{t}{2}\rfloor}\beta^\tau\Big(\frac{\tau}{t-\tau+1}\Big)^{k} + \sum_{\tau=\lfloor \frac{t}{2}\rfloor+1}^{t}\beta^\tau\Big(\frac{\tau}{t-\tau+1}\Big)^{k}.
\end{align*}
For the first part, we have
\begin{align*}
    \sum_{\tau=0}^{\lfloor \frac{t}{2}\rfloor}\beta^\tau\Big(\frac{\tau}{t-\tau+1}\Big)^{k}\leq \Big(\frac{2}{t}\Big)^k \sum_{\tau=0}^{\lfloor \frac{t}{2}\rfloor}\beta^\tau\tau^{k}
    \leq \frac{2^k k!}{(1-\beta)^{k+1}}\cdot\frac{1}{t}.
\end{align*}
 For the second part, we could find a constant $t_1=t_1(c_1, \beta)$ such that $t^{\lceil c_1\rceil + 2} \leq \Big(\frac{1}{\beta}\Big)^{\frac{t}{2}}$ for all $t \geq t_1$ since $\frac{1}{\beta} > 1$. Then for all $t \geq t_1$, we can derive that
\begin{align*}
    \sum_{\tau=\lfloor \frac{t}{2}\rfloor+1}^{t}\beta^\tau\Big(\frac{\tau}{t-\tau+1}\Big)^{k}\leq \beta^{\frac{t}{2}}\sum_{\tau=\lfloor \frac{t}{2}\rfloor+1}^{t}\tau^k\leq \beta^{\frac{t}{2}}t^{k+1}\leq \frac{1}{t}.
\end{align*}
The last inequality is by our condition $t \geq t_1$ and $k \leq \lceil c_1\rceil$. Combining these two results and plugging it into \eqref{eq:eta_1/t}, we finally get
\begin{align*}
    \sum_{\tau=0}^t \beta^\tau \big(e^{c_1\sum_{\tau'=1}^\tau\eta_{t-\tau'}} -1\big)&\leq \lceil c_1\rceil \cdot \max_{k\in[\lceil c_1\rceil]}\binom{\lceil c_1\rceil}{k}\cdot\bigg(\frac{2^{\lceil c_1\rceil} \lceil c_1\rceil!}{(1-\beta)^{\lceil c_1\rceil+1}} +1\bigg)\cdot\frac{1}{t}\\
    &\leq 2\lceil c_1\rceil \cdot \max_{k\in[\lceil c_1\rceil]}\binom{\lceil c_1\rceil}{k}\cdot\bigg(\frac{2^{\lceil c_1\rceil} \lceil c_1\rceil!}{(1-\beta)^{\lceil c_1\rceil+1}} +1\bigg)\cdot\eta_t
\end{align*}
for all $t \geq t_1$, which completes the proof.
\end{proof}

\subsection{Properties for Logistic and Exponential Loss Function}
\begin{lemma}\label{lemma:l_propsI}
    For $\ell \in \{\ell_{\exp}, \ell_{\log}\}$ and any $z \in \RR$,  $\ell''(z) \leq |\ell'(z)| \leq \ell(z)$. For any $z \geq 0$,  $\ell_{\log}(z) \leq 2 |\ell_{\log}'(z)|$.
\end{lemma}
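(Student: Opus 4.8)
The plan is to verify each of the three inequalities by directly computing the first two derivatives of the loss, handling the exponential and logistic cases separately.

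For $\ell = \ell_{\exp}$ everything is immediate: since $\ell_{\exp}(z) = e^{-z}$, $\ell_{\exp}'(z) = -e^{-z}$, and $\ell_{\exp}''(z) = e^{-z}$, the three quantities $\ell_{\exp}''(z)$, $|\ell_{\exp}'(z)|$, $\ell_{\exp}(z)$ all equal $e^{-z}$, so the chain $\ell'' \le |\ell'| \le \ell$ holds with equality. (The last assertion concerns only $\ell_{\log}$, so nothing more is needed for the exponential loss.) For $\ell = \ell_{\log}$, I would first record $\ell_{\log}'(z) = -1/(1+e^z)$ and $\ell_{\log}''(z) = e^z/(1+e^z)^2$, so $|\ell_{\log}'(z)| = 1/(1+e^z)$. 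Then $\ell_{\log}''(z) \le |\ell_{\log}'(z)|$ follows at once from $e^z/(1+e^z) \le 1$. For $|\ell_{\log}'(z)| \le \ell_{\log}(z)$ I would substitute $u = e^{-z} > 0$, reducing the claim to the classical inequality $\log(1+u) \ge u/(1+u)$; this is proved by observing that $h(u) := \log(1+u) - u/(1+u)$ satisfies $h(0) = 0$ and $h'(u) = u/(1+u)^2 \ge 0$ for $u \ge 0$, hence $h \ge 0$.

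Finally, for the bound $\ell_{\log}(z) \le 2|\ell_{\log}'(z)|$ when $z \ge 0$: now $u = e^{-z}$ ranges over $(0,1]$, and the claim becomes $\log(1+u) \le 2u/(1+u)$ on $(0,1]$. I would set $g(u) := 2u/(1+u) - \log(1+u)$, check $g(0) = 0$ and $g'(u) = (1-u)/(1+u)^2 \ge 0$ for $u \in [0,1]$, so $g$ is nondecreasing on $[0,1]$ and therefore nonnegative there. There is no genuine obstacle in this lemma; it is entirely one-variable calculus, and the only point requiring a little care is choosing the substitution $u = e^{-z}$ and noting that the restriction $u \le 1$ corresponds exactly to $z \ge 0$, which is what the final bound needs.
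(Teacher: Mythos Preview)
Your proposal is correct. Both your argument and the paper's proceed by computing $\ell_{\log}'(z)=-1/(1+e^z)$ and $\ell_{\log}''(z)=e^z/(1+e^z)^2$ and then checking elementary one-variable inequalities; the exponential case is trivial in both. The only technical differences are in how the logistic inequalities are verified: for $|\ell_{\log}'|\le \ell_{\log}$ the paper observes that $\ell_{\log}'(z)\le (|\ell_{\log}'(z)|)'$ and that both $\ell_{\log}$ and $|\ell_{\log}'|$ vanish at $+\infty$, whence their difference is nonincreasing and nonnegative, whereas you substitute $u=e^{-z}$ and prove $\log(1+u)\ge u/(1+u)$ by a derivative sign check. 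For the final bound $\ell_{\log}(z)\le 2|\ell_{\log}'(z)|$ on $z\ge 0$, the paper's proof of this lemma actually omits it (it is established in the proof of the next lemma via the chain $\log(1+e^{-z})\le e^{-z}=2/(2e^z)\le 2/(1+e^z)$), while you give a self-contained argument through the inequality $\log(1+u)\le 2u/(1+u)$ on $(0,1]$. Your version is slightly more complete as a proof of the lemma as stated; both routes are equally elementary.
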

\begin{proof}[Proof of Lemma~\ref{lemma:l_propsI}]
For $\ell = \ell_{\exp}$, $|\ell_{\exp}'(z)|=\ell_{\exp}''(z)=\ell_{\exp}(z)=e^{-z}$.
For $\ell = \ell_{\log}$, we calculate the derivatives as $\ell_{\log}'(z)=-\frac{1}{1+e^{z}}$ and $\ell_{\log}''(z)=\frac{e^{z}}{(1+e^{z})^2}$. Notice that 
    \begin{align*}
        \lim_{z\to +\infty}\ell_{\log}(z) = \lim_{z\to +\infty}|\ell'_{\log}(z)| = 0,
    \end{align*}
    and
    \begin{align*}
        \ell_{\log}'(z)=-\frac{1}{1+e^{z}}\leq -\frac{1}{2+e^{z}+e^{-z}} =-\ell_{\log}''(z) = \big(|\ell_{\log}'(z)|\big)'.
    \end{align*}
    Therefore we derive that $\ell_{\log}''(z) \leq |\ell_{\log}'(z)| \leq \ell_{\log}(z)$.
\end{proof}

\begin{lemma}\label{lemma:l_propsIV}
   For any $z \geq 0$,  $\frac{|\ell_{\log}'(z)|}{\ell_{\log}(z)}, \frac{|\ell_{\log}'(z)|}{\ell_{\exp}(z)} \geq \frac{1}{2}$ and $\frac{\ell_{\log}(z)}{\ell_{\exp}(z)} \geq \log 2$.
\end{lemma}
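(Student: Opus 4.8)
The plan is to dispatch the three claimed inequalities one at a time, using the explicit formulas $|\ell_{\log}'(z)| = 1/(1+e^{z})$ and $\ell_{\exp}(z) = e^{-z}$.

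First, the bound $|\ell_{\log}'(z)|/\ell_{\log}(z) \ge 1/2$ for $z \ge 0$ is simply a rearrangement of the inequality $\ell_{\log}(z) \le 2|\ell_{\log}'(z)|$ already established in Lemma~\ref{lemma:l_propsI}, so nothing further is needed there. Second, for $|\ell_{\log}'(z)|/\ell_{\exp}(z) \ge 1/2$ I would compute directly
\[
\frac{|\ell_{\log}'(z)|}{\ell_{\exp}(z)} = \frac{1/(1+e^{z})}{e^{-z}} = \frac{e^{z}}{1+e^{z}} = \frac{1}{1+e^{-z}},
\]
and note that $z \ge 0$ forces $e^{-z} \le 1$, so the denominator is at most $2$ and the ratio is at least $1/2$.

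Third, for $\ell_{\log}(z)/\ell_{\exp}(z) \ge \log 2$ I would substitute $u = e^{-z}$, which ranges over $(0,1]$ as $z$ ranges over $[0,\infty)$, reducing the claim to $\log(1+u)/u \ge \log 2$ on $(0,1]$. I would then show that $u \mapsto \log(1+u)/u$ is non-increasing on $(0,1]$: setting $h(u) = u/(1+u) - \log(1+u)$, which is (up to the positive factor $u^{2}$) the numerator of the derivative of $\log(1+u)/u$, one has $h(0) = 0$ and $h'(u) = \frac{1}{(1+u)^{2}} - \frac{1}{1+u} = \frac{-u}{(1+u)^{2}} < 0$ for $u > 0$, so $h(u) < 0$ on $(0,1]$ and the derivative of $\log(1+u)/u$ is negative there. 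Hence the minimum over $(0,1]$ is attained at $u = 1$, yielding the value $\log 2$. (Alternatively one can invoke concavity of $t \mapsto \log(1+t)$ together with $\log 1 = 0$ to conclude that the secant slope from the origin is decreasing.)

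None of these steps is a real obstacle; the only mildly non-routine point is the monotonicity of $\log(1+u)/u$, which is a one-line calculus fact, so I expect the whole argument to be very short.
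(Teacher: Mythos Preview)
Your proposal is correct and essentially matches the paper's proof: the paper handles the first two inequalities via the single chain $\ell_{\log}(z) \leq \ell_{\exp}(z) = \tfrac{2}{2e^{z}} \leq \tfrac{2}{1+e^{z}} = 2|\ell_{\log}'(z)|$ (which you split into a citation of Lemma~\ref{lemma:l_propsI} plus a direct computation), and for the third it makes the same substitution $u=e^{-z}$ and asserts that $\log(1+u)/u$ is decreasing on $(0,1]$, for which you supply the derivative calculation the paper omits.
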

\begin{proof}[Proof of Lemma~\ref{lemma:l_propsIV}]
For $z\geq 0$,  it holds that
\begin{align*}
        \ell_{\log}(z) \leq \ell_{\exp}(z) = \frac{2}{2e^{z}}\leq \frac{2}{1 + e^{z}} = 2|\ell_{\log}'(z)|.
\end{align*}
The second result holds because $\frac{\ell_{\log}(z)}{\ell_{\exp}(z)}=
\frac{\log(1+e^{-z})}{e^{-z}}$ is an decreasing function for $e^{-z}$ and $e^{-z} \in (0, 1]$ for $z\geq 0$. 
\end{proof}

\begin{lemma}\label{lemma:l_propsV}
    For $\ell \in \{\ell_{\exp}, \ell_{\log}\}$, either $\cG(\wb)\leq \frac{1}{2n}$ or $\cR(\wb)\leq \frac{\log 2}{n}$ implies $\la\wb,\zb_i\ra \geq 0$ for all $i\in[n]$.
\end{lemma}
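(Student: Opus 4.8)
The plan is to argue by contraposition. I will show that if $\la\wb,\zb_j\ra<0$ for some index $j\in[n]$, then \emph{both} $\cR(\wb)>\frac{\log 2}{n}$ and $\cG(\wb)>\frac{1}{2n}$ hold simultaneously, which is exactly the negation of ``either $\cG(\wb)\le\frac{1}{2n}$ or $\cR(\wb)\le\frac{\log 2}{n}$''. The key structural observation is that for both $\ell\in\{\ell_{\exp},\ell_{\log}\}$ one has $\ell>0$ and $\ell'<0$ everywhere, so every summand in $\cR(\wb)=\frac1n\sum_i\ell(\la\wb,\zb_i\ra)$ is nonnegative and every summand in $\cG(\wb)=\frac1n\sum_i|\ell'(\la\wb,\zb_i\ra)|$ is nonnegative. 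Hence it suffices to lower-bound the single $j$-th term by the relevant threshold.

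First I would handle $\cR$: set $z_j=\la\wb,\zb_j\ra<0$, so $e^{-z_j}>1$. For the exponential loss $\ell_{\exp}(z_j)=e^{-z_j}>1>\log 2$, and for the logistic loss $\ell_{\log}(z_j)=\log(1+e^{-z_j})>\log 2$; in either case $\cR(\wb)\ge\frac1n\ell(z_j)>\frac{\log 2}{n}$. Next I would handle $\cG$: since $z_j<0$ we have $e^{z_j}<1$, so $|\ell_{\exp}'(z_j)|=e^{-z_j}>1>\tfrac12$ for the exponential loss, while $|\ell_{\log}'(z_j)|=\frac{1}{1+e^{z_j}}>\tfrac12$ for the logistic loss; hence $\cG(\wb)\ge\frac1n|\ell'(z_j)|>\frac{1}{2n}$.

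Combining the two estimates shows that a single negative margin forces the hypothesis of the lemma to fail for \emph{both} quantities at once; contrapositively, whenever $\cG(\wb)\le\frac{1}{2n}$ or $\cR(\wb)\le\frac{\log 2}{n}$, we must have $\la\wb,\zb_i\ra\ge 0$ for all $i\in[n]$. There is no genuine obstacle in this argument: the only points to verify carefully are the elementary facts that $\ell>0$ and $\ell'<0$ for both losses (so discarding the other $n-1$ nonnegative terms is legitimate) and the two one-line derivative inequalities at a negative argument, all of which are immediate from the explicit formulas already recorded in Lemma~\ref{lemma:l_propsI} and Lemma~\ref{lemma:l_propsIV}.
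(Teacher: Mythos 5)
Your proposal is correct: it is simply the contrapositive of the paper's own one-line argument, which bounds each individual term by $n$ times the average ($|\ell'(\la\wb,\zb_i\ra)|\leq n\cG(\wb)$, $\ell(\la\wb,\zb_i\ra)\leq n\cR(\wb)$) and then invokes the monotonicity of $|\ell'|$ and $\ell$ together with their values at $0$. The elementary facts you use (positivity of the summands, $\ell_{\log}(0)=\log 2$, $|\ell'_{\log}(0)|=\tfrac12$, and the corresponding exponential-loss bounds) are exactly those underlying the paper's proof, so the two arguments coincide up to the direction of implication.
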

\begin{proof}[Proof of Lemma~\ref{lemma:l_propsV}]
If $\cG(\wb)\leq \frac{1}{2n}$, we have $\big|\ell'(\la\wb,\zb_i\ra)\big|\leq n \cG(\wb)\leq \frac{1}{2}$. Then by monotonicity of $|\ell'(\cdot)|$ we have $\la\wb,\zb_i\ra \geq 0$. Similarly if $\cR(\wb)\leq \frac{\log 2}{n}$, we also have $\ell(\la\wb,\zb_i\ra)\leq n \cR(\wb)\leq \log 2$. Then by monotonicity of $\ell(\cdot)$ we have $\la\wb,\zb_i\ra \geq 0$.
\end{proof}

\begin{lemma}\label{lemma:l_propsII}
    For $\ell \in \{\ell_{\exp}, \ell_{\log}\}$ and any $z_1, z_2 \in \RR$, we have 
    \begin{align}\label{eq:loss_derivative_ratio_bound_I}
        \bigg|\frac{\ell'(z_1)}{\ell'(z_2)}-1\bigg| \leq e^{|z_1-z_2|}-1.
    \end{align}
\end{lemma}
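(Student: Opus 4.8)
The plan is to reduce the bound to a Lipschitz estimate on $\log|\ell'|$. The key observation is that for both $\ell_{\exp}$ and $\ell_{\log}$, the derivative $\ell'$ is everywhere negative, so the ratio $\ell'(z_1)/\ell'(z_2)$ is positive and its logarithm is well defined. Writing $g(z) = \log|\ell'(z)|$, I would first verify that $g$ is $1$-Lipschitz on $\RR$. For $\ell = \ell_{\exp}$ we have $g(z) = -z$, hence $g'(z) = -1$. For $\ell = \ell_{\log}$, since $|\ell'_{\log}(z)| = 1/(1+e^z)$ we get $g(z) = -\log(1+e^z)$ and $g'(z) = -e^z/(1+e^z) \in (-1,0)$. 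In either case $|g'(z)| \le 1$ for all $z$, so by the mean value theorem $|g(z_1) - g(z_2)| \le |z_1 - z_2|$.

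Next I would unpack this estimate. Setting $r = \ell'(z_1)/\ell'(z_2) > 0$, we have $|\log r| = |g(z_1) - g(z_2)| \le |z_1 - z_2|$, i.e. $e^{-|z_1-z_2|} \le r \le e^{|z_1-z_2|}$, and therefore
\[
|r - 1| \le \max\bigl\{\, e^{|z_1-z_2|} - 1,\ 1 - e^{-|z_1-z_2|}\,\bigr\}.
\]
Finally, I would invoke the elementary inequality $1 - e^{-t} \le e^t - 1$ for all $t \ge 0$ (equivalently $e^t + e^{-t} \ge 2$, which is AM--GM), so the maximum above equals $e^{|z_1-z_2|} - 1$, yielding exactly \eqref{eq:loss_derivative_ratio_bound_I}.

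There is essentially no hard step here: the argument is a one-line Lipschitz bound followed by a two-line elementary inequality. The only points requiring care are that the logarithm is legitimate — which is why I emphasize that $\ell'$ has constant negative sign for both losses — and that the derivative computation for $\ell_{\log}$ does give a quantity bounded by $1$ in absolute value (in fact strictly less, but the non-strict bound suffices). An alternative, more pedestrian route avoiding logarithms would treat the cases $z_1 \le z_2$ and $z_1 > z_2$ separately and bound $|\ell'(z_1)/\ell'(z_2) - 1|$ directly using $e^{z_1}/(1+e^{z_1}) < 1$; I would keep the logarithmic argument as the main line since it handles $\ell_{\exp}$ and $\ell_{\log}$ simultaneously.
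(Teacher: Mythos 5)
Your proof is correct, but it takes a different route from the paper. The paper argues case by case: for $\ell_{\exp}$ it writes the ratio as $e^{z_2-z_1}$ and applies $|e^x-1|\le e^{|x|}-1$ directly, and for $\ell_{\log}$ it computes $\big|\frac{1+e^{z_2}}{1+e^{z_1}}-1\big| = \big|\frac{e^{z_2}-e^{z_1}}{1+e^{z_1}}\big| \le \big|\frac{e^{z_2}-e^{z_1}}{e^{z_1}}\big| \le e^{|z_2-z_1|}-1$. You instead reduce everything to the $1$-Lipschitzness of $g(z)=\log|\ell'(z)|$, which is legitimate: $\ell'$ is strictly negative for both losses so the ratio is positive, the mean value theorem gives $e^{-|z_1-z_2|}\le \ell'(z_1)/\ell'(z_2)\le e^{|z_1-z_2|}$, and $1-e^{-t}\le e^t-1$ finishes the argument. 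Your approach is more unified — it handles both losses in one stroke and in fact works for any loss satisfying $|\ell''|\le|\ell'|$ (the very property recorded in Lemma~\ref{lemma:l_propsI}), so it exposes the structural reason the bound holds. The paper's direct manipulation is slightly shorter per case and has the advantage that the same elementary rewriting extends with little change to the product-ratio bound of Lemma~\ref{lemma:l_propsIII}, whereas your logarithmic argument would need a small additional step (e.g. applying the Lipschitz bound to sums of log-derivatives) to recover that three-term estimate. Either way, no gap: your argument is complete as written.
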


\begin{proof}[Proof of Lemma~\ref{lemma:l_propsII}]
For $\ell = \ell_{\exp}$,
\begin{align*}
        \bigg|\frac{\ell_{\exp}'(z_1)}{\ell_{\exp}'(z_2)}-1\bigg| = \Big|e^{z_2-z_1}-1\Big| \leq e^{|z_2-z_1|} -1.
\end{align*}
The inequality is from $|e^x-1|\leq e^{|x|}-1$. For $\ell = \ell_{\log}$,
\begin{align*}
        \bigg|\frac{\ell_{\log}'(z_1)}{\ell_{\log}'(z_2)}-1\bigg| = \Big|\frac{1+e^{z_2}}{1+e^{z_1}}-1\Big| = \Big|\frac{e^{z_2}-e^{z_1}}{1+e^{z_1}}\Big| \leq \Big|\frac{e^{z_2}-e^{z_1}}{e^{z_1}}\Big| \leq e^{|z_2-z_1|} -1.
\end{align*}
\end{proof}

\begin{lemma}\label{lemma:l_propsIII}
    For $\ell \in \{\ell_{\exp}, \ell_{\log}\}$ and any $z_1, z_2, z_3, z_4 \in \RR$, we have     \begin{align}\label{eq:loss_derivative_ratio_bound_II}
        \bigg|\frac{\ell'(z_1)\ell'(z_3)}{\ell'(z_2)\ell'(z_4)}-1\bigg|\leq \Big(e^{|z_1-z_2|}-1\Big) + \Big(e^{|z_3-z_4|}-1\Big) + \Big(e^{|z_1+z_3-z_2-z_4|}-1\Big).
    \end{align}
\end{lemma}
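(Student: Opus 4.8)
The plan is to treat the two loss functions separately: for $\ell=\ell_{\exp}$ the statement reduces to an almost trivial identity, whereas for $\ell=\ell_{\log}$ the real content is an algebraic splitting of a difference of products into exactly three pieces whose exponents match the three summands on the right-hand side of \eqref{eq:loss_derivative_ratio_bound_II}. Throughout I would rely only on the elementary bound $|e^x-1|\le e^{|x|}-1$ (the same one invoked in Lemma~\ref{lemma:l_propsII}) and on $1+e^x\ge\max\{1,e^x\}$.

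\textbf{The exponential case.} Since $\ell_{\exp}'(z)=-e^{-z}$, the ratio collapses to $\frac{\ell'(z_1)\ell'(z_3)}{\ell'(z_2)\ell'(z_4)}=e^{(z_2+z_4)-(z_1+z_3)}$, so
\[
\Big|\tfrac{\ell'(z_1)\ell'(z_3)}{\ell'(z_2)\ell'(z_4)}-1\Big|=\big|e^{(z_2+z_4)-(z_1+z_3)}-1\big|\le e^{|z_1+z_3-z_2-z_4|}-1 .
\]
As the first two summands of \eqref{eq:loss_derivative_ratio_bound_II} are nonnegative, the claim follows.

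\textbf{The logistic case.} Here $\ell_{\log}'(z)=-\tfrac{1}{1+e^z}$, so $\frac{\ell'(z_1)\ell'(z_3)}{\ell'(z_2)\ell'(z_4)}=\frac{(1+e^{z_2})(1+e^{z_4})}{(1+e^{z_1})(1+e^{z_3})}$. The key step I would carry out is the expansion
\[
(1+e^{z_2})(1+e^{z_4})-(1+e^{z_1})(1+e^{z_3})=(e^{z_2}-e^{z_1})+(e^{z_4}-e^{z_3})+(e^{z_2+z_4}-e^{z_1+z_3}),
\]
which, after dividing by $(1+e^{z_1})(1+e^{z_3})$, writes $\frac{\ell'(z_1)\ell'(z_3)}{\ell'(z_2)\ell'(z_4)}-1$ as a sum of three fractions. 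I would bound each by discarding the appropriate factor from the denominator: using $(1+e^{z_1})(1+e^{z_3})\ge e^{z_1}$ for the first gives $|e^{z_2-z_1}-1|\le e^{|z_1-z_2|}-1$; using $(1+e^{z_1})(1+e^{z_3})\ge e^{z_3}$ for the second gives $|e^{z_4-z_3}-1|\le e^{|z_3-z_4|}-1$; and using $(1+e^{z_1})(1+e^{z_3})\ge e^{z_1+z_3}$ for the third gives $|e^{(z_2+z_4)-(z_1+z_3)}-1|\le e^{|z_1+z_3-z_2-z_4|}-1$. Summing these three estimates yields \eqref{eq:loss_derivative_ratio_bound_II}.

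The only genuine decision point — the sole ``obstacle'', such as it is — is guessing the correct three-term split of $(1+e^{z_2})(1+e^{z_4})-(1+e^{z_1})(1+e^{z_3})$ so that precisely the exponents $z_1-z_2$, $z_3-z_4$, and $z_1+z_3-z_2-z_4$ appear, and then keeping track of which of $1+e^x\ge1$ or $1+e^x\ge e^x$ to invoke in each denominator. Everything else is elementary algebra together with $|e^x-1|\le e^{|x|}-1$; no convexity, monotonicity, or other structural property of the loss is used. (One may optionally remark that the analogous two-factor identity underlies Lemma~\ref{lemma:l_propsII}, which this lemma generalizes.)
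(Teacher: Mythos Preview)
Your proposal is correct and follows essentially the same approach as the paper's proof: the exponential case is handled by the single inequality $|e^x-1|\le e^{|x|}-1$, and the logistic case is handled by exactly the three-term algebraic split $(e^{z_2}-e^{z_1})+(e^{z_4}-e^{z_3})+(e^{z_2+z_4}-e^{z_1+z_3})$ followed by bounding each piece via $(1+e^{z_1})(1+e^{z_3})\ge e^{z_1},\,e^{z_3},\,e^{z_1+z_3}$ respectively. The paper merely writes the denominator in its expanded form $1+e^{z_1}+e^{z_3}+e^{z_1+z_3}$, but otherwise the arguments coincide.
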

\begin{proof}[Proof of Lemma~\ref{lemma:l_propsIII}]
For $\ell = \ell_{\exp}$, 
    \begin{align*}
        \bigg|\frac{\ell_{\exp}'(z_1)\ell_{\exp}'(z_3)}{\ell_{\exp}'(z_2)\ell_{\exp}'(z_4)}-1\bigg| = \Big|e^{z_2+z_4-z_1-z_3}-1\Big|\leq \Big(e^{|z_1+z_3-z_2-z_4|}-1\Big).
    \end{align*}
    The inequality is from $|e^x-1|\leq e^{|x|}-1$. For $\ell = \ell_{\log}$,
    \begin{align*}
        \bigg|\frac{\ell_{\log}'(z_1)\ell_{\log}'(z_3)}{\ell_{\log}'(z_2)\ell_{\log}'(z_4)}-1\bigg| &= \bigg|\frac{(1+e^{z_2})(1+e^{z_4})}{(1+e^{z_1})(1+e^{z_3})}-1\bigg|=\bigg|\frac{e^{z_2}+e^{z_4}+e^{z_2+z_4} -e^{z_1}-e^{z_3}-e^{z_1+z_3}}{1+e^{z_1}+e^{z_3}+e^{z_1+z_3}}\bigg|\\
        &\leq \bigg|\frac{e^{z_2}- e^{z_1}}{e^{z_1}}\bigg|+\bigg|\frac{e^{z_4}- e^{z_3}}{e^{z_3}}\bigg|+\bigg|\frac{e^{z_2+z_4}- e^{z_1+z_3}}{e^{z_1+z_3}}\bigg|\\
        &\leq  \Big(e^{|z_1-z_2|}-1\Big) + \Big(e^{|z_3-z_4|}-1\Big) + \Big(e^{|z_1+z_3-z_2-z_4|}-1\Big).
    \end{align*}
\end{proof}
\begin{lemma}\label{lemma:ratio_log}
    For $\ell=\ell_{\log}$, and any $\wb\in \RR^d$, we have 
    \begin{align*}
        \frac{\cG(\wb_t)}{\cR(\wb)}\geq 1-\frac{n\cR(\wb)}{2}
    \end{align*}
\end{lemma}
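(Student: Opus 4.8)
The plan is to reduce the stated inequality to a \emph{pointwise} estimate on the logistic loss and then sum. First I would rewrite both sides in terms of $u_i := e^{-\la\wb,\zb_i\ra}>0$. Since $\ell_{\log}(z)=\log(1+e^{-z})$ and $|\ell_{\log}'(z)| = e^{-z}/(1+e^{-z})$, we have $n\cR(\wb) = \sum_{i=1}^n\log(1+u_i)$ and $n\cG(\wb) = \sum_{i=1}^n u_i/(1+u_i)$. Because $\cR(\wb)>0$, the claim $\cG(\wb)/\cR(\wb)\ge 1-n\cR(\wb)/2$ is equivalent, after multiplying through by $n\cR(\wb)$, to
\begin{align*}
    \sum_{i=1}^n\Big[\log(1+u_i) - \frac{u_i}{1+u_i}\Big]\ \le\ \frac12\Big(\sum_{i=1}^n\log(1+u_i)\Big)^2.
\end{align*}

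Next I would prove the pointwise bound $\log(1+u) - \frac{u}{1+u}\le \frac12\big(\log(1+u)\big)^2$ for all $u\ge 0$. Setting $v := \log(1+u)\ge 0$ (so $u = e^v-1$), one has $\frac{u}{1+u} = 1-e^{-v}$, hence $\log(1+u) - \frac{u}{1+u} = v - 1 + e^{-v}$. The elementary inequality $e^{-v}\le 1 - v + v^2/2$ for $v\ge 0$ (which follows since its difference vanishes to second order at $v=0$ and has nonnegative second derivative on $[0,\infty)$) gives $v-1+e^{-v}\le v^2/2 = \frac12\big(\log(1+u)\big)^2$, as desired.

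Finally I would sum this bound over $i=1,\dots,n$ and use $\sum_i a_i^2\le\big(\sum_i a_i\big)^2$ for the nonnegative numbers $a_i := \log(1+u_i)\ge 0$ (the cross terms are nonnegative), which yields
\begin{align*}
    \sum_{i=1}^n\Big[\log(1+u_i)-\frac{u_i}{1+u_i}\Big]\le \frac12\sum_{i=1}^n\big(\log(1+u_i)\big)^2 \le \frac12\Big(\sum_{i=1}^n\log(1+u_i)\Big)^2,
\end{align*}
which is exactly the equivalent form derived above; dividing back by $n\cR(\wb)$ completes the proof. There is no real obstacle here: the only point worth highlighting is the observation that the superlinear slack $\log(1+u)-u/(1+u)$ is controlled by the \emph{square} of $\log(1+u)$ rather than by $\log(1+u)$ itself, after which the passage from $\sum a_i^2$ to $(\sum a_i)^2$ via nonnegativity is immediate.
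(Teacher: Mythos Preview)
Your proof is correct but takes a different route from the paper. The paper's argument writes $r_i := \ell_{\log}(\la\wb,\zb_i\ra)$ and observes that $|\ell_{\log}'(\la\wb,\zb_i\ra)| = f(r_i)$ for the concave function $f(z)=1-e^{-z}$; it then fixes $\sum_i r_i = n\cR(\wb)$ and minimizes $\frac{1}{n}\sum_i f(r_i)$ over $r_i\ge 0$, using concavity to conclude that the minimum is attained at a vertex (all mass on a single coordinate), yielding the intermediate bound $\cG(\wb)/\cR(\wb)\ge (1-e^{-n\cR(\wb)})/(n\cR(\wb))$, which is then reduced to $1-n\cR(\wb)/2$ by Taylor expansion. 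Your argument instead proves the pointwise estimate $\log(1+u)-\frac{u}{1+u}\le \tfrac12(\log(1+u))^2$ (equivalently $e^{-v}\le 1-v+v^2/2$ with $v=\log(1+u)$) and then passes from $\sum_i v_i^2$ to $(\sum_i v_i)^2$ via nonnegativity. Both arguments ultimately rely on the same scalar inequality $e^{-v}\le 1-v+v^2/2$ for $v\ge 0$, but the paper applies it once with $v=n\cR(\wb)$ after identifying the extremal configuration, whereas you apply it coordinatewise and then sum. Your approach is more elementary and self-contained; the paper's approach has the advantage of exposing the worst-case structure (loss concentrated on one sample) and produces the slightly sharper intermediate bound $(1-e^{-n\cR(\wb)})/(n\cR(\wb))$ before Taylor.
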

\begin{proof}[Proof of Lemma~\ref{lemma:ratio_log}]
    Let $r_i =\ell_{\log}(\la\wb, \zb_i\ra) = \log (1+e^{-\la\wb, \zb_i\ra})$ and $f(z) = 1-e^{-z}$, then $|\ell'_{\log}(\la\wb, \zb_i\ra)| = \frac{e^{-\la\wb, \zb_i\ra}}{1+e^{-\la\wb, \zb_i\ra}} = \frac{e^{r_i}-1}{e^{r_i}} = f(z_i)$. Therefore for any given $\cR(\wb)$, finding $\min \cG(\wb_t)$
    equals to the following optimization problem,
    \begin{align*}
        \min\frac{1}{n}\sum_{i=1}^n f(r_i)\quad   \text{s.t.}\quad \sum_{i=1}^n r_i = n\cR(\wb), \ r_i\geq 0 \ \text{for all} \ i\in [n].
    \end{align*}
    Since $f(z)$ is an increasing function and the increasing rate would be slow as $z$ increase since $f''(z)<0$, we can easily derive that the aforementioned optimization problem will take the minimum at $r_i = n\cR(\wb)$ for some $i\in [n]$ and $r_j=0$ for all $j\neq i$. Therefore, we can derive that,
    \begin{align*}
         \frac{\cG(\wb_t)}{\cR(\wb)}\geq \frac{1-e^{-n\cR(\wb)}}{n\cR(\wb)}\geq 1-\frac{n\cR(\wb)}{2}
    \end{align*}
    by Taylor's expansion.
\end{proof}

\subsection{Auxiliary Results}
The following result is the classic Stolz–Cesàro theorem. 
\begin{theorem}[Stolz–Cesàro theorem]\label{thm:stolze_thm}
    Let $\{a_n\}_{n\geq 1}$, $\{b_n\}_{n\geq 1}$ be two sequences of real numbers.  Assume that $\{b_n\}_{n\geq 1}$ is a strictly monotone and divergent sequence (i.e. strictly increasing and approaching 
$+\infty$, or strictly decreasing and approaching $-\infty$) and the following limit exists:
\begin{align*}
    \lim_{n\to\infty}\frac{a_{n+1}-a_n}{b_{n+1}-b_n} = l.
\end{align*}
Then it holds that
\begin{align*}
    \lim_{n\to\infty}\frac{a_n}{b_n} = l.
\end{align*}
\end{theorem}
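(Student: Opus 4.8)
The plan is to use the standard telescoping argument. First I would reduce to the case where $\{b_n\}$ is strictly increasing and $b_n \to +\infty$: if instead $\{b_n\}$ is strictly decreasing to $-\infty$, replacing $a_n$ by $-a_n$ and $b_n$ by $-b_n$ leaves both the incremental ratio $(a_{n+1}-a_n)/(b_{n+1}-b_n)$ and the ratio $a_n/b_n$ unchanged, so the conclusion transfers. Hence from now on I assume $b_{n+1} > b_n$ for all $n$ and $b_n \to +\infty$; in particular $b_m > 0$ for all large $m$.

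Next I would treat the case of finite $l$. Fix $\epsilon > 0$ and choose $N$ so that $\big|(a_{n+1}-a_n)/(b_{n+1}-b_n) - l\big| < \epsilon$ for all $n \ge N$. Since $b_{n+1} - b_n > 0$, this is equivalent to $(l-\epsilon)(b_{n+1}-b_n) < a_{n+1} - a_n < (l+\epsilon)(b_{n+1}-b_n)$. Summing over $n = N, \dots, m-1$ telescopes to $(l-\epsilon)(b_m - b_N) < a_m - a_N < (l+\epsilon)(b_m - b_N)$. Dividing by $b_m > 0$ and writing $a_m/b_m = a_N/b_m + (a_m - a_N)/b_m$, I would send $m \to \infty$: because $b_m \to +\infty$, the terms $a_N/b_m$ and $b_N/b_m$ vanish and $(b_m - b_N)/b_m \to 1$, yielding $l - \epsilon \le \liminf_{m} a_m/b_m \le \limsup_{m} a_m/b_m \le l + \epsilon$. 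Letting $\epsilon \downarrow 0$ gives $\lim_{m} a_m/b_m = l$.

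For the case $l = +\infty$ (the case $l = -\infty$ following again by the sign flip $a_n \mapsto -a_n$), I would argue analogously: for every $M > 0$ there is $N$ with $a_{n+1} - a_n > M(b_{n+1}-b_n)$ for $n \ge N$, so summation gives $a_m - a_N > M(b_m - b_N)$, hence $a_m/b_m > a_N/b_m + M(b_m - b_N)/b_m \to M$, so $\liminf_{m} a_m/b_m \ge M$; since $M$ is arbitrary, $a_m/b_m \to +\infty$.

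I do not expect a serious obstacle, as this is routine real analysis. The only points requiring a little care are (i) the reduction to $\{b_n\}$ increasing and divergent to $+\infty$, which is precisely what lets the incremental inequalities be multiplied through without a sign change and forces the fixed remainder terms $a_N/b_m$, $b_N/b_m$ to be negligible in the limit, and (ii) phrasing the conclusion through $\limsup$ and $\liminf$ so that $\epsilon$ (respectively $M$) can be dispatched only at the very end. These are bookkeeping matters rather than genuine difficulties.
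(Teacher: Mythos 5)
Your proof is correct: the reduction to the increasing case, the telescoping of the two-sided incremental inequality, and the passage to the limit via $\liminf$/$\limsup$ (with the extra handling of infinite $l$) are all sound, and this is the standard textbook argument. The paper states the Stolz–Cesàro theorem only as a classical auxiliary result without giving a proof, so your argument supplies exactly the routine justification the paper implicitly relies on.
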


\bibliography{reference}

\begin{thebibliography}{52}
\expandafter\ifx\csname natexlab\endcsname\relax\def\natexlab#1{#1}\fi
\expandafter\ifx\csname url\endcsname\relax
  \def\url#1{\texttt{#1}}\fi
\expandafter\ifx\csname urlprefix\endcsname\relax\def\urlprefix{URL }\fi

\bibitem[{Ali et~al.(2020)Ali, Dobriban and Tibshirani}]{ali2020implicit}
\textsc{Ali, A.}, \textsc{Dobriban, E.} and \textsc{Tibshirani, R.} (2020).
\newblock The implicit regularization of stochastic gradient flow for least squares.
\newblock In \textit{8th International Conference on Learning Representations, {ICLR} 2020}. PMLR.

\bibitem[{Arora et~al.(2019)Arora, Cohen, Hu and Luo}]{arora2019implicit}
\textsc{Arora, S.}, \textsc{Cohen, N.}, \textsc{Hu, W.} and \textsc{Luo, Y.} (2019).
\newblock Implicit regularization in deep matrix factorization.
\newblock \textit{Advances in Neural Information Processing Systems} \textbf{32}.

\bibitem[{Balles and Hennig(2018)}]{DBLP:conf/icml/BallesH18}
\textsc{Balles, L.} and \textsc{Hennig, P.} (2018).
\newblock Dissecting adam: The sign, magnitude and variance of stochastic gradients.
\newblock In \textit{Proceedings of the 35th International Conference on Machine Learning, {ICML} 2018} (J.~G. Dy and A.~Krause, eds.), vol.~80 of \textit{Proceedings of Machine Learning Research}. {PMLR}.

\bibitem[{Balles et~al.(2020)Balles, Pedregosa and Roux}]{DBLP:journals/corr/abs-2002-08056}
\textsc{Balles, L.}, \textsc{Pedregosa, F.} and \textsc{Roux, N.~L.} (2020).
\newblock The geometry of sign gradient descent.
\newblock \textit{CoRR} \textbf{abs/2002.08056}.

\bibitem[{Bernstein et~al.(2018)Bernstein, Wang, Azizzadenesheli and Anandkumar}]{DBLP:conf/icml/BernsteinWAA18}
\textsc{Bernstein, J.}, \textsc{Wang, Y.}, \textsc{Azizzadenesheli, K.} and \textsc{Anandkumar, A.} (2018).
\newblock {SIGNSGD:} compressed optimisation for non-convex problems.
\newblock In \textit{Proceedings of the 35th International Conference on Machine Learning, {ICML} 2018}, vol.~80 of \textit{Proceedings of Machine Learning Research}. {PMLR}.

\bibitem[{Bernstein et~al.(2019)Bernstein, Zhao, Azizzadenesheli and Anandkumar}]{DBLP:conf/iclr/BernsteinZAA19}
\textsc{Bernstein, J.}, \textsc{Zhao, J.}, \textsc{Azizzadenesheli, K.} and \textsc{Anandkumar, A.} (2019).
\newblock signsgd with majority vote is communication efficient and fault tolerant.
\newblock In \textit{7th International Conference on Learning Representations, {ICLR} 2019}. OpenReview.net.

\bibitem[{Cao et~al.(2023)Cao, Zou, Li and Gu}]{cao2023implicit}
\textsc{Cao, Y.}, \textsc{Zou, D.}, \textsc{Li, Y.} and \textsc{Gu, Q.} (2023).
\newblock The implicit bias of batch normalization in linear models and two-layer linear convolutional neural networks.
\newblock In \textit{The Thirty Sixth Annual Conference on Learning Theory}. PMLR.

\bibitem[{Chen et~al.(2023)Chen, Liu, Liang et~al.}]{chen2023lion}
\textsc{Chen, L.}, \textsc{Liu, B.}, \textsc{Liang, K.} \textsc{et~al.} (2023).
\newblock Lion secretly solves a constrained optimization: As lyapunov predicts.
\newblock In \textit{The Twelfth International Conference on Learning Representations}.

\bibitem[{Chen et~al.(2024)Chen, Liang, Huang, Real, Wang, Pham, Dong, Luong, Hsieh, Lu et~al.}]{chen2024symbolic}
\textsc{Chen, X.}, \textsc{Liang, C.}, \textsc{Huang, D.}, \textsc{Real, E.}, \textsc{Wang, K.}, \textsc{Pham, H.}, \textsc{Dong, X.}, \textsc{Luong, T.}, \textsc{Hsieh, C.-J.}, \textsc{Lu, Y.} \textsc{et~al.} (2024).
\newblock Symbolic discovery of optimization algorithms.
\newblock \textit{Advances in Neural Information Processing Systems} \textbf{36}.

\bibitem[{Chen et~al.(2019)Chen, Liu, Sun and Hong}]{chen2019convergence}
\textsc{Chen, X.}, \textsc{Liu, S.}, \textsc{Sun, R.} and \textsc{Hong, M.} (2019).
\newblock On the convergence of a class of adam-type algorithms for non-convex optimization.
\newblock In \textit{7th International Conference on Learning Representations, ICLR 2019}.

\bibitem[{Chizat and Bach(2020)}]{chizat2020implicit}
\textsc{Chizat, L.} and \textsc{Bach, F.} (2020).
\newblock Implicit bias of gradient descent for wide two-layer neural networks trained with the logistic loss.
\newblock In \textit{Conference on learning theory}. PMLR.

\bibitem[{De et~al.(2018)De, Mukherjee and Ullah}]{de2018convergence}
\textsc{De, S.}, \textsc{Mukherjee, A.} and \textsc{Ullah, E.} (2018).
\newblock Convergence guarantees for rmsprop and adam in non-convex optimization and an empirical comparison to nesterov acceleration.
\newblock \textit{arXiv preprint arXiv:1807.06766} .

\bibitem[{Frei et~al.(2022)Frei, Vardi, Bartlett, Srebro and Hu}]{frei2022implicit}
\textsc{Frei, S.}, \textsc{Vardi, G.}, \textsc{Bartlett, P.}, \textsc{Srebro, N.} and \textsc{Hu, W.} (2022).
\newblock Implicit bias in leaky relu networks trained on high-dimensional data.
\newblock In \textit{The Eleventh International Conference on Learning Representations}.

\bibitem[{Gunasekar et~al.(2018{\natexlab{a}})Gunasekar, Lee, Soudry and Srebro}]{pmlr-v80-gunasekar18a}
\textsc{Gunasekar, S.}, \textsc{Lee, J.}, \textsc{Soudry, D.} and \textsc{Srebro, N.} (2018{\natexlab{a}}).
\newblock Characterizing implicit bias in terms of optimization geometry.
\newblock In \textit{Proceedings of the 35th International Conference on Machine Learning}, vol.~80 of \textit{Proceedings of Machine Learning Research}. PMLR.

\bibitem[{Gunasekar et~al.(2018{\natexlab{b}})Gunasekar, Lee, Soudry and Srebro}]{gunasekar2018implicit}
\textsc{Gunasekar, S.}, \textsc{Lee, J.~D.}, \textsc{Soudry, D.} and \textsc{Srebro, N.} (2018{\natexlab{b}}).
\newblock Implicit bias of gradient descent on linear convolutional networks.
\newblock \textit{Advances in neural information processing systems} \textbf{31}.

\bibitem[{Gunasekar et~al.(2017)Gunasekar, Woodworth, Bhojanapalli, Neyshabur and Srebro}]{gunasekar2017implicit}
\textsc{Gunasekar, S.}, \textsc{Woodworth, B.~E.}, \textsc{Bhojanapalli, S.}, \textsc{Neyshabur, B.} and \textsc{Srebro, N.} (2017).
\newblock Implicit regularization in matrix factorization.
\newblock In \textit{Advances in Neural Information Processing Systems}.

\bibitem[{Guo et~al.(2021)Guo, Xu, Yin, Jin and Yang}]{guo2021novel}
\textsc{Guo, Z.}, \textsc{Xu, Y.}, \textsc{Yin, W.}, \textsc{Jin, R.} and \textsc{Yang, T.} (2021).
\newblock A novel convergence analysis for algorithms of the adam family.
\newblock \textit{arXiv preprint arXiv:2112.03459} .

\bibitem[{Huang et~al.(2021)Huang, Li and Huang}]{huang2021super}
\textsc{Huang, F.}, \textsc{Li, J.} and \textsc{Huang, H.} (2021).
\newblock Super-adam: faster and universal framework of adaptive gradients.
\newblock \textit{Advances in Neural Information Processing Systems} \textbf{34} 9074--9085.

\bibitem[{Ji and Telgarsky(2019{\natexlab{a}})}]{ji2019gradient}
\textsc{Ji, Z.} and \textsc{Telgarsky, M.} (2019{\natexlab{a}}).
\newblock Gradient descent aligns the layers of deep linear networks.
\newblock In \textit{7th International Conference on Learning Representations, ICLR 2019}.

\bibitem[{Ji and Telgarsky(2019{\natexlab{b}})}]{pmlr-v99-ji19a}
\textsc{Ji, Z.} and \textsc{Telgarsky, M.} (2019{\natexlab{b}}).
\newblock The implicit bias of gradient descent on nonseparable data.
\newblock In \textit{Proceedings of the Thirty-Second Conference on Learning Theory}, vol.~99 of \textit{Proceedings of Machine Learning Research}. PMLR.

\bibitem[{Ji and Telgarsky(2020)}]{ji2020directional}
\textsc{Ji, Z.} and \textsc{Telgarsky, M.} (2020).
\newblock Directional convergence and alignment in deep learning.
\newblock \textit{Advances in Neural Information Processing Systems} \textbf{33} 17176--17186.

\bibitem[{Ji and Telgarsky(2021)}]{DBLP:conf/alt/JiT21}
\textsc{Ji, Z.} and \textsc{Telgarsky, M.} (2021).
\newblock Characterizing the implicit bias via a primal-dual analysis.
\newblock In \textit{Algorithmic Learning Theory, 16-19 March 2021, Virtual Conference, Worldwide}, vol. 132 of \textit{Proceedings of Machine Learning Research}. {PMLR}.

\bibitem[{Jin and Mont{\'u}far(2023)}]{jin2023implicit}
\textsc{Jin, H.} and \textsc{Mont{\'u}far, G.} (2023).
\newblock Implicit bias of gradient descent for mean squared error regression with two-layer wide neural networks.
\newblock \textit{Journal of Machine Learning Research} \textbf{24} 1--97.

\bibitem[{Kingma and Ba(2015)}]{DBLP:journals/corr/KingmaB14}
\textsc{Kingma, D.~P.} and \textsc{Ba, J.} (2015).
\newblock Adam: {A} method for stochastic optimization.
\newblock In \textit{3rd International Conference on Learning Representations, {ICLR} 2015}.

\bibitem[{Kou et~al.(2024)Kou, Chen and Gu}]{kou2024implicit}
\textsc{Kou, Y.}, \textsc{Chen, Z.} and \textsc{Gu, Q.} (2024).
\newblock Implicit bias of gradient descent for two-layer relu and leaky relu networks on nearly-orthogonal data.
\newblock \textit{Advances in Neural Information Processing Systems} \textbf{36}.

\bibitem[{Kunstner et~al.(2023)Kunstner, Chen, Lavington and Schmidt}]{DBLP:conf/iclr/KunstnerCL023}
\textsc{Kunstner, F.}, \textsc{Chen, J.}, \textsc{Lavington, J.~W.} and \textsc{Schmidt, M.} (2023).
\newblock Noise is not the main factor behind the gap between sgd and adam on transformers, but sign descent might be.
\newblock In \textit{The Eleventh International Conference on Learning Representations, {ICLR} 2023}. OpenReview.net.

\bibitem[{Li et~al.(2018)Li, Ma and Zhang}]{li2018algorithmic}
\textsc{Li, Y.}, \textsc{Ma, T.} and \textsc{Zhang, H.} (2018).
\newblock Algorithmic regularization in over-parameterized matrix sensing and neural networks with quadratic activations.
\newblock In \textit{Conference On Learning Theory}.

\bibitem[{Liu et~al.(2020)Liu, Zhang, Orabona and Yang}]{liu2020adam}
\textsc{Liu, M.}, \textsc{Zhang, W.}, \textsc{Orabona, F.} and \textsc{Yang, T.} (2020).
\newblock Adam+: A stochastic method with adaptive variance reduction.
\newblock \textit{arXiv preprint arXiv:2011.11985} .

\bibitem[{Lyu and Li(2019)}]{lyu2019gradient}
\textsc{Lyu, K.} and \textsc{Li, J.} (2019).
\newblock Gradient descent maximizes the margin of homogeneous neural networks.
\newblock In \textit{7th International Conference on Learning Representations, {ICLR} 2019}.

\bibitem[{Nacson et~al.(2019)Nacson, Srebro and Soudry}]{DBLP:conf/aistats/NacsonSS19}
\textsc{Nacson, M.~S.}, \textsc{Srebro, N.} and \textsc{Soudry, D.} (2019).
\newblock Stochastic gradient descent on separable data: Exact convergence with a fixed learning rate.
\newblock In \textit{The 22nd International Conference on Artificial Intelligence and Statistics, {AISTATS} 2019}, vol.~89 of \textit{Proceedings of Machine Learning Research}. {PMLR}.

\bibitem[{Neyshabur et~al.(2014)Neyshabur, Tomioka and Srebro}]{neyshabur2014search}
\textsc{Neyshabur, B.}, \textsc{Tomioka, R.} and \textsc{Srebro, N.} (2014).
\newblock In search of the real inductive bias: On the role of implicit regularization in deep learning.
\newblock \textit{arXiv preprint arXiv:1412.6614} .

\bibitem[{Pan and Li(2022)}]{pan2022toward}
\textsc{Pan, Y.} and \textsc{Li, Y.} (2022).
\newblock Toward understanding why adam converges faster than sgd for transformers.
\newblock In \textit{OPT 2022: Optimization for Machine Learning (NeurIPS 2022 Workshop)}.

\bibitem[{Phuong and Lampert(2020)}]{phuong2020inductive}
\textsc{Phuong, M.} and \textsc{Lampert, C.~H.} (2020).
\newblock The inductive bias of relu networks on orthogonally separable data.
\newblock In \textit{8th International Conference on Learning Representations, {ICLR} 2020}.

\bibitem[{Qian and Qian(2019)}]{DBLP:conf/nips/QianQ19}
\textsc{Qian, Q.} and \textsc{Qian, X.} (2019).
\newblock The implicit bias of adagrad on separable data.
\newblock In \textit{Advances in Neural Information Processing Systems 32: Annual Conference on Neural Information Processing Systems 2019, NeurIPS 2019}.

\bibitem[{Razin and Cohen(2020)}]{razin2020implicit}
\textsc{Razin, N.} and \textsc{Cohen, N.} (2020).
\newblock Implicit regularization in deep learning may not be explainable by norms.
\newblock \textit{Advances in neural information processing systems} \textbf{33} 21174--21187.

\bibitem[{Reddi et~al.(2018)Reddi, Kale and Kumar}]{reddi2018convergence}
\textsc{Reddi, S.~J.}, \textsc{Kale, S.} and \textsc{Kumar, S.} (2018).
\newblock On the convergence of adam and beyond.
\newblock In \textit{6th International Conference on Learning Representations, {ICLR} 2018}.

\bibitem[{Rosasco and Villa(2015)}]{rosasco2015learning}
\textsc{Rosasco, L.} and \textsc{Villa, S.} (2015).
\newblock Learning with incremental iterative regularization.
\newblock \textit{Advances in neural information processing systems} \textbf{28}.

\bibitem[{Shalev{-}Shwartz and Singer(2010)}]{DBLP:journals/ml/Shalev-ShwartzS10}
\textsc{Shalev{-}Shwartz, S.} and \textsc{Singer, Y.} (2010).
\newblock On the equivalence of weak learnability and linear separability: new relaxations and efficient boosting algorithms.
\newblock \textit{Mach. Learn.} \textbf{80} 141--163.

\bibitem[{Soudry et~al.(2018)Soudry, Hoffer, Nacson, Gunasekar and Srebro}]{DBLP:journals/jmlr/SoudryHNGS18}
\textsc{Soudry, D.}, \textsc{Hoffer, E.}, \textsc{Nacson, M.~S.}, \textsc{Gunasekar, S.} and \textsc{Srebro, N.} (2018).
\newblock The implicit bias of gradient descent on separable data.
\newblock \textit{J. Mach. Learn. Res.} \textbf{19} 70:1--70:57.

\bibitem[{Telgarsky(2013)}]{DBLP:conf/icml/Telgarsky13}
\textsc{Telgarsky, M.} (2013).
\newblock Margins, shrinkage, and boosting.
\newblock In \textit{Proceedings of the 30th International Conference on Machine Learning, {ICML} 2013}, vol.~28 of \textit{{JMLR} Workshop and Conference Proceedings}. JMLR.org.

\bibitem[{Vardi et~al.(2022{\natexlab{a}})Vardi, Shamir and Srebro}]{vardi2022margin}
\textsc{Vardi, G.}, \textsc{Shamir, O.} and \textsc{Srebro, N.} (2022{\natexlab{a}}).
\newblock On margin maximization in linear and relu networks.
\newblock \textit{Advances in Neural Information Processing Systems} \textbf{35} 37024--37036.

\bibitem[{Vardi et~al.(2022{\natexlab{b}})Vardi, Yehudai and Shamir}]{vardi2022gradient}
\textsc{Vardi, G.}, \textsc{Yehudai, G.} and \textsc{Shamir, O.} (2022{\natexlab{b}}).
\newblock Gradient methods provably converge to non-robust networks.
\newblock \textit{Advances in Neural Information Processing Systems} \textbf{35} 20921--20932.

\bibitem[{Wang et~al.(2021)Wang, Meng, Chen and Liu}]{wang2021implicit}
\textsc{Wang, B.}, \textsc{Meng, Q.}, \textsc{Chen, W.} and \textsc{Liu, T.-Y.} (2021).
\newblock The implicit bias for adaptive optimization algorithms on homogeneous neural networks.
\newblock In \textit{International Conference on Machine Learning}. PMLR.

\bibitem[{Wang et~al.(2022)Wang, Meng, Zhang, Sun, Chen, Ma and Liu}]{NEURIPS2022_ab3f6bbe}
\textsc{Wang, B.}, \textsc{Meng, Q.}, \textsc{Zhang, H.}, \textsc{Sun, R.}, \textsc{Chen, W.}, \textsc{Ma, Z.-M.} and \textsc{Liu, T.-Y.} (2022).
\newblock Does momentum change the implicit regularization on separable data?
\newblock In \textit{Advances in Neural Information Processing Systems}, vol.~35. Curran Associates, Inc.

\bibitem[{Wilson et~al.(2017)Wilson, Roelofs, Stern, Srebro and Recht}]{wilson2017marginal}
\textsc{Wilson, A.~C.}, \textsc{Roelofs, R.}, \textsc{Stern, M.}, \textsc{Srebro, N.} and \textsc{Recht, B.} (2017).
\newblock The marginal value of adaptive gradient methods in machine learning.
\newblock \textit{Advances in neural information processing systems} \textbf{30}.

\bibitem[{Wu et~al.(2023)Wu, Braverman and Lee}]{DBLP:conf/nips/WuBL23}
\textsc{Wu, J.}, \textsc{Braverman, V.} and \textsc{Lee, J.~D.} (2023).
\newblock Implicit bias of gradient descent for logistic regression at the edge of stability.
\newblock In \textit{Advances in Neural Information Processing Systems 36: Annual Conference on Neural Information Processing Systems 2023, NeurIPS 2023}.

\bibitem[{Wu et~al.(2020)Wu, Dobriban, Ren, Wu, Li, Gunasekar, Ward and Liu}]{wu2020implicit}
\textsc{Wu, X.}, \textsc{Dobriban, E.}, \textsc{Ren, T.}, \textsc{Wu, S.}, \textsc{Li, Z.}, \textsc{Gunasekar, S.}, \textsc{Ward, R.} and \textsc{Liu, Q.} (2020).
\newblock Implicit regularization and convergence for weight normalization.
\newblock \textit{Advances in Neural Information Processing Systems} \textbf{33} 2835--2847.

\bibitem[{Xie and Li(2024)}]{xie2024implicit}
\textsc{Xie, S.} and \textsc{Li, Z.} (2024).
\newblock Implicit bias of adamw: $\ell_\infty$ norm constrained optimization.

\bibitem[{Zhang et~al.(2020)Zhang, Karimireddy, Veit, Kim, Reddi, Kumar and Sra}]{zhang2020adaptive}
\textsc{Zhang, J.}, \textsc{Karimireddy, S.~P.}, \textsc{Veit, A.}, \textsc{Kim, S.}, \textsc{Reddi, S.}, \textsc{Kumar, S.} and \textsc{Sra, S.} (2020).
\newblock Why are adaptive methods good for attention models?
\newblock \textit{Advances in Neural Information Processing Systems} \textbf{33} 15383--15393.

\bibitem[{Zhou et~al.(2024)Zhou, Chen, Cao, Tang, Yang and Gu}]{zhou2018convergence}
\textsc{Zhou, D.}, \textsc{Chen, J.}, \textsc{Cao, Y.}, \textsc{Tang, Y.}, \textsc{Yang, Z.} and \textsc{Gu, Q.} (2024).
\newblock On the convergence of adaptive gradient methods for nonconvex optimization.
\newblock \textit{Transactions on Machine Learning Research} .

\bibitem[{Zhou et~al.(2020)Zhou, Feng, Ma, Xiong, Hoi et~al.}]{zhou2020towards}
\textsc{Zhou, P.}, \textsc{Feng, J.}, \textsc{Ma, C.}, \textsc{Xiong, C.}, \textsc{Hoi, S. C.~H.} \textsc{et~al.} (2020).
\newblock Towards theoretically understanding why sgd generalizes better than adam in deep learning.
\newblock \textit{Advances in Neural Information Processing Systems} \textbf{33} 21285--21296.

\bibitem[{Zou et~al.(2023)Zou, Cao, Li and Gu}]{DBLP:conf/iclr/Zou0LG23}
\textsc{Zou, D.}, \textsc{Cao, Y.}, \textsc{Li, Y.} and \textsc{Gu, Q.} (2023).
\newblock Understanding the generalization of adam in learning neural networks with proper regularization.
\newblock In \textit{The Eleventh International Conference on Learning Representations, {ICLR} 2023}. OpenReview.net.

\end{thebibliography}
\bibliographystyle{ims}

\end{document}